\newcolumntype{C}[1]{>{\Centering}m{#1}}
\newtheoremstyle{definition}
{3pt} 
{3pt} 
{} 
{} 
{\bfseries} 
{.} 
{.5em} 
{} 
\theoremstyle{definition}
\newtheorem{theorem}{Theorem}[section]
\newtheorem{assumption}[theorem]{Assumption}
\begin{document}
\title{Biased Gradient Estimate with Drastic Variance Reduction \\ for Meta Reinforcement Learning}
\author{Yunhao Tang\footnote{yt2541@columbia.edu}\\ DeepMind}
 \date{}
 
\maketitle
\begin{abstract}
Despite the empirical success of meta reinforcement learning (meta-RL), there are still a number poorly-understood discrepancies between theory and practice. Critically, biased gradient estimates are almost always implemented in practice, whereas prior theory on meta-RL only establishes convergence under unbiased gradient estimates. In this work, we investigate such a discrepancy. In particular, (1) We show that unbiased gradient estimates have variance $\Theta(N)$ which linearly depends on the sample size $N$ of the inner loop updates; (2) We propose linearized score function (LSF) gradient estimates, which have bias $\mathcal{O}(1/\sqrt{N})$ and variance $\mathcal{O}(1/N)$; (3) We show that most empirical prior work in fact implements variants of the LSF gradient estimates. This implies that practical algorithms "accidentally" introduce bias to achieve better performance; (4) We establish theoretical guarantees for the LSF gradient estimates in meta-RL regarding its convergence to stationary points, showing better dependency on $N$ than prior work when $N$ is large.
\end{abstract}

\section{Introduction}

By design, many reinforcement learning (RL) algorithms learn from scratch. This entails RL to achieve high profile success in a number of important and challenging applications \citep{mnih2013,silver2016,levine2016}. However, at the same time, RL is highly inefficient compared to how humans learn, usually consuming orders of magnitude more samples to acquire skills at the same level as humans.  One potential source of such inefficiencies is that unlike humans, RL algorithms do not exploit prior knowledge on the tasks at hand. 

To resolve such an issue, meta-reinforcement learning (meta-RL) formalizes the learning and transfer of prior knowledge in RL~\citep{duan2016rl,wang2016learning,finn2017model}. On a high level, an agent interacts with a distribution of tasks at \emph{meta-training} time. The objective is that after meta-training, the agent can learn significantly faster when faced with unseen tasks at \emph{meta-testing} time. If an agent achieves good performance at meta-testing time, it  embodies the ability to transfer knowledge from prior experiences during meta-training. There are many concrete formulations of meta-RL (see, e.g. \citep{wang2016,duan2016rl,houthooft2018evolved,rakelly2019efficient,zintgraf2019varibad,fakoor2019meta,ortega2019meta,oh2020discovering,xu2020meta}), Our focus is meta-RL through gradient-based adaptations 
\citep{finn2017model}, where the agent carries out policy gradient (PG) inner loop updates \citep{sutton1999} at both meta-training and meta-testing time. 

\paragraph{Motivation.}
Our work is motivated by a number of important discrepancies between meta-RL theory and practice. Recently, there is a growing interest in establishing performance guarantees for meta-RL algorithms with unbiased gradient estimates \citep{fallah2020provably}. However, since the inception of the field, meta-RL practitioners have almost always implemented biased gradient estimates \citep{finn2017model,al2017continuous,rothfuss2018promp,liu2019taming,tang2021unifying}. 
It is natural to ask: why are unbiased gradient estimates potentially undesirable in practice, and what do we gain by introducing bias into gradient estimates?

\paragraph{Our focus.} We focus on the \emph{N-sample meta-RL objective} where the inner loop updates are $N$-sample PG estimates. In prior work, this was called the E-MAML objective \citep{al2017continuous,rothfuss2018promp,fallah2020provably}, as opposed to the MAML objective \citep{finn2017model} where the inner loop update is exact PG. This objective is of practical interest, because at meta-testing time, inner loop updates can only be implemented with $N$-sample PG estimates. See Sec~\ref{sec:background} for details.

\paragraph{Summary of this work.} We make a number of developments to bridge meta-RL theory and practice.
\begin{itemize}[topsep=0pt,parsep=0pt,partopsep=0pt,leftmargin=*]
	\item \textbf{High variance of unbiased estimates.} By formulating the meta-RL problem as optimizing a generic $N$-sample additive Monte-Carlo objective, we show that the unbiased gradient estimates have variance on the order of $\Theta(N)$, rendering the estimates useless when $N$ is large (see Sec~\ref{sec:mc}).
	\item \textbf{Novel derivation of biased estimates.} We propose the linearized score function (LSF) gradient estimate for the $N$-sample additive Monte-Carlo objective, which has variance  $\mathcal{O}(1/N)$ and bias  $\mathcal{O}(1/\sqrt{N})$. Its application to meta-RL enjoys better properties at large $N$ (see Sec~\ref{sec:linear}). 
    	\item  \textbf{Prior work implements biased estimates.} We observe that despite their claims of unbiasedness, most prior work in fact implements variants of LSF gradient estimates. This implies they are both biased w.r.t. the MAML and the $N$-sample meta-RL objective (see Sec~\ref{sec:prior}).
    \item \textbf{Performance guarantee with better dependency on $N$.} We provide performance guarantee of meta-RL algorithms with biased  estimates. Such guarantee contrasts with results of unbiased estimates, where the guarantee degrades significantly at large $N$   \citep{fallah2020provably} (see Sec~\ref{sec:convergence}). 
\end{itemize}

\section{Background}
\label{sec:background}

\subsection{Task-based Reinforcement Learning}

Consider a Markov decision process (MDP) with state space $\mathcal{S}$ and action space $\mathcal{A}$. At time $t \geq 0$, the agent takes action $a_t\in\mathcal{A}$ in state $s_t\in\mathcal{S}$, receives a reward $r_t$ and transitions to a next state $s_{t+1}\sim p(\cdot|s_t,a_t)$. Without loss of generality, we assume that the at $t=0$ the agent starts at the same state. We assume the reward $r_t=r(s_t,a_t,g)$ to be a deterministic function of state-action pair $(s_t,a_t)$ and the task variable $g \in \mathcal{G}$. The task variable $g\sim p_\mathcal{G}$ is sampled for every episode. A policy $\pi:\mathcal{S}\rightarrow \mathcal{P}(\mathcal{A})$ specifies a distribution over actions at each state. We further assume that the MDP terminates within a finite horizon of $H$ almost surely under all policies.

\paragraph{Parameterized policy.} In general, the policy is parameterized $\pi_\theta$ with parameter $\theta\in\mathbb{R}^D$,

\paragraph{Value function.}
Let $\tau\coloneqq(s_t,a_t,r_t)_{t=0}^{H-1}$ be a trajectory. The policy $\pi_\theta$ induces a distribution over trajectories $p_{\theta,g}(\tau)\coloneqq\Pi_{t=0}^{H-1} p(x_{t+1}|s_t,a_t)\pi_\theta(a_t|s_t,g)$ . We define $R(\tau,g)\coloneqq \sum_{t=0}^{H-1}\gamma^t r(s_t,a_t,g)$ as the cumulative return along trajectory $\tau$ under task $g$. We also define the value function as the expected returns over trajectories $V_g(\pi_\theta)\coloneqq \mathbb{E}_{\tau\sim p_\theta}\left[R(\tau,g)\right]$. We also overload the notations $V_g(\theta)\coloneqq V_g(\pi_\theta)$.

Note that unlike other work in RL, we define the value function as expected cumulative returns starting from the \emph{initial state}, which we assume to be a fixed single state. This definition will greatly simplify notations in later sections.

\paragraph{Policy gradient and stochastic estimates.} Policy gradient (PG) \citep{sutton1999} is the gradient of the value function with respect to policy parameter $\nabla_\theta V_g(\theta)=\mathbb{E}_{\tau\sim p_{\theta,g}}[R(\tau,g)\nabla_\theta \log p_{\theta,g}(\tau)]$. In practice, it is not feasible to compute PG exactly and it is of interest to construct stochastic PG estimates given sampled trajectories. Indeed,  $\hat{\nabla}_\theta V_g(\theta) = R(\tau,g)\nabla_\theta \log p_{\theta,g}(\tau)$ with $\tau\sim p_{\theta,g}$ is an unbiased PG estimate in that $\mathbb{E}[\hat{\nabla}_\theta V_g(\theta)] = \nabla_\theta V_g(\theta)$. 

\subsection{Meta Reinforcement Learning}

Meta-RL aims to maximize the average value function evaluated at the updated policy parameter 
$
    \theta_N'=\theta+\eta \frac{1}{N}\sum_{i=1}^N R(\tau_i,g)\nabla_\theta \log p_{\theta,g}(\tau_i)
$ obtained by an ascent step with $N$-sample PG estimates. This ascent step is also called the \emph{inner loop update}. Here, $(\tau_i)_{i=1}^N \sim p_{\theta,g}$ i.i.d. and $\eta$ is a fixed stepsize. Formally, consider the following optimization problem,
\begin{align}
    \max_\theta\   \mathbb{E}_{g}\left[F_N(\theta,g)\right],  F_N(\theta,g) \coloneqq \mathbb{E}_{(\tau_i)_{i=1}^N}\left[V_g\left(\theta+\eta \frac{1}{N}\sum_{i=1}^N R(\tau_i,g)\nabla_\theta \log p_{\theta,g}(\tau_i)\right)\right],\label{eq:emaml}
\end{align}
The expectations are over the goal distribution $g\sim p_\mathcal{G}$ and random trajectories $(\tau_i)_{i=1}^N\sim p_{\theta,g}$. The $N$-sample PG estimate update from $\theta$ to $\theta_N'$ is called the \emph{inner loop update}. We call $L_N$ the \emph{$N$-sample meta-RL objective} due to its critical dependency on $N$. Since the task distribution $p_{\mathcal{G}}$ does not depend on $\theta$, we mostly focus on discussing of properties of $L_N$ as a function of $\theta$ in later sections. The $N$-sample meta-RL objective was initially proposed in \citep{al2017continuous,rothfuss2018promp} under the name E-MAML and analyzed in \citep{fallah2020provably} in more theoretical contexts.

\paragraph{The limit case $N\rightarrow\infty$.} Under mild conditions,  the limit exists when $N\rightarrow\infty$ and Eqn~\ref{eq:emaml} converges to the following problem
\begin{align}
    \max_\theta \ \mathbb{E}_{g}\left[F_\infty(\theta,g)\right],  F_\infty(\theta,g) \coloneqq V_g\left(\theta+\eta \nabla_\theta V_g(\theta)\right).\label{eq:maml}
\end{align}
In other words, the inner loop update becomes exact PG ascent. This objective was proposed in the initial MAML framework \citep{finn2017model}.

\paragraph{Short notes on prior work.} Though prior literature mainly focuses on deriving gradient estimates to the MAML objective, we show that there is a fundamental challenge in obtaining unbiased estimates (see Sec~\ref{sec:prior}). Instead, we start the discussion in Sec~\ref{sec:mc} on the $N$-sample meta-RL objective.

\subsection{Stochastic Gradient Estimates of Monte-Carlo Objectives}

To facilitate discussions in later sections, we provide a brief background on optimizing general Monte-Carlo objectives. Monte-Carlo (MC) objectives are common in RL,  generative modeling and various probabilistic machine learning problems (see, e.g., \citep{blei2017,mohamed2020monte} for related reviews). In its general form, MC objectives are defined as $L(\theta)\coloneqq\mathbb{E}_{X\sim p_\theta}\left[f(X)\right]$ where random variables $X$ are drawn from a distribution $p_\theta$ whose density is a differentiable function of $\theta$. For simplicity, we first consider when $f$ depends explicitly on $X$ only, though it can also depend on $\theta$, which we will discuss shortly. To optimize $L(\theta)$, it is of direct interest to construct unbiased estimates to $\nabla_\theta L(\theta)$. 
\paragraph{Score function (SF)  gradient estimate.} Assume $f$ is bounded\footnote{Here, we assume $f$ to be bounded for simplicity, though the SF gradient estimate is well defined and unbiased under more general assumptions \citep{mohamed2020monte}. This boundedness assumption is satisfied for the meta-RL application to be discussed later.}. The  SF gradient estimate is  defined as follows
\begin{align*}
    \hat{\nabla}_\theta^{\text{SF}}L(\theta)\coloneqq f(X) \nabla_\theta \log p_\theta(X),X\sim p_\theta.
\end{align*}
By construction, the estimate is  unbiased. However, due to the gradient of score function $\nabla_\theta \log p_\theta(X)$, the estimate often has high variance in practice.

\paragraph{Path-wise (PW) gradient estimate.} If $X$ is reparameterizable (RE), 
there exists an elementary distribution $\zeta\sim p_\zeta$ (e.g. gaussian  $\mathcal{N}(0,1)$) and  a function $\mathcal{T}_\theta$ such that $ \mathcal{T}_\theta(\zeta)$ is equal in distribution to $X$. When $X$ is RE and $f$ is differentiable, the PW gradient estimate is defined as 
\begin{align*}
    \hat{\nabla}_\theta^{\text{PW}}L(\theta)\coloneqq\left[\nabla_X f(X) \right]_{X=\mathcal{T}_\theta(\zeta)}\nabla_\theta \mathcal{T}_\theta(\zeta) ,\zeta\sim p_\zeta.
\end{align*}
The PW gradient estimate is also unbiased. Intuitively, since PW gradient estimate makes use of the gradient $\nabla_X f(X)$, it enjoys lower variance compared to the SF gradient estimate in many applications \citep{kingma2013auto}. However,  the PW gradient estimate is less generally applicable due to assumptions on $X$ and $f$. For example, those assumptions are not satisfied for important applications such as RL and meta-RL. 

\section{Meta-RL as $N$-sample Additive Monte-Carlo Objective}
\label{sec:mc}

We start our discussion by extending the MC objective to $N$-sample additive MC objective. This general framework encompasses meta-RL as a special case. It also allows us to naturally derive a novel estimate with significant variance reduction in Sec~\ref{sec:linear}. 

\subsection{$N$-sample Additive Monte-Carlo Objective}
Let $(X_i)_{i=1}^N\sim p_\theta$ be i.i.d. samples from a parameterized distribution $p_\theta$ on domain $\mathcal{X}$. Define $\phi:\mathcal{X}\mapsto \mathbb{R}^h$ as feature mapping function that takes $x\in\mathcal{X}$ as input and outputs a $h$-dimensional feature $\phi(x)$. Let $f:\mathbb{R}^h\mapsto\mathbb{R}$ be a scalar function that maps from the feature space to a scalar value. We define the $N$-sample additive MC objective as follows,
\begin{align}
  L(\theta) \coloneqq  \mathbb{E}_{(X_i)_{i=1}^N} \left[f\left(\frac{\sum_{i=1}^N \phi(X_i)}{N}\right)\right].\label{eq:mc-n-sample}
\end{align}
The $N$-sample additive MC objective can be recovered as a special case of the MC objective by defining $X\coloneqq (X_i)_{i=1}^N$. However, we will find it useful to make clear how the property of $L(\theta)$ explicitly depends on $N$. Though the objective defines interactions between $\phi(X_i)$ in an additive manner,  we will see that this seemingly restrictive definition generalizes the $N$-sample meta-RL objective $F_N(\theta,g)$ as a special case. In the following, we ground the discussion with a toy example.

\paragraph{Toy $N$-sample Additive MC Objective.} Consider when $p_\theta$ is a parameterized Gaussian distribution $\mathcal{N}(\mu,\sigma^2)$ where $\sigma>0$ is fixed. The feature mapping $\phi$ and objective $f$ are both identity functions.

\subsection{Gradient Estimates for $N$-sample Additive MC Objective}

The SF gradient estimate to the $N$-sample additive MC objective is
\begin{align}
    \hat{\nabla}_\theta^{\text{SF}} L(\theta) \coloneqq f\left(\frac{\sum_{i=1}^N \phi(X_i)}{N}\right) \sum_{i=1}^N \nabla_\theta \log p_\theta (X_i), (X_i)_{i=1}^N \sim p_\theta.\label{eq:mc-n-sample-sf}
\end{align}
Since the SF gradient estimate changes distributions over $N$ variables at the same time, $\sum_{i=1}^N \nabla_\theta \log p_\theta (X_i)$ sums over $N$ terms. This implies high variance, which we calculate exactly for the toy example\footnote{Throughout the presentation, we will use the "Big O" notations. See Appendix~\ref{appendix:bigo} for their detailed definitions.}.

\begin{restatable}{lemma}{lemmaexamplemcvariance}\label{lemma:example-mc} In the toy $N$-sample MC objective example,  $\mathbb{V}\left[\hat{\nabla}_\theta^{\text{SF}}L(\theta)\right]=\Theta(N)$.
\end{restatable}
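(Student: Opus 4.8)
The plan is to specialize the general SF estimate in Eqn~\eqref{eq:mc-n-sample-sf} to the toy setting and then reduce it to a function of a single scalar Gaussian, after which the claim follows from standard Gaussian moment formulas. Here the optimized parameter is the mean, $\theta=\mu$, with $\sigma$ fixed, and since $\phi$ and $f$ are both the identity we have $f\big(\tfrac{1}{N}\sum_{i} \phi(X_i)\big) = \bar X \coloneqq \tfrac{1}{N}\sum_{i=1}^N X_i$. The Gaussian score is $\nabla_\mu \log p_\theta(X_i) = (X_i-\mu)/\sigma^2$, so substituting into Eqn~\eqref{eq:mc-n-sample-sf} gives $\hat{\nabla}_\theta^{\text{SF}} L(\theta) = \bar X \cdot \tfrac{1}{\sigma^2}\sum_{i=1}^N (X_i-\mu)$.

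The key simplification is to pass to the centered variables $Y_i \coloneqq X_i - \mu \sim \mathcal{N}(0,\sigma^2)$ and introduce their sum $W \coloneqq \sum_{i=1}^N Y_i$. Because a sum of independent Gaussians is Gaussian, $W \sim \mathcal{N}(0, N\sigma^2)$, and the entire $N$-dimensional estimator collapses to a quadratic-plus-linear function of this single scalar,
\[
\hat{\nabla}_\theta^{\text{SF}} L(\theta) = \frac{1}{N\sigma^2}\Big(N\mu + W\Big) W = \frac{\mu}{\sigma^2}\, W + \frac{1}{N\sigma^2}\, W^2 .
\]
Reducing the problem to a single Gaussian $W$ is the main conceptual step; everything after it is bookkeeping with Gaussian moments. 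As a sanity check, $\mathbb{E}[\hat{\nabla}_\theta^{\text{SF}} L(\theta)] = \tfrac{1}{N\sigma^2}\mathbb{E}[W^2] = 1 = \nabla_\mu\mu$, confirming unbiasedness.

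With $W$ zero-mean and $\mathbb{V}[W] = N\sigma^2$, I would compute the variances of the linear and quadratic terms and their covariance separately. Using $\mathbb{E}[W^2] = N\sigma^2$, $\mathbb{E}[W^3] = 0$, and $\mathbb{E}[W^4] = 3(N\sigma^2)^2$, one gets $\mathbb{V}[W^2] = 2(N\sigma^2)^2$ and $\mathrm{Cov}(W, W^2) = \mathbb{E}[W^3] = 0$ (the cross term vanishes because odd moments of a symmetric Gaussian are zero). Collecting terms yields
\[
\mathbb{V}\!\left[\hat{\nabla}_\theta^{\text{SF}} L(\theta)\right] = \frac{\mu^2}{\sigma^4}\,\mathbb{V}[W] + \frac{1}{N^2\sigma^4}\,\mathbb{V}[W^2] = \frac{\mu^2}{\sigma^2}\,N + 2 .
\]

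Since $\mu$ and $\sigma$ are treated as fixed constants (with $\mu \neq 0$), the linear term dominates and the variance is $\Theta(N)$, while the quadratic term contributes only an $\mathcal{O}(1)$ offset. I do not anticipate a serious obstacle: the only subtlety is recognizing that the $\Theta(N)$ growth comes entirely from the linear-in-$W$ piece, whose variance inherits the factor $N$ from $\mathbb{V}[W] = N\sigma^2$ — this is precisely the "summing $N$ score terms" effect flagged before the lemma. (In the degenerate case $\mu = 0$ the leading term vanishes and the variance is only $\Theta(1)$, so the stated $\Theta(N)$ implicitly presumes the generic $\mu \neq 0$.)
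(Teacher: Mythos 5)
Your proof is correct and follows essentially the same route as the paper's: reparameterize to centered Gaussians, split the estimator into a linear term and a quadratic term in the score sum, and finish with standard Gaussian moment formulas (your $W$ is exactly $\sigma\sum_i \epsilon_i$ in the paper's notation). Your bookkeeping is in fact slightly more careful than the paper's, whose intermediate expressions contain minor typos (a spurious $1/N$ on the linear term and a $\tfrac{2}{N}$ where the correct constant is $2$), and your remark that $\mu \neq 0$ is needed for the $\Theta(N)$ lower bound is a valid caveat the paper leaves implicit.
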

The variance depends linearly on $N$! This makes the estimate very hard to use in applications with large $N$. Compared to the SF gradient estimate, when the PW gradient estimate  $\hat{\nabla}_\theta^{\text{PW}}L(\theta)$ is available, it has much lower variance. In the toy example, it is indeed the case since $X=\sigma\cdot\zeta+\mu,\zeta\sim \mathcal{N}(0,1)$,
\begin{restatable}{lemma}{lemmaexamplemcvariancepw}\label{lemma:example-pw-variance} In the toy $N$-sample MC objective example,  $\mathbb{V}\left[\hat{\nabla}_\theta^{\text{PW}}L(\theta)\right]=0$.
\end{restatable}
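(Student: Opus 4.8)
The plan is to show that, in the toy example, the path-wise gradient estimate is a \emph{deterministic constant} independent of the sampled randomness, so that its variance is trivially zero. Recall that here $\theta = \mu$, the base distribution $p_\theta = \mathcal{N}(\mu,\sigma^2)$ with $\sigma$ fixed is reparameterized as $X_i = \mathcal{T}_\theta(\zeta_i) = \sigma \zeta_i + \mu$ with $\zeta_i \sim \mathcal{N}(0,1)$ i.i.d., and both $\phi$ and $f$ are the identity map.

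First I would substitute the reparameterization into the inner argument of the objective. Since $\phi$ is the identity, the additive statistic becomes $\frac{1}{N}\sum_{i=1}^N \phi(X_i) = \frac{1}{N}\sum_{i=1}^N (\sigma \zeta_i + \mu) = \mu + \frac{\sigma}{N}\sum_{i=1}^N \zeta_i$, and applying the (identity) outer function $f$ leaves this unchanged. The key structural observation is that the dependence on $\theta = \mu$ is exactly affine with unit slope, while all the randomness enters only through the $\theta$-independent term $\frac{\sigma}{N}\sum_{i=1}^N \zeta_i$.

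Next I would differentiate with respect to $\theta = \mu$, using the $N$-sample analogue of the path-wise estimate, namely the chain rule applied term-by-term to each reparameterized sample. Since $\nabla_X f = 1$, the derivative of the identity feature map is $1$, and $\nabla_\theta \mathcal{T}_\theta(\zeta_i) = \partial_\mu(\sigma\zeta_i + \mu) = 1$ for every $i$, the estimate collapses to $\hat{\nabla}_\theta^{\text{PW}}L(\theta) = \frac{1}{N}\sum_{i=1}^N 1 = 1$, with no residual dependence on the $\zeta_i$. Finally, because $\hat{\nabla}_\theta^{\text{PW}}L(\theta) = 1$ almost surely, we conclude $\mathbb{V}\left[\hat{\nabla}_\theta^{\text{PW}}L(\theta)\right] = 0$.

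There is essentially no analytic difficulty here; the closest thing to an obstacle is purely notational, namely making precise how the single-sample path-wise definition extends to the additive $N$-sample objective and verifying that the reparameterization and chain rule apply across all $N$ i.i.d. samples simultaneously. Once that bookkeeping is in place, the vanishing of the variance is immediate, and the contrast with Lemma~\ref{lemma:example-mc} is explained by the fact that the path-wise estimate differentiates \emph{through} the random term rather than weighting it by a growing sum of score functions.
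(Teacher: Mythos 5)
Your proposal is correct and follows essentially the same route as the paper's proof: reparameterize $X_i = \mu + \sigma\zeta_i$, observe that the $N$-sample path-wise estimate collapses to the deterministic constant $1$ (since $f$, $\phi$ are identities and $\nabla_\theta \mathcal{T}_\theta(\zeta_i) = 1$), and conclude the variance is zero. The paper's appendix proof (Lemma~\ref{lemma:exampleall}) performs exactly this computation, so there is nothing to add.
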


The zero variance is specialized to the toy example, though in general the PW gradient estimate also tends to achieve very small variance, making it a golden standard for  unbiased gradient estimates. However, PW gradient estimates are not generally applicable, e.g., to RL and meta-RL objectives.

\subsection{Gradient Estimates for Generalized $N$-sample Additive MC Objective}

Next, we the discussion to the case where $f,\phi$ depends on parameter $\theta$. Define the \emph{generalized} $N$-sample additive MC objective as follows
\begin{align}
  G(\theta) \coloneqq  \mathbb{E}_{(X_i)_{i=1}^N} \left[f\left(\frac{\sum_{i=1}^N \phi(X_i,\theta)}{N},\theta\right)\right].\label{eq:general-mc-n-sample}
\end{align}
We start by deriving exact gradient to the objective
\begin{restatable}{lemma}{lemmageneralmcnsampleexact}\label{lemma:general-mc-n-sample-exact} Let $\bar{\phi}_N\coloneqq\frac{1}{N}\sum_{i=1}^N \phi(X_i,\theta)$. The generalized $N$-sample additive MC objective has gradient $\nabla_\theta G(\theta)$ as follows, where $(X_i)_{i=1}^N\sim p_{\theta}$ i.i.d., 
\begin{align*}
    \mathbb{E}_{(X_i)_{i=1}^N}\left[ \underbrace{f\left(\bar{\phi}_N,\theta\right) \sum_{i=1}^N \nabla_\theta \log p_\theta (X_i)}_{\text{term (i)}} +  \underbrace{ \nabla_\theta f\left(\theta,\bar{\phi}_N\right)+\left(\frac{1}{N}\sum_{i=1}^N \nabla_\theta \phi(\theta,X_i)\right) \nabla_{\bar{\phi}_N}f(\theta,\bar{\phi}_N)}_{\text{term (ii)}}\right].
\end{align*}
\end{restatable}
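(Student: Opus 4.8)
The plan is to write $G(\theta)$ as an explicit integral over the $N$ sampling variables against the product density and then differentiate under the integral sign, carefully tracking the two distinct ways in which $\theta$ enters. Concretely, I would write
\begin{align*}
  G(\theta) = \int_{\mathcal{X}^N} f\!\left(\frac{1}{N}\sum_{i=1}^N \phi(x_i,\theta),\theta\right) \prod_{j=1}^N p_\theta(x_j)\, dx_1\cdots dx_N,
\end{align*}
so that $\theta$ appears (a) in the sampling density $\prod_j p_\theta(x_j)$ and (b) in the integrand $f(\bar{\phi}_N,\theta)$, both through its explicit second argument and through $\bar{\phi}_N$, which itself depends on $\theta$ via each $\phi(x_i,\theta)$. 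Applying the product rule to $\nabla_\theta$ splits the gradient into exactly these two groups of terms, which I claim match term (i) and term (ii) respectively.

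First I would handle the contribution from differentiating the density. Using the score-function (log-derivative) identity $\nabla_\theta \prod_{j=1}^N p_\theta(x_j) = \big(\prod_{j=1}^N p_\theta(x_j)\big)\sum_{i=1}^N \nabla_\theta \log p_\theta(x_i)$, this contribution becomes $\mathbb{E}_{(X_i)}\big[f(\bar{\phi}_N,\theta)\sum_{i=1}^N \nabla_\theta\log p_\theta(X_i)\big]$, which is precisely term (i). Next, holding the density fixed, I would differentiate the integrand by the chain rule: the explicit second argument yields $\nabla_\theta f(\theta,\bar{\phi}_N)$, while the dependence through $\bar{\phi}_N$ yields $(\nabla_\theta \bar{\phi}_N)\,\nabla_{\bar{\phi}_N} f(\theta,\bar{\phi}_N)$ with $\nabla_\theta\bar{\phi}_N = \frac{1}{N}\sum_{i=1}^N \nabla_\theta\phi(\theta,X_i)$. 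Together these give term (ii), and summing the two groups recovers the stated formula.

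The one genuine obstacle is justifying the interchange of $\nabla_\theta$ with the integral, i.e. invoking the Leibniz rule / dominated convergence. This requires that the integrand and its $\theta$-gradient be dominated, uniformly over a neighborhood of $\theta$, by a fixed integrable function. The smoothness and boundedness assumptions stated earlier (bounded $f$, differentiable $p_\theta$ and $\phi$, finite horizon in the meta-RL instantiation) are what make this legitimate; the delicate piece is the score-function term, since $\nabla_\theta \log p_\theta$ need not be bounded and one must control the product $f(\bar{\phi}_N,\theta)\sum_i\nabla_\theta\log p_\theta(X_i)$ in $L^1$. Once the interchange is granted, the remainder is the routine product- and chain-rule bookkeeping outlined above.
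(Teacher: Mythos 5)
Your proposal is correct and follows essentially the same route as the paper's proof: split $\nabla_\theta G(\theta)$ by the two ways $\theta$ enters, obtaining term (i) from the score-function identity applied to the product density and term (ii) from the chain rule applied to the explicit $\theta$-dependence of $f$ and $\phi$. Your additional care about justifying differentiation under the integral sign (dominated convergence for the score-function product) is a point the paper's terse proof glosses over, but it does not change the argument.
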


\paragraph{Generalized SF gradient estimate.}  With access to samples $(X_i)_{i=1}\sim p_\theta$, we define the generalized SF gradient estimate  $\hat{\nabla}^{\text{SF}}_\theta G(\theta)$ as follows
\begin{align}
    \underbrace{f\left(\bar{\phi}_N,\theta\right) \sum_{i=1}^N \nabla_\theta \log p_\theta (X_i)}_{\text{term (i)}} +  \underbrace{\nabla_\theta f\left(\theta,\bar{\phi}_N\right)+\left(\frac{1}{N}\sum_{i=1}^N \nabla_\theta \phi(X_i,\theta)\right) \nabla_{\bar{\phi}_N}f(\bar{\phi}_N,\theta)}_{\text{term (ii)}}.   \label{eq:general-mc-n-sample-sf}
\end{align}

The two terms in the estimate echo the two terms in the exact gradient in Lemma~\ref{lemma:general-mc-n-sample-exact}. Term (i) corresponds to the SF gradient estimate in Eqn~\ref{eq:mc-n-sample-sf}. Term (ii) is a direct result of how $f,\phi$ depends on $\theta$. We provide a full derivation in Appendix~\ref{appendix:derive}.
Examining term (i) and term (ii), we argue that the variance of the overall estimate mainly comes from term (i). This is because term (ii) \textbf{averages} over $N$ terms (e.g., with $\bar{\phi}_N$) whereas term (i) \textbf{sums} over $N$ score function gradients $\nabla_\theta \log p_\theta(X_i)$. 

\subsection{Meta-RL as Generalized $N$-sample Additive MC Objective}

With the conversion: $X_i\coloneqq \tau_i, \phi(X_i,\theta)\coloneqq R(\tau_i,g)\nabla_\theta \log p_{\theta,g}(\tau_i)$ and $f(\bar{\phi}_N,\theta)=V_g(\theta+\eta \bar{\phi}_N)$, we cast meta-RL as a special instance of the generalized $N$-sample additive MC objective. We compute gradient of the $N$-sample objective $J_N(\theta,g)\coloneqq \nabla_\theta F_N(\theta,g)$ as a direct result of Lemma~\ref{lemma:general-mc-n-sample-exact}. 

\begin{restatable}{lemma}{lemmaemamlgradient}\label{lemma:emaml-gradient} Let  $\theta_N'\coloneqq \theta+\eta \frac{1}{N}\sum_{i=1}^N R(\tau_i,g)\nabla_\theta \log p_{\theta,g}(\tau_i)$ be the (random) updated parameter. Let $\tau_i\sim p_{\theta,g}$ i.i.d. and let $\nabla V_g(\theta_N')$ denotes $\left[\nabla_\theta V_g(\theta) \right]_{\theta=\theta_N'}$. Then  $J_N(\theta,g)\coloneqq \nabla_\theta F_N(\theta,g)$ is 
\begin{align}
     \underbrace{\mathbb{E}_{(\tau_i)_{i=1}^N} \left[  V_g(\theta_N') \sum_{i=1}^N \nabla_\theta \log p_{\theta,g}(\tau_i)\right]}_{\eqqcolon J_N^{(i)}(\theta,g)} + \underbrace{\mathbb{E}_{(\tau_i)_{i=1}^N}\left[\left(I+\eta \frac{1}{N}\sum_{i=1}^N R(\tau_i,g) \nabla_\theta^2 \log p_{\theta,g}(\tau_i)\right) \nabla V_g(\theta_N') \right]}_{\eqqcolon J_N^{(ii)}(\theta,g)}.\label{eq:emaml-grad}
\end{align}
\end{restatable}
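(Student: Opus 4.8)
The plan is to derive Lemma~\ref{lemma:emaml-gradient} as a direct specialization of the exact-gradient formula in Lemma~\ref{lemma:general-mc-n-sample-exact}, using the dictionary supplied just before the statement. First I would substitute $X_i \coloneqq \tau_i$, $\phi(X_i,\theta)\coloneqq R(\tau_i,g)\nabla_\theta \log p_{\theta,g}(\tau_i)$, and $f(\bar\phi_N,\theta)\coloneqq V_g(\theta+\eta\bar\phi_N)$ into the general formula. With these choices one has $\theta_N' = \theta + \eta\bar\phi_N$, so that $f(\bar\phi_N,\theta)=V_g(\theta_N')$. This immediately identifies term (i) of Lemma~\ref{lemma:general-mc-n-sample-exact}, namely $f(\bar\phi_N,\theta)\sum_{i=1}^N \nabla_\theta \log p_\theta(X_i)$, with $J_N^{(i)}(\theta,g)$ after taking the expectation.

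Next I would handle term (ii), which reads $\nabla_\theta f(\bar\phi_N,\theta) + \bigl(\frac1N\sum_i \nabla_\theta\phi(X_i,\theta)\bigr)\nabla_{\bar\phi_N} f(\bar\phi_N,\theta)$. The main computation here is the chain rule through $f(\bar\phi_N,\theta)=V_g(\theta+\eta\bar\phi_N)$. Writing $u\coloneqq\theta+\eta\bar\phi_N=\theta_N'$, I would note that $f$ depends on $\theta$ both through the explicit first argument of $u$ and through $\bar\phi_N$; the explicit-$\theta$ partial contributes $\nabla V_g(\theta_N')$ (the identity-matrix part), while $\nabla_{\bar\phi_N} f = \eta\,\nabla V_g(\theta_N')$. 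For the averaged Jacobian factor I would compute $\nabla_\theta\phi(\tau_i,\theta) = R(\tau_i,g)\,\nabla_\theta^2\log p_{\theta,g}(\tau_i)$, since $R(\tau_i,g)$ does not depend on $\theta$. Collecting these, term (ii) becomes $\bigl(I+\eta\frac1N\sum_i R(\tau_i,g)\nabla_\theta^2\log p_{\theta,g}(\tau_i)\bigr)\nabla V_g(\theta_N')$, which is exactly $J_N^{(ii)}(\theta,g)$ inside the expectation.

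The step I expect to be the main obstacle is bookkeeping the chain rule cleanly: in the general lemma $f$ takes the averaged feature $\bar\phi_N$ and $\theta$ as two separate arguments, but in the meta-RL instantiation these two dependencies are entangled inside the single composite $V_g(\theta+\eta\bar\phi_N)$. I would need to be careful that the $I$ (from differentiating $V_g$ through the explicit $\theta$ in $\theta_N'$) and the $\eta\,\frac1N\sum_i R(\tau_i,g)\nabla_\theta^2\log p_{\theta,g}$ factor (from differentiating $V_g$ through $\bar\phi_N$) combine correctly, and that the common right factor $\nabla V_g(\theta_N')$ is not double-counted. A secondary care point is matching the two argument-orderings of $f$ used in Lemma~\ref{lemma:general-mc-n-sample-exact} and Eqn~\ref{eq:general-mc-n-sample-sf}; since $V_g$ is a genuine function of $\theta_N'$ only, these orderings are harmless here. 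Once the substitutions and the single chain-rule expansion are verified, the result follows, and I would defer the fully expanded routine algebra to the appendix referenced for Lemma~\ref{lemma:general-mc-n-sample-exact}.
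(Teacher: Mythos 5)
Your proposal is correct and matches the paper's own proof, which likewise obtains Lemma~\ref{lemma:emaml-gradient} by substituting the dictionary $X_i\coloneqq\tau_i$, $\phi(X_i,\theta)\coloneqq R(\tau_i,g)\nabla_\theta\log p_{\theta,g}(\tau_i)$, $f(\bar{\phi}_N,\theta)\coloneqq V_g(\theta+\eta\bar{\phi}_N)$ into Lemma~\ref{lemma:general-mc-n-sample-exact}. In fact you spell out the chain-rule bookkeeping (the identity term from the explicit $\theta$, the factor $\eta\,\nabla V_g(\theta_N')$ from $\nabla_{\bar{\phi}_N}f$, and $\nabla_\theta\phi = R(\tau_i,g)\nabla_\theta^2\log p_{\theta,g}(\tau_i)$) in more detail than the paper, which leaves these steps implicit.
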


We reiterate intuitions about the two gradient terms in the context of meta-RL.
The  parameter $\theta$ influences the objective $F_N(\theta,g)$ in two different ways. 
The first term arises from the fact that the $N$ random trajectories are sampled from $p_{\theta,g}$, which depends on $\theta$. 
The second term is a result of how $\theta$ impacts 
$F_N(\theta,g)$ explicitly through the inner loop $N$-sample PG estimate.

\paragraph{Unbiased meta-RL gradient estimate.} In the following, we specify an algorithmic procedure to construct unbiased estimates to $J_N(\theta,g)$. This is a direct instantiation of the generalized SF gradient estimate in Eqn~\ref{eq:general-mc-n-sample-sf} in the context of meta-RL. 
\begin{restatable}{corollary}{coroemamlgradient}\label{coro:emaml-gradient} First, sample $(\tau_i)_{i=1}^N\sim p_{\theta,g}$ and compute the updated parameter $\theta_N'$. Then, construct unbiased estimates to $\nabla V_g(\theta_N')$ and $V_g(\theta_N')$, e.g. with trajectories sampled under $\pi_{\theta_N'}$. Let these estimates be $\nabla \hat{V}_g(\theta_N')$ and $\hat{V}_g(\theta_N')$ respectively\footnote{For now, we just require the estimates to be unbiased. In Sec~\ref{sec:convergence}, we make these estimates concrete for refined convergence analysis.}. The final estimate is
\begin{align}
   \underbrace{ \hat{V}_g(\theta_N') \sum_{i=1}^N \nabla_\theta \log p_{\theta,g}(\tau_i)}_{\eqqcolon \hat{J}_{N,\text{SF}}^{(i)}(\theta,g)} +  \underbrace{\left(I+\eta \frac{1}{N}\sum_{i=1}^N R(\tau_i,g) \nabla_\theta^2 \log p_{\theta,g}(\tau_i)\right) \nabla \hat{V}_g(\theta_N') }_{\eqqcolon \hat{J}_{N,\text{SF}}^{(ii)}(\theta,g)}.\label{eq:unbiased-emaml}
\end{align}
Both terms are unbiased $\mathbb{E}[\hat{J}_{N,\text{SF}}^{(i)}(\theta,g)]=J_N^{(i)}(\theta,g), \mathbb{E}[\hat{J}_{N,\text{SF}}^{(ii)}(\theta,g)]=J_N^{(ii)}(\theta,g)$ with respect to the two terms in Eqn~\ref{eq:emaml-grad}. This implies that the overall estimate is also unbiased. 
\end{restatable}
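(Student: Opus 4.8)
The plan is to verify Corollary~\ref{coro:emaml-gradient} by showing that each of the two constructed terms is an unbiased estimate of the corresponding term in the exact gradient decomposition of Lemma~\ref{lemma:emaml-gradient}, after which linearity of expectation yields unbiasedness of the whole estimate. The construction is explicitly a direct instantiation of the generalized SF gradient estimate of Eqn~\ref{eq:general-mc-n-sample-sf} under the substitution $X_i \coloneqq \tau_i$, $\phi(X_i,\theta) \coloneqq R(\tau_i,g)\nabla_\theta \log p_{\theta,g}(\tau_i)$, and $f(\bar{\phi}_N,\theta) = V_g(\theta + \eta\bar{\phi}_N)$, so the heart of the matter is handling the fact that $V_g(\theta_N')$ and $\nabla V_g(\theta_N')$ are themselves expectations that we have replaced by sampled unbiased estimates.

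First I would fix the sampled trajectories $(\tau_i)_{i=1}^N \sim p_{\theta,g}$, which determine the (random) updated parameter $\theta_N'$, and condition on them. For term~(i), the exact quantity is $J_N^{(i)}(\theta,g) = \mathbb{E}_{(\tau_i)}[V_g(\theta_N')\sum_{i=1}^N \nabla_\theta \log p_{\theta,g}(\tau_i)]$. In the estimate I replace $V_g(\theta_N')$ by $\hat{V}_g(\theta_N')$, an estimate built from fresh trajectories drawn under $\pi_{\theta_N'}$ that are independent of the $(\tau_i)$ given $\theta_N'$. Taking the conditional expectation over these fresh trajectories (holding $(\tau_i)$ fixed) gives $\mathbb{E}[\hat{V}_g(\theta_N') \mid (\tau_i)] = V_g(\theta_N')$ by the assumed unbiasedness, while the factor $\sum_{i=1}^N \nabla_\theta \log p_{\theta,g}(\tau_i)$ is measurable with respect to $(\tau_i)$ and so passes through. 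Taking the outer expectation over $(\tau_i)$ then recovers $J_N^{(i)}(\theta,g)$ by the tower property.

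Second, for term~(ii) the exact quantity is $J_N^{(ii)}(\theta,g) = \mathbb{E}_{(\tau_i)}[(I + \eta\frac{1}{N}\sum_{i=1}^N R(\tau_i,g)\nabla_\theta^2 \log p_{\theta,g}(\tau_i))\nabla V_g(\theta_N')]$. The same conditioning argument applies: the matrix factor $I + \eta\frac{1}{N}\sum_{i=1}^N R(\tau_i,g)\nabla_\theta^2 \log p_{\theta,g}(\tau_i)$ is $(\tau_i)$-measurable, and the fresh estimate satisfies $\mathbb{E}[\nabla \hat{V}_g(\theta_N') \mid (\tau_i)] = \nabla V_g(\theta_N')$, so the conditional expectation reproduces the integrand of $J_N^{(ii)}(\theta,g)$ and the tower property closes the argument. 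Finally, summing the two unbiased terms and invoking linearity of expectation gives unbiasedness of the total estimate with respect to $J_N(\theta,g) = J_N^{(i)}(\theta,g) + J_N^{(ii)}(\theta,g)$.

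The main obstacle I anticipate is not any computation but the careful bookkeeping of the probabilistic independence structure: the argument relies crucially on the fresh trajectories used to form $\hat{V}_g(\theta_N')$ and $\nabla \hat{V}_g(\theta_N')$ being conditionally independent of $(\tau_i)_{i=1}^N$ given $\theta_N'$, so that the conditional expectation factorizes and the score/Hessian prefactors can be treated as constants. I would also need to note mild regularity (e.g. boundedness of $V_g$ and integrability of the score and Hessian terms, which follow from the finite horizon $H$ and the boundedness assumptions already in force) to justify interchanging expectation and the linear operations, but these are exactly the conditions under which Lemma~\ref{lemma:emaml-gradient} was derived, so no new assumptions are required.
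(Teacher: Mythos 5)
Your proposal is correct and follows essentially the same route as the paper: condition on the inner-loop randomness, use the conditional unbiasedness of the fresh outer-loop estimates $\hat{V}_g(\theta_N')$ and $\nabla \hat{V}_g(\theta_N')$ so the score and Hessian prefactors pass through as measurable factors, then close with the tower property and linearity of expectation. The only difference is that you condition on the full trajectories $(\tau_i)_{i=1}^N$ whereas the paper conditions only on $\theta_N'$; your choice is in fact the more careful one, since the prefactors are functions of the trajectories rather than of $\theta_N'$ alone.
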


\paragraph{Variance of  unbiased gradient estimate.}

As direct implications of the properties of SF gradient estimate and generalized SF gradient estimate, $\hat{J}_N$ has very high variance. In fact, building on the $N$-sample additive MC objective toy example, we can construct meta-RL examples where unbiased estimates have variance $\Theta(N)$. See Appendix~\ref{appendix:toy-metarl} for more details. Our objective now is to develop new estimates which bypass the high variance of the unbiased estimate.

\section{Linearized Score Function Gradient Estimate}
\label{sec:linear}
We now introduce a major development in this paper: a new gradient estimate for the $N$-sample additive MC objective. This estimate is in general biased but has significantly lower variance ($\mathcal{O}(1/N)$) compared to the SF gradient estimate ($\Theta(N)$), making it attractive in practice when $N$ is large.

\subsection{Linearized Score Function Gradient Estimate for $N$-sample additive MC Objective.}  When the PW gradient estimate is applicable, it often has lower variance than the SF gradient estimate. Previously, we argue that this is because PW leverages gradient information in the objective $f$ while SF does not. Building on this intuition, we propose a new gradient estimate called \emph{linearized score function} (LSF) gradient estimate as follows,
\begin{align}
    \hat{\nabla}_\theta^{\text{LSF}}L(\theta)\coloneqq  \frac{1}{N}\sum_{i=1}^N \left[\nabla f\left(\bar{\phi}_N\right)\right]^T \phi(X_i) \nabla_\theta \log p_\theta (X_i), \ \text{where} \ \nabla f(\bar{\phi}_N)\coloneqq [\nabla_x f(x)]_{x=\bar{\phi}_N}.\label{eq:mc-n-sample-lsf}
\end{align}

\paragraph{Derivation of the estimate.}
The naming \emph{linearized} comes from how the estimate was derived in the first place, which we show in detail in Appendix~\ref{appendix:derive}. We highlight a number of important steps and insights below. In a nutshell, LSF is derived by linearizing the term $f(\bar{\phi}_N)$ used in the SF gradient estimate, based on Taylor expansion near the reference point $\bar{\phi}\coloneqq\mathbb{E}[\phi(X)]$. We decompose $f(\bar{\phi}_N)$ into three terms below,
\begin{align*}
    f(\bar{\phi}_N)=\underbrace{f(\bar{\phi})}_{\text{constant term}} +\underbrace{ \left[\nabla f(\bar{\phi})\right]^T \left(\bar{\phi}_N - \bar{\phi}\right)}_{\text{linear term}} + \underbrace{\mathcal{O}\left( \left\lVert \bar{\phi}_N - \bar{\phi}\right\rVert_2^2\right)}_{\text{residual term}}.
\end{align*} 
When multiplying the above by $\sum_{i=1}^N \nabla_\theta \log p_\theta(X_i)$, we obtain the SF gradient estimate on the LHS, the RHS is
\begin{align*}
   \underbrace{f(\bar{\phi}) \sum_{i=1}^N \nabla_\theta \log p_\theta(X_i)}_{\text{constant\ term}} +\underbrace{ \left[\nabla f(\bar{\phi})\right]^T \left(\bar{\phi}_N - \bar{\phi}\right) \sum_{i=1}^N \nabla_\theta \log p_\theta(X_i)}_{\text{linear\ term}} + \underbrace{\mathcal{O}\left( \left\lVert \bar{\phi}_N - \bar{\phi}\right\rVert_2^2\right) \sum_{i=1}^N \nabla_\theta \log p_\theta(X_i)}_{\text{residual\ term}}.
\end{align*} 
The LSF gradient estimate stems from the linear term above. We now justify removing the other terms: we remove the residual term because it should vanish as $N\rightarrow\infty$ and $\bar{\phi}_N\rightarrow\bar{\phi}$. In fact, since the residual term is not zero in expectation, the LSF gradient estimate is biased. However, its bias is of order $\mathcal{O}(1/\sqrt{N})$, as we will see later.

Secondly, we also remove the constant term it has mean zero and only contributes variance. In this sense, the LSF gradient estimate can be understood as being related to a SF gradient estimate with baseline (or control variate) $\left(f(\bar{\phi}_N)-f(\bar{\phi})\right)\sum_{i=1}^N \nabla_\theta \log p_\theta(X_i)$ \citep{ross6277simulation}. When $N$ is large and $f(\bar{\phi}_N)\approx f(\bar{\phi})$, we should expect the control variate to achieve significant variance reduction in practice. 

Finally, we use $\nabla f(\bar{\phi}_N)$ as an proxy to $\nabla f(\phi_N)$. Overall, we see that the LSF gradient estimate makes use of the gradient of $f$ yet does not require reparameterization of the random variables $X_i$. In this sense, it is more general than the PW gradient estimate, yet leverages more information than the SF gradient estimate. 

\paragraph{Properties of the estimate.} The following result shows LSF achieves significant variance reduction.
\begin{restatable}{lemma}{lemmaexamplemcvariancelsf}\label{lemma:example-lsf-variance} In the toy $N$-sample MC objective example,  $\mathbb{V}\left[\hat{\nabla}_\theta^{\text{LSF}}L(\theta)\right]=\mathcal{O}(1/N)$.
\end{restatable}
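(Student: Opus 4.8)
The plan is to directly instantiate the LSF gradient estimate from Eqn~\eqref{eq:mc-n-sample-lsf} in the toy example and reduce it to a normalized sum of i.i.d.\ random variables, after which the $\mathcal{O}(1/N)$ rate follows from the standard scaling of the variance of a sample mean. In the toy example we have $\theta=\mu$, $p_\theta=\mathcal{N}(\mu,\sigma^2)$ with $\sigma$ fixed, and both $f$ and $\phi$ are the identity map. Consequently $\nabla f(\bar\phi_N)=1$ and $\phi(X_i)=X_i$, so the only nontrivial ingredient is the score $\nabla_\theta \log p_\theta(X_i)=(X_i-\mu)/\sigma^2$. Substituting these into Eqn~\eqref{eq:mc-n-sample-lsf} gives
\begin{align*}
\hat{\nabla}_\theta^{\text{LSF}} L(\theta) = \frac{1}{N}\sum_{i=1}^N \frac{X_i(X_i-\mu)}{\sigma^2} = \frac{1}{N}\sum_{i=1}^N Y_i, \quad Y_i \coloneqq \frac{X_i(X_i-\mu)}{\sigma^2}.
\end{align*}

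First I would observe that the $Y_i$ are i.i.d.\ (since the $X_i$ are), so that $\mathbb{V}[\hat{\nabla}_\theta^{\text{LSF}} L(\theta)]=\mathbb{V}[Y_1]/N$. The whole question therefore reduces to showing that $\mathbb{V}[Y_1]$ is a finite constant independent of $N$ --- which is immediate because $Y_1$ is a fixed quadratic function of a single Gaussian and depends on $N$ in no way. Concretely, I would compute $\mathbb{V}[Y_1]$ using the moments of $X_1\sim\mathcal{N}(\mu,\sigma^2)$ up to fourth order; a short calculation gives $\mathbb{E}[Y_1]=1$ (consistent with $L(\theta)=\mu=\theta$, so $L'(\theta)=1$, confirming the estimate is in fact unbiased here because the identity $f$ is linear and the residual term of the Taylor expansion vanishes) and $\mathbb{V}[Y_1]=(\mu^2+2\sigma^2)/\sigma^2$, a fixed constant.

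Combining the two observations yields $\mathbb{V}[\hat{\nabla}_\theta^{\text{LSF}} L(\theta)]=(\mu^2+2\sigma^2)/(N\sigma^2)=\mathcal{O}(1/N)$, as claimed. There is no genuine obstacle here: the crux is simply recognizing that the LSF construction replaces the factor $f(\bar\phi_N)$ --- which multiplies the \emph{sum} $\sum_i \nabla_\theta \log p_\theta(X_i)$ and drives the $\Theta(N)$ blow-up in Lemma~\ref{lemma:example-mc} --- by a per-sample factor $[\nabla f(\bar\phi_N)]^T\phi(X_i)$ that instead sits inside an \emph{average}. The only point meriting minor care is that $\nabla f(\bar\phi_N)$ is itself random and correlated with the $X_i$; but in the toy example $\nabla f\equiv 1$ is constant, so this correlation is absent and the estimate is exactly a sample mean of i.i.d.\ terms. (In the general $f$ case one would need to control that correlation, which is presumably deferred to the variance analysis for the general estimate; for this particular lemma it is trivial.)
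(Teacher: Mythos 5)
Your proposal is correct and takes essentially the same route as the paper: the paper's appendix proof (Lemma~\ref{lemma:exampleall}) also instantiates the LSF estimate in the toy example, reparameterizes $X_i=\mu+\sigma\epsilon_i$, and computes the variance via Gaussian moments, arriving at exactly your value $\frac{\mu^2}{\sigma^2}\frac{1}{N}+\frac{2}{N}=\mathcal{O}(1/N)$. Your framing as a sample mean of i.i.d.\ terms $Y_i=X_i(X_i-\mu)/\sigma^2$ is just a slightly cleaner organization of the same calculation, and your side remarks (unbiasedness since $f$ is linear, and the vanishing correlation with $\nabla f(\bar{\phi}_N)\equiv 1$) are accurate.
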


In the toy example, the PW gradient estimate is the gold standard unbiased estimate with zero variance. Yet, as discussed before, it is not generally applicable. The LSF gradient estimate has variance $\mathcal{O}(1/N)$, which decays as $N$ increases. This makes LSF applicable in large $N$ regimes. However, unlike the SF gradient estimate which is by design unbiased, the LSF gradient estimate is in general biased. Nevertheless, when applying the LSF gradient estimate to the $N$-sample meta-RL objective, we can characterize the bias to be of order $\mathcal{O}(1/N)$ (see Proposition~\ref{prop:bias3}).

\subsection{Gradient Estimate for Generalized $N$-sample Additive MC Objective}
We extend the LSF gradient estimate to the generalized $N$-sample additive MC objective in Eqn~\ref{eq:general-mc-n-sample}. We do so by replacing the term (i) SF gradient estimate by LSF gradient estimate in Eqn~\ref{eq:general-mc-n-sample-sf}. This produces the generalized LSF gradient estimate $\hat{\nabla}_\theta^{\text{LSF}}G(\theta)$ as follows,
\begin{align}
    \underbrace{ \frac{1}{N}\sum_{i=1}^N \left[\nabla_{\bar{\phi}_N} f\left(\bar{\phi}_N,\theta\right)\right]^T  \phi(X_i,\theta) \nabla_\theta \log p_\theta (X_i,\theta)}_{\text{term (i)}} +  \underbrace{\nabla_\theta f\left(\theta,\bar{\phi}_N\right)+\left(\frac{1}{N}\sum_{i=1}^N \nabla_\theta \phi(X_i,\theta)\right) \nabla_{\bar{\phi}_N}f(\bar{\phi}_N,\theta)}_{\text{term (ii)}}.   \label{eq:general-mc-n-sample-lsf}
\end{align}
Due to the bias in the LSF gradient estimate, the generalized LSF gradient estimate is also biased. However, the key trade-off is that the new term (i) in Eqn~\ref{eq:general-mc-n-sample-lsf} \textbf{averages} over $N$ samples and achieves significantly smaller variance than the generalized SF gradient estimate. 

\subsection{Biased Gradient Estimate to Meta-RL Objective} We next apply the generalized LSF gradient estimate to the $N$-sample meta-RL objective.
\begin{restatable}{corollary}{coromamlgradient}\label{coro:lst-maml} Let  $u_i\coloneqq \nabla_\theta \log p_{\theta,g}(\tau_i)$. The generalized LSF gradient estimate $\hat{J}_{N,\text{LSF}}(\theta,g)$ to $F_N(\theta,g)$ can be expressed as follows, 
\begin{align*}
      \left(\eta \frac{1}{N}\sum_{i=1}^N R(\tau_i,g) u_i u_i^T\right)\nabla V_g(\theta_N')  \left(I+\eta \frac{1}{N}\sum_{i=1}^N R(\tau_i,g) \nabla_\theta^2 \log p_{\theta,g}(\tau_i)\right) \nabla V_g(\theta_N') ,
\end{align*}
Define $\nabla \hat{V}_g,\hat{V}_g$ as unbiased estimates to $\nabla V_g,V_g$. The following estimate has the same expectation as $\hat{J}_{N,\text{LSF}}(\theta,g)$ defined above,
\begin{align}
     \underbrace{ \left(\eta \frac{1}{N}\sum_{i=1}^N R(\tau_i,g) u_i u_i^T\right)\nabla \hat{V}_g(\theta_N')  }_{\eqqcolon \hat{J}_{N,\text{LSF}}^{(i)}(\theta,g)}+\underbrace{\left(I+\eta \frac{1}{N}\sum_{i=1}^N R(\tau_i,g) \nabla_\theta^2 \log p_{\theta,g}(\tau_i)\right) \nabla \hat{V}_g(\theta_N') }_{\eqqcolon \hat{J}_{N,\text{LSF}}^{(ii)}(\theta,g)},\label{eq:biased-emaml}
\end{align}
\end{restatable}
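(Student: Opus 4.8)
The plan is to obtain this corollary as a direct instantiation of the generalized LSF gradient estimate of Eqn~\ref{eq:general-mc-n-sample-lsf} under the meta-RL conversion $X_i=\tau_i$, $\phi(\tau_i,\theta)=R(\tau_i,g)u_i$ with $u_i=\nabla_\theta\log p_{\theta,g}(\tau_i)$, and $f(\bar{\phi}_N,\theta)=V_g(\theta+\eta\bar{\phi}_N)$, so that $\theta_N'=\theta+\eta\bar{\phi}_N$. The substantive content is just computing the derivatives of $f$ and $\phi$ that appear in terms (i) and (ii) of that estimate and then rewriting term (i) as an outer product. (I read the first displayed expression as term (i) \emph{plus} term (ii): the $+$ separating the two appears to be a typo, since Eqn~\ref{eq:biased-emaml} shows it explicitly.)

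First I would record the three derivatives. Because $f(\bar{\phi}_N,\theta)=V_g(\theta+\eta\bar{\phi}_N)$, the chain rule gives $\nabla_{\bar{\phi}_N}f(\bar{\phi}_N,\theta)=\eta\,\nabla V_g(\theta_N')$ — this is the sole source of the factor $\eta$ — while the partial in the explicit slot is $\nabla_\theta f(\theta,\bar{\phi}_N)=\nabla V_g(\theta_N')$. Since $R(\tau_i,g)$ is a deterministic function of the fixed trajectory and goal with no explicit $\theta$-dependence, differentiating the feature map gives $\nabla_\theta\phi(\tau_i,\theta)=R(\tau_i,g)\,\nabla_\theta^2\log p_{\theta,g}(\tau_i)$.

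Next I would substitute into term (i) of Eqn~\ref{eq:general-mc-n-sample-lsf}. Each summand is the scalar $[\eta\nabla V_g(\theta_N')]^T R(\tau_i,g)u_i$ times the vector $u_i$, and the identity $(a^Tb)\,c=(c\,b^T)\,a$ with $a=\eta\nabla V_g(\theta_N')$, $b=R(\tau_i,g)u_i$, $c=u_i$ turns each summand into $\eta R(\tau_i,g)\,u_iu_i^T\,\nabla V_g(\theta_N')$; averaging and pulling out the matrix gives $\left(\eta\frac{1}{N}\sum_{i=1}^N R(\tau_i,g)u_iu_i^T\right)\nabla V_g(\theta_N')$. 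Substituting the derivatives into term (ii) gives $\nabla V_g(\theta_N')+\left(\frac{1}{N}\sum_{i=1}^N R(\tau_i,g)\nabla_\theta^2\log p_{\theta,g}(\tau_i)\right)\eta\,\nabla V_g(\theta_N')$, and factoring $\nabla V_g(\theta_N')$ on the right yields $\left(I+\eta\frac{1}{N}\sum_{i=1}^N R(\tau_i,g)\nabla_\theta^2\log p_{\theta,g}(\tau_i)\right)\nabla V_g(\theta_N')$. Their sum is the first displayed expression.

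Finally, for the claim that Eqn~\ref{eq:biased-emaml} shares this expectation, I would note that $\nabla V_g(\theta_N')$ enters both terms linearly and is the only place the true gradient appears. Replacing it by a conditionally unbiased estimate $\nabla\hat{V}_g(\theta_N')$ — formed from fresh trajectories under $\pi_{\theta_N'}$, hence conditionally independent of $(\tau_i)_{i=1}^N$ given $\theta_N'$ — and taking the tower expectation (condition on $(\tau_i)_{i=1}^N$, apply $\mathbb{E}[\nabla\hat{V}_g(\theta_N')\mid(\tau_i)_{i=1}^N]=\nabla V_g(\theta_N')$, then take the outer expectation) leaves the expectation unchanged. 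I expect the main obstacle to be purely the bookkeeping: threading the $\eta$ through the chain rule and correctly converting the scalar-times-vector $([\nabla V_g]^Tu_i)u_i$ into the matrix-vector form $u_iu_i^T\nabla V_g$, together with being careful that $\nabla\hat{V}_g$ is constructed independently of the inner-loop samples so the tower step is valid.
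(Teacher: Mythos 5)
Your proposal is correct and follows essentially the same route as the paper: specialize the generalized LSF estimate of Eqn~\ref{eq:general-mc-n-sample-lsf} under the conversion $X_i=\tau_i$, $\phi(\tau_i,\theta)=R(\tau_i,g)u_i$, $f(\bar{\phi}_N,\theta)=V_g(\theta+\eta\bar{\phi}_N)$, and then argue the second claim by the law of total expectation with conditionally unbiased estimates $\hat{V}_g,\nabla\hat{V}_g$ (the paper invokes the same tower argument via the proof of Corollary~\ref{coro:emaml-gradient}). You simply spell out the derivative bookkeeping and the identity $(a^Tb)\,c=(c\,b^T)a$ that the paper leaves implicit, and you correctly read the missing $+$ in the first display as a typo.
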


Since Eqn~\ref{eq:biased-emaml} can be computed via a similar procedure as  Lemma~\ref{coro:emaml-gradient}, we call it the generalized LSF gradient estimate to the meta-RL objective. 
While the unbiased SF gradient estimate $\hat{J}_{N,\text{SF}}^{(ii)}$ has high variance when $N$ is large, the LSF gradient estimate $\hat{J}_{N,\text{LSF}}^{(i)}$ achieves a good trade-off between bias and variance. We will show how such trade-off impacts the convergence analysis in Sec~\ref{sec:convergence}.

\paragraph{Connections to Hessian estimation.} We can rewrite the LSF gradient estimate in Eqn~\ref{eq:biased-emaml} as $(I+\eta \hat{H}_N(\theta) \nabla ) \hat{V}_g(\theta_N')$, where $\hat{H}_N(\theta)=\frac{1}{N}\sum_{i=1}^N R(\tau_i,g)\left( u_i u_i^T + \nabla_\theta^2 \log p_{\theta,g}(\tau_i) \right)$. It turns out that $I+\eta \hat{H}_N(\theta)$ is an unbiased estimate to the Hessian matrix $\mathbb{E}[\hat{H}_N(\theta)]=\nabla^2 V_g(\theta)$. Most prior work focus on variance reduction for estimating this function \citep{foerster2018dice,mao2019baseline,rothfuss2018promp,tang2021unifying}.

\paragraph{Connections to exact gradient for meta-RL objective $F_\infty(\theta,g)$.} It is now worthwhile to contrast the generalized LSF gradient estimate to the gradient of $J_\infty(\theta,g)\coloneqq \nabla_\theta F_\infty(\theta,g)$.
\begin{restatable}{corollary}{coromamlgradientequiv}\label{coro:maml-gradient-equiv}Let $u_i\coloneqq \nabla_\theta \log p_{\theta,g}(\tau_i)$ and $\theta'=\theta+\eta\mathbb{E}_{\tau\sim p_{\theta,g}}[R(\tau,g)\nabla_\theta \log p_{\theta,g}(\tau)]$ be the updated parameter with exact PG ascent. In the following, let $(\tau_i)_{i=1}^N\sim p_{\theta,g}$ i.i.d., then $J_{\infty}(\theta,g)$ is 
\begin{align}
     \underbrace{\mathbb{E}_{(\tau_i)_{i=1}^N}\left[\eta \frac{1}{N}\sum_{i=1}^N R(\tau_i,g) u_i u_i^T \nabla V_g(\theta')\right]}_{\eqqcolon J_{\infty}^{(i)}(\theta,g)} + \underbrace{\mathbb{E}_{(\tau_i)_{i=1}^N}\left[\left(I+\eta \frac{1}{N}\sum_{i=1}^N R(\tau_i,g) \nabla_\theta^2 \log p_{\theta,g}(\tau_i)\right) \nabla V_g(\theta') \right]}_{\eqqcolon J_{\infty}^{(ii)}(\theta,g)} ,
    \label{eq:maml-gradient-equiv}
\end{align}
\end{restatable}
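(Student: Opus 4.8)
The plan is to compute $J_\infty(\theta,g) = \nabla_\theta F_\infty(\theta,g)$ directly by differentiating the composition $F_\infty(\theta,g) = V_g(\theta')$ with $\theta' = \theta + \eta\nabla_\theta V_g(\theta)$, and then to reorganize the result into the two advertised terms. First I would apply the chain rule. Since $\theta'$ depends on $\theta$ both through the leading identity and through the inner policy gradient, its Jacobian is $\nabla_\theta \theta' = I + \eta \nabla_\theta^2 V_g(\theta)$, so the chain rule gives $J_\infty(\theta,g) = \left(I + \eta\nabla_\theta^2 V_g(\theta)\right)\nabla V_g(\theta')$. At this point everything hinges on an explicit formula for the value-function Hessian $\nabla_\theta^2 V_g(\theta)$.

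Second, I would derive that Hessian from the score-function form of the policy gradient $\nabla_\theta V_g(\theta) = \mathbb{E}_{\tau\sim p_{\theta,g}}[R(\tau,g)\,\nabla_\theta\log p_{\theta,g}(\tau)]$. Writing this as an integral against $p_{\theta,g}$ and differentiating once more, I would use the log-derivative identity $\nabla_\theta p_{\theta,g}(\tau) = p_{\theta,g}(\tau)\,\nabla_\theta\log p_{\theta,g}(\tau)$ together with the product rule. The first piece (differentiating the density) produces the outer-product term and the second piece (differentiating $\nabla_\theta\log p_{\theta,g}(\tau)$) produces the log-Hessian term, yielding $\nabla_\theta^2 V_g(\theta) = \mathbb{E}_{\tau}[R(\tau,g)(u u^T + \nabla_\theta^2\log p_{\theta,g}(\tau))]$ with $u = \nabla_\theta\log p_{\theta,g}(\tau)$. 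This is the key computational step and the only place where care is genuinely required: one must track the transpose on the outer product and justify interchanging differentiation and expectation, which is licensed here by the finite horizon, bounded reward, and smoothness of $\pi_\theta$ assumed in Section~\ref{sec:background}.

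Third, I would substitute this Hessian back into the chain-rule expression and split along the decomposition $uu^T + \nabla_\theta^2\log p_{\theta,g}$: the $\eta\,\mathbb{E}_\tau[R\,uu^T]\nabla V_g(\theta')$ piece becomes $J_\infty^{(i)}$, and the $\left(I + \eta\,\mathbb{E}_\tau[R\,\nabla_\theta^2\log p_{\theta,g}]\right)\nabla V_g(\theta')$ piece becomes $J_\infty^{(ii)}$. The final, purely cosmetic, step is to rewrite each single-sample expectation $\mathbb{E}_\tau[\cdot]$ as the expectation of an $N$-sample empirical average $\mathbb{E}_{(\tau_i)_{i=1}^N}[\tfrac{1}{N}\sum_{i=1}^N(\cdot)]$; this is valid because the $\tau_i$ are i.i.d.\ and $\theta'$ is deterministic (it is the exact-PG update, not the $N$-sample one), so the average of $N$ identically distributed terms has the same mean as a single term. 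Matching this to the stated forms completes the argument. I do not expect a genuine obstacle: the entire content is the Hessian computation, and the $\tfrac{1}{N}\sum_i$ wrapper is introduced only to align the formula with the LSF estimate of Corollary~\ref{coro:lst-maml} for the subsequent comparison.
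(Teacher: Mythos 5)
Your proposal is correct and follows essentially the same route as the paper's proof: apply the chain rule to get $J_\infty(\theta,g) = \left(I+\eta \nabla^2 V_g(\theta)\right)\nabla V_g(\theta')$, expand the value-function Hessian as $\mathbb{E}_\tau\left[R(\tau,g)\left(u u^T + \nabla_\theta^2 \log p_{\theta,g}(\tau)\right)\right]$, and rewrite the single-sample expectations as $N$-sample empirical averages using the i.i.d.\ structure and the fact that $\theta'$ is deterministic. The only difference is that you spell out the log-derivative/product-rule derivation of the Hessian and the justification for the $N$-sample rewrap, both of which the paper states more tersely.
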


 Here, since $\theta'$ is the updated parameter resulting from exact PG ascent, it is not easy to construct unbiased estimate to $J_\infty(\theta,g)$. This is because even if we can compute $\theta_N'$ as $N$-sample unbiased estimate to $\theta'$, in general we still have $\nabla V_g(\theta')\neq \mathbb{E}[\nabla V_g(\theta_N')]$. However, note that there are similarities between the parametric forms of $\hat{J}_{N,\text{LSF}}(\theta,g)$ and $J_\infty(\theta,g)$. We can interpret $\hat{J}_{N,\text{LSF}}(\theta,g)$ as also a biased estimate to $\hat{J}_{N,\text{LSF}}(\theta,g)$, obtained by replacing $\theta'$ with $\theta_N'$.

\section{Discussion on prior work}
\label{sec:prior}

\paragraph{$N$-sample meta-RL objective.} As noted earlier, the $N$-sample meta-RL objective was considered in both empirical  \citep{al2017continuous,rothfuss2018promp} and theoretical contexts \citep{fallah2020provably}. This objective is of practical interest because of budget on inner loop samples. The limit case $N=\infty$ was considered in the original MAML formulation of meta-RL \citep{finn2017model}.

\paragraph{Unbiased gradient to the limit case $J_\infty(\theta,g)$.} In the author's original implementation of the MAML gradient estimate with auto-differentiation libraries \citep{finn2017model}, a term equivalent to $J_\infty^{(i)}(\theta,g)$ was unintentionally dropped, resulting in a biased estimate. This fuels the motivation for a number of follow-up work to derive unbiased gradients \citep{foerster2018dice,liu2019taming}. However, they are \textbf{biased} in general. This is mainly because practical algorithms can only estimate $\nabla_g V_g(\theta_N')$ instead of $\nabla_g V_g(\theta')$, while the latter is required to estimate $J_\infty(\theta,g)$ in an unbiased way. This observation was also hinted at recently in \citep{tang2021unifying}.

\paragraph{Prior work in fact constructs the LSF gradient estimate.}
Since most prior work derive meta-RL gradient estimates based on $J_\infty(\theta,g)$ \citep{foerster2018dice,rothfuss2018promp,liu2019taming,tang2021unifying}, and due to the \emph{accidental} replacement of $\theta'$ by $\theta_N'$, we conclude that they in fact construct variants of the LSF gradient estimate (see comments following Corollary~\ref{coro:maml-gradient-equiv}). In particular, they construct $\hat{J}$ such that $\mathbb{E}[\hat{J}]=\mathbb{E}[\hat{J}_{N,\text{LSF}}(\theta,g)]$ but with potentially lower variance. All of them focus on reducing variance of $\hat{H}_N(\theta)$. Though in theory reducing the variance of $\hat{H}_N(\theta)$ does not necessarily guarantee improvements, in practice, this seems to be very critical. Variance reduction methods include control variates \citep{liu2019taming}, as well as introducing further bias to the estimate of $\hat{H}_N(\theta)$  \citep{rothfuss2018promp,tang2021unifying}.

\paragraph{Unbiased gradient estimate to $N$-sample meta-RL objective.}

The exact gradient and unbiased gradient estimate to $N$-sample meta-RL objective was derived in \citep{al2017continuous,rothfuss2018promp,fallah2020provably}. A comprehensive derivation was carried out in \citep{rothfuss2018promp}, where they contrasted $J_\infty(\theta,g)$ with $J_N(\theta,g)$. However, they claimed that $J_\infty^{(ii)}(\theta,g)=J_N^{(ii)}(\theta,g)$, which is \textbf{not} true. Our derivation shows that $J_\infty^{(ii)}(\theta,g)\neq J_N^{(ii)}(\theta,g)$ in general because $\mathbb{E}[\nabla V_g(\theta_N')]\neq \nabla V_g(\theta')$. 

\paragraph{Convergence analysis of gradient-based meta-learning and meta-RL.}

Due to the highly complex objective landscape of meta learning, most theoretical analysis focuses on convergence to stationary points. 
Recently, \citep{fallah2020convergence} established generic convergence guarantees for gradient-based meta-learning algorithms for supervised learning with one inner loop update. Recently, \citep{ji2020multi}  extended the analysis to multi-step inner loop updates. For meta-RL, \citep{fallah2020provably} established convergence for the $N$-sample meta-RL objective. They motivated the objective in a similar manner as \citep{al2017continuous,rothfuss2018promp} and constructed unbiased estimates exactly as the generalized SF gradient estimate $\hat{J}_{N,\text{SF}}(\theta,g)$. However, since the estimate has variance linear in $N$, the final guarantee becomes less applicable in practice. Contrast to this work, we show how the biased generalized LSF gradient estimate achieves performance guarantee with more desirable dependency on $N$.

\section{ Convergence Analysis with Linearized Score Function Gradient Estimate}
\label{sec:convergence}

We start by presenting the meta-RL full algorithm with generalized LSF gradient estimate. Note that the pseudocode also closely relates to how practical algorithms are implemented (see more in Appendix~\ref{appendix:exp}).

\subsection{Full Algorithm and Key Assumptions}

The full meta-RL algorithm is in Algorithm 1. There are two important notes on the details: 
(1) We instantiate the unbiased gradient estimate $\nabla V_g(\theta_N')$ by $M$-sample PG estimates with trajectories collected under the updated parameter $\theta_N'$; (2) So far we have focused on presenting gradient estimate for a single task $g$. In practice, we sample a batch of $B$ tasks $(g_i)_{i=1}^B$ and compute gradient estimate for each $\hat{J}_{N,\text{LSF}}(\theta,g_i)$. The overall gradient $\hat{J}_{N,\text{LSF}}$ is averaged across tasks, which is then used for the final update $\theta_{t+1}=\theta_t+\alpha\hat{J}_{N,\text{LSF}} $ at each iteration.

\begin{algorithm}[h]
\label{algo:evaluation-subroutine}
\begin{algorithmic}
\REQUIRE \textbf{Inputs}: Hyper-parameters: batch sizes $(B,N,M)$. Step size $\eta$. Initial parameter $\theta_1=\theta$. 
\FOR{$t=1,2...$}
\STATE \textbf{Inner loop sampling}. Sample $B$ task variables $g_i$ and $N$ trajectories under $(\tau_{i,j})_{j=1}^N\sim p_{\theta,g_i}$.
\STATE \textbf{Inner update.} Compute inner loop update $\theta_{i,N}'=\theta_t+\eta\frac{1}{N}\sum_{j=1}^N R(\tau_{i,j},g_i)\nabla_\theta \log p_{\theta,g_i}(\tau_{i,j})$.
\STATE \textbf{Outer sampling at adapted parameters.} Collect $M$ trajectories $(\tau_{i,k}')_{k=1}^M \sim p_{\theta_{i,N}',g_i}$ for the outer loop PG estimate $\nabla_\theta \hat{V}_{g_i}(\theta_{i,N}')=\frac{1}{M}\sum_{k=1}^M R(\tau_{i,k}',g_i)\nabla_\theta \log p_{\theta,g_i}(\tau_{i,k}')$.
\STATE \textbf{Gradient estimate and update.} Compute  $\hat{J}_{N,\text{LSF}}(\theta,g_i)$ based on Eqn~\ref{eq:biased-emaml}. Then compute $\hat{J}_{N,\text{LSF}}=\frac{1}{B}\sum_{i=1}^B \hat{J}_{N,\text{LSF}}(\theta,g_i)$ as the full estimate. Update outer loop $\theta_{t+1}=\theta_t+\alpha\hat{J}_{N,\text{LSF}}$.
\ENDFOR
\STATE Output trained meta-RL policy $\pi_\theta$.
\caption{$N$-sample meta-RL algorithm with linearized SF gradient estimate}
\end{algorithmic}
\end{algorithm}

We need a few common assumptions \citep{fallah2020provably} for theoretical analysis. 

\begin{assumption}
\label{assum:common} 
(Smooth parameterization assumptions) For all $s\in\mathcal{S},a\in\mathcal{A},g\in\mathcal{G}$ and $\theta\in\mathbb{R}^D$, $\left\lVert \nabla_\theta \log \pi_\theta(a|s,g) \right\rVert_2\leq G_1$ and $\left\lVert \nabla^2_\theta \log \pi_\theta(a|s,g) \right\rVert_2\leq G_2$\footnote{See Appendix~\ref{appendix:norms} for definitions of tensor norms and variance.}.
\end{assumption}

In addition, we impose a smoothness condition on the value function. This could be converted into an equivalent assumption on the parameterization.
\begin{assumption}
\label{assum:common2} 
(Smooth value function assumption) For all $g\in\mathcal{G},\theta\in\mathbb{R}^D$, $\left\lVert \nabla^3 V_g(\theta) \right\rVert_2 \leq L$.
\end{assumption}

All the above assumptions can be conveniently verified for e.g., tabular MDP (finite $\mathcal{X},\mathcal{A}$ and $\mathcal{G}$) with soft-max parameterization of the policy, where $\pi_\theta(a|s,g)\propto \exp(\theta(s,a,g))$ with parameter $\theta=\{\theta(s,a,g)\}$.

\subsection{Performance Guarantee}

We now provide performance guarantees of the LSF gradient estimate. It is worth noting that since we are interested in the dependency on $N$, the analysis does not necessarily obtain the optimal dependency on other problem parameters (such as the parameter dimension $D$ or horizon $H$). We leave potential improvements to future work.

The meta-RL objective takes an average over the  parameter-independent distribution and hence its gradient $J_N(\theta)\coloneqq \mathbb{E}_g[J_N(\theta,g)]$.  As previously discussed, since $\mathbb{E}[\hat{J}_{N,\text{LSF}}(\theta,g)]\neq J_N(\theta,g)$, the generalized LSF gradient estimate $\hat{J}_{N,\text{LSF}}(\theta)$ is biased in general. We start by characterizing its bias against $J_N(\theta)$. Our results below characterize the dependency of various quantities on $N$, and folding other constants into $\mathcal{O}(1)$. See Appendix~\ref{appendix:proof} for concrete dependencies on other constants in our analysis.

\begin{restatable}{proposition}{propbiasthird}\label{prop:bias3} For all $\theta\in\mathbb{R}^D$,
$
     \left\lVert \mathbb{E}[\hat{J}_{N,\text{LSF}}(\theta)]-J_N(\theta)\right\rVert_2  = \mathcal{O}(1/\sqrt{N})$.
\end{restatable}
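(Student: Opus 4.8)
The plan is to reduce the statement to a single task and then average. Since $J_N(\theta)=\mathbb{E}_g[J_N(\theta,g)]$ and $\mathbb{E}[\hat{J}_{N,\text{LSF}}(\theta)]=\mathbb{E}_g[\mathbb{E}[\hat{J}_{N,\text{LSF}}(\theta,g)]]$, by the triangle inequality it suffices to bound $\lVert\mathbb{E}[\hat{J}_{N,\text{LSF}}(\theta,g)]-J_N(\theta,g)\rVert_2$ uniformly in $g$. Comparing term (ii) of Corollary~\ref{coro:lst-maml} with $J_N^{(ii)}$ in Lemma~\ref{lemma:emaml-gradient}, the two coincide in expectation, so the entire bias comes from term (i). I would then pass to the generic $N$-sample additive MC identification $\phi(X_i,\theta)=R(\tau_i,g)u_i$ and $f(\bar{\phi}_N)=V_g(\theta_N')$, under which $\nabla f(\bar{\phi}_N)=\eta\nabla V_g(\theta_N')$ and the natural reference point $\bar{\phi}\coloneqq\mathbb{E}[\phi(X,\theta)]=\nabla V_g(\theta)$ is exactly the full policy gradient (so $\theta+\eta\bar{\phi}=\theta'$). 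In this notation $J_N^{(i)}=\mathbb{E}[f(\bar{\phi}_N)\sum_i u_i]$ by Lemma~\ref{lemma:emaml-gradient}, while $\mathbb{E}[\hat{J}_{N,\text{LSF}}^{(i)}]=\mathbb{E}[\frac{1}{N}\sum_i[\nabla f(\bar{\phi}_N)]^T\phi(X_i)u_i]$.

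The core of the argument is to make the Taylor decomposition used to motivate LSF in Sec~\ref{sec:linear} quantitative. Writing $s\coloneqq\sum_i u_i$ (so $\mathbb{E}[s]=0$) and $R_N\coloneqq f(\bar{\phi}_N)-f(\bar{\phi})-[\nabla f(\bar{\phi})]^T(\bar{\phi}_N-\bar{\phi})$ for the second-order Taylor remainder, I would first record the exact identity $\mathbb{E}[\frac{1}{N}\sum_i[\nabla f(\bar{\phi})]^T\phi(X_i)u_i]=\mathbb{E}[[\nabla f(\bar{\phi})]^T(\bar{\phi}_N-\bar{\phi})s]$: all cross terms $i\neq j$ vanish by independence and $\mathbb{E}[u_i]=0$, leaving only the diagonal. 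Since $\mathbb{E}[f(\bar{\phi})s]=0$ as well, subtracting the two expressions yields the clean two-source decomposition
\begin{align*}
\mathbb{E}[\hat{J}_{N,\text{LSF}}^{(i)}]-J_N^{(i)}=\underbrace{\mathbb{E}\Big[\tfrac{1}{N}\textstyle\sum_i[\nabla f(\bar{\phi}_N)-\nabla f(\bar{\phi})]^T\phi(X_i)u_i\Big]}_{\text{gradient-proxy error}}-\underbrace{\mathbb{E}[R_N\,s]}_{\text{residual error}}.
\end{align*}

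I would bound both terms using boundedness and moment estimates drawn from the assumptions. Under Assumption~\ref{assum:common} and bounded rewards, $\lVert\phi(X,\theta)\rVert_2$ and $\lVert u_i\rVert_2$ are bounded by constants, whence $\mathbb{E}[\lVert\bar{\phi}_N-\bar{\phi}\rVert_2^2]=\mathcal{O}(1/N)$, $\mathbb{E}[\lVert\bar{\phi}_N-\bar{\phi}\rVert_2^4]=\mathcal{O}(1/N^2)$ (sample-mean moments of bounded i.i.d. variables), and $\mathbb{E}[\lVert s\rVert_2^2]=\sum_i\mathbb{E}[\lVert u_i\rVert_2^2]=\mathcal{O}(N)$; Assumption~\ref{assum:common2} together with the standard bound $\lVert\nabla^2 V_g\rVert_2\leq\ell$ makes $\nabla f$ Lipschitz and gives $|R_N|\leq\tfrac{1}{2}\lVert\nabla^2 f\rVert_2\lVert\bar{\phi}_N-\bar{\phi}\rVert_2^2$. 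The gradient-proxy error is then $\mathcal{O}(1)\cdot\mathbb{E}[\lVert\bar{\phi}_N-\bar{\phi}\rVert_2]\leq\mathcal{O}(1)\sqrt{\mathbb{E}[\lVert\bar{\phi}_N-\bar{\phi}\rVert_2^2]}=\mathcal{O}(1/\sqrt{N})$, and the residual error by Cauchy--Schwarz is $\mathcal{O}(1)\cdot\sqrt{\mathbb{E}[\lVert\bar{\phi}_N-\bar{\phi}\rVert_2^4]}\sqrt{\mathbb{E}[\lVert s\rVert_2^2]}=\mathcal{O}(1/N)\cdot\mathcal{O}(\sqrt{N})=\mathcal{O}(1/\sqrt{N})$. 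Summing and averaging over $g$ gives the claim.

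The step I expect to be the main obstacle is the residual error. The naive worry is that $\mathbb{E}[R_N s]$ could be $\Theta(1)$, because the $\mathcal{O}(1/N)$-sized remainder $R_N$ multiplies the score-sum $s$ whose norm is of order $\sqrt{N}$; the resolution is to pair $\lVert\bar{\phi}_N-\bar{\phi}\rVert_2^2$ against $\lVert s\rVert_2$ through Cauchy--Schwarz in the $L^2$/fourth-moment sense, which produces exactly the $1/\sqrt{N}$ rate the proposition asserts (a sharper index-by-index expansion, exploiting that $\mathbb{E}[u_i]=0$ annihilates all off-diagonal contributions, would in fact give $\mathcal{O}(1/N)$, but the Cauchy--Schwarz route already suffices and is more robust). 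The remaining bookkeeping is establishing the uniform-in-$g$ boundedness of the derivatives of $V_g$ from Assumptions~\ref{assum:common}--\ref{assum:common2}, which is routine but must be carried out to instantiate the constants.
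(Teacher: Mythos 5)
Your proof is correct, and it is organized genuinely differently from the paper's. The paper proves this proposition by a triangle inequality through the limit gradient $J_\infty(\theta)$: Proposition~\ref{prop:bias1} bounds $\left\lVert \mathbb{E}[\hat{J}_{N,\text{LSF}}(\theta)]-J_\infty(\theta)\right\rVert_2$ and Proposition~\ref{prop:bias2} bounds $\left\lVert J_\infty(\theta)-J_N(\theta)\right\rVert_2$, each by $\mathcal{O}(1/\sqrt{N})$. You instead compare $\mathbb{E}[\hat{J}_{N,\text{LSF}}(\theta,g)]$ to $J_N(\theta,g)$ directly and never invoke $J_\infty$. That said, the technical core of your argument closely parallels the paper's proof of Proposition~\ref{prop:bias2} (which, despite its statement, in fact bounds the Proposition~\ref{prop:bias3} quantity $\mathbb{E}[\hat{J}_{N,\text{LSF}}(\theta,g)]-J_N(\theta,g)$): your gradient-proxy error, rewritten in meta-RL notation, is exactly its part (i), $\mathbb{E}\bigl[\bigl(\eta \frac{1}{N}\sum_i R(\tau_i,g)u_iu_i^T\bigr)\bigl(\nabla V_g(\theta_N')-\nabla V_g(\theta')\bigr)\bigr]$, while your single remainder term $\mathbb{E}[R_N s]$ lumps together its parts (ii) and (iii), the quadratic and cubic Taylor terms. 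Your route buys three things: (a) it bypasses $J_\infty$ and Proposition~\ref{prop:bias1} entirely; (b) by stopping at a second-order Lagrange remainder and pairing $\left\lVert \bar{\phi}_N-\bar{\phi}\right\rVert_2^2$ against $\left\lVert s\right\rVert_2$ via Cauchy--Schwarz with fourth moments, you only need the Hessian bound $\left\lVert \nabla^2 V_g\right\rVert_2\leq L_1$ (which follows from Assumption~\ref{assum:common} via Proposition~\ref{lemma:implication}), whereas the paper's third-order expansion additionally requires the $\nabla^3 V_g$ bound of Assumption~\ref{assum:common2} and the third-moment constant $\sigma_3$; (c) the argument lives in the generic $N$-sample additive MC framework, so it gives a bias bound for the LSF estimate beyond the meta-RL instantiation. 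What the paper's organization buys is reuse: Theorem~\ref{prop:main} runs the biased-SGD analysis against $J_\infty$ (using its Lipschitz smoothness and the bias relative to $J_\infty$) and then transfers the guarantee back to $J_N$, so both halves of the triangle inequality are needed elsewhere anyway; moreover its exact zero-mean accounting of the quadratic term shows that piece is $\mathcal{O}(1/N)$, consistent with your parenthetical remark that the residual can be sharpened.
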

The bias is benign as it vanishes when $N$ is large.
We next characterize the variance of the estimate.
\begin{restatable}{proposition}{propvariance} \label{prop:variance}
For all $\theta\in\mathbb{R}^D$, $\mathbb{V}\left[\hat{J}_{N,\text{LSF}}(\theta)\right]=\underbrace{ \mathcal{O}(1/M)+\mathcal{O}(1/B)}_{\mathcal{O}(1)}+\mathcal{O}(1/N)$.
\end{restatable}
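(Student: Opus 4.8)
The plan is to exploit the compact Hessian form of the per-task estimate and then peel off the three sources of randomness one at a time with the law of total variance. Using the ``Connections to Hessian estimation'' identity following Eqn~\ref{eq:biased-emaml}, I would first rewrite the per-task estimate as $\hat{J}_{N,\text{LSF}}(\theta,g)=(I+\eta\hat{H}_N(\theta))\,\nabla\hat{V}_g(\theta_N')$, where $\hat{H}_N(\theta)=\frac1N\sum_{j=1}^N R(\tau_j,g)(u_ju_j^T+\nabla_\theta^2\log p_{\theta,g}(\tau_j))$ and $\nabla\hat{V}_g(\theta_N')$ is the $M$-sample outer PG estimate. Since the $B$ per-task estimates $\hat{J}_{N,\text{LSF}}(\theta,g_i)$ are i.i.d., averaging gives $\mathbb{V}[\hat{J}_{N,\text{LSF}}(\theta)]=\tfrac1B\,\mathbb{V}[\hat{J}_{N,\text{LSF}}(\theta,g_1)]$, so it suffices to bound the per-task variance by $\mathcal{O}(1)+\mathcal{O}(1/M)+\mathcal{O}(1/N)$: the leading $\mathcal{O}(1)$ task term produces the advertised $\mathcal{O}(1/B)$, while the $M$- and $N$-terms are only improved by the $1/B$ factor and so remain bounded by $\mathcal{O}(1/M)$ and $\mathcal{O}(1/N)$.

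For the per-task variance I would condition on $\mathcal{G}\coloneqq\sigma(g_1,(\tau_{1,j})_{j=1}^N)$, i.e.\ the task and the inner-loop trajectories, and apply $\mathbb{V}[\hat{J}]=\mathbb{E}[\mathbb{V}[\hat{J}\mid\mathcal{G}]]+\mathbb{V}[\mathbb{E}[\hat{J}\mid\mathcal{G}]]$. Given $\mathcal{G}$, the matrix $I+\eta\hat{H}_N(\theta)$ is fixed and $\nabla\hat{V}_g(\theta_N')$ is a fresh average of $M$ i.i.d.\ PG terms, each with bounded variance by Assumption~\ref{assum:common} and boundedness of the return; hence $\mathbb{V}[\hat{J}\mid\mathcal{G}]\le\lVert I+\eta\hat{H}_N(\theta)\rVert_2^2\,\mathcal{O}(1/M)$. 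Because $\hat{H}_N$ is an average of a.s.-bounded terms (again Assumptions~\ref{assum:common}--\ref{assum:common2} and bounded return), $\mathbb{E}[\lVert I+\eta\hat{H}_N\rVert_2^2]=\mathcal{O}(1)$, so the first term is $\mathcal{O}(1/M)$.

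The remaining term is $\mathbb{V}[\mathbb{E}[\hat{J}\mid\mathcal{G}]]=\mathbb{V}[(I+\eta\hat{H}_N(\theta))\nabla V_g(\theta_N')]$, using the unbiasedness $\mathbb{E}[\nabla\hat{V}_g(\theta_N')\mid\mathcal{G}]=\nabla V_g(\theta_N')$. I would split this further over the task and the inner trajectories: the variance of the $g$-conditional mean is the task variance, which is $\mathcal{O}(1)$ (the estimate is bounded) and becomes the $\mathcal{O}(1/B)$ term after the $1/B$ factor, while the inner-loop part carries the key $\mathcal{O}(1/N)$ gain. Here I would write $(I+\eta\hat{H}_N)\nabla V_g(\theta_N')=\psi(\bar{Y}_N)$ as a function of the single $N$-sample average $\bar{Y}_N\coloneqq\frac1N\sum_j Y_j$, where $Y_j$ stacks $R(\tau_j,g)(u_ju_j^T+\nabla_\theta^2\log p_{\theta,g}(\tau_j))$ and $R(\tau_j,g)u_j$, so that both $\hat{H}_N$ and $\theta_N'=\theta+\eta\bar{\phi}_N$ are coordinates of $\bar{Y}_N$. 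Since the $Y_j$ are i.i.d.\ and a.s.\ bounded, $\mathbb{V}[\bar{Y}_N]=\mathcal{O}(1/N)$, and $\psi$ is Lipschitz on the a.s.-bounded convex range of $\bar{Y}_N$, its derivatives being controlled by $\lVert\nabla V_g\rVert,\lVert\nabla^2 V_g\rVert$ (finite under Assumptions~\ref{assum:common}--\ref{assum:common2}) together with the boundedness of $\hat{H}_N$. The coupling identity $\mathbb{V}[\psi(\bar{Y}_N)]=\tfrac12\mathbb{E}\lVert\psi(\bar{Y}_N)-\psi(\bar{Y}_N')\rVert_2^2\le L_\psi^2\,\mathbb{V}[\bar{Y}_N]=\mathcal{O}(1/N)$ then closes this piece. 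Collecting the three contributions and reinstating the $1/B$ factor yields $\mathbb{V}[\hat{J}_{N,\text{LSF}}(\theta)]=\mathcal{O}(1/M)+\mathcal{O}(1/B)+\mathcal{O}(1/N)$.

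I expect the last step, the $\mathcal{O}(1/N)$ inner-loop bound, to be the main obstacle, since it is precisely where the LSF estimate departs from the SF estimate. The crux is to verify that the map $\psi$ from the inner-loop averages to the gradient estimate has a Lipschitz constant independent of $N$ on the bounded domain carved out by the a.s.\ bounds of Assumption~\ref{assum:common}, so that the $\mathcal{O}(1/N)$ concentration of an \emph{average} of score-function terms, rather than the $\Theta(N)$ blow-up of a \emph{sum}, is what propagates to the estimate's variance.
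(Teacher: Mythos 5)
Your proposal is correct and follows essentially the paper's own route: a law-of-total-variance decomposition over the three sources of randomness (the $B$ tasks, the $N$ inner-loop trajectories, the $M$ outer-loop trajectories for the outer PG estimate), with the parts bounded by $\mathcal{O}(1/B)$, $\mathcal{O}(1/N)$ and $\mathcal{O}(1/M)$ using the a.s.\ bounds from Assumption~\ref{assum:common} and the smoothness constant $L_1$ established in Proposition~\ref{lemma:implication}. Two differences in execution are worth noting, both in your favor stylistically. First, you factor out $1/B$ via the i.i.d.-ness of the per-task estimates before conditioning, whereas the paper runs a three-level conditional decomposition on the $B$-averaged estimate directly; your bookkeeping is cleaner and actually yields the marginally stronger bound $\frac{1}{B}\left(\mathcal{O}(1)+\mathcal{O}(1/M)+\mathcal{O}(1/N)\right)$. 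Second, for the crucial inner-loop term the paper centers $(I+\eta\hat{H}_N)\nabla V_g(\theta_N')$ at the deterministic point $(I+\eta\nabla^2 V_g(\theta))\nabla V_g(\theta')$, splits the deviation into two cross terms, and bounds each by Cauchy--Schwarz using $\mathbb{V}[\hat{H}_N]=\mathcal{O}(1/N)$ and $\lVert \nabla V_g(\theta_N')-\nabla V_g(\theta')\rVert_2\leq L_1\lVert\theta_N'-\theta'\rVert_2$ with $\mathbb{E}\lVert\theta_N'-\theta'\rVert_2^2=\mathcal{O}(1/N)$; this is exactly an explicit, hands-on verification of your claim that the estimate is an $N$-uniformly Lipschitz function $\psi$ of the inner-loop empirical averages, so your coupling identity $\mathbb{V}[\psi(\bar{Y}_N)]\leq L_\psi^2\,\mathbb{V}[\bar{Y}_N]$ closes the same gap with the same ingredients (a.s.\ boundedness for the Lipschitz constant, $1/N$ concentration of the average). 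The only detail to tidy is that the range of $\bar{Y}_N$ need not itself be convex; take instead the convex hull of the (bounded) support of a single $Y_j$, which contains all the averages, and apply the mean value theorem there.
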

The bound $\mathcal{O}(1/M)+\mathcal{O}(1/B)$ means to show the dependency on the sample size $B$ and $M$. When they quantities do not depend on $N$, they are considered $\mathcal{O}(1)$. The three terms on the upper bound above indicate sources of randomness that contribute the variance of the generalized LSF gradient estimate $\hat{J}_{N,\text{LSF}}(\theta)$: the batch of $B$ tasks, the batch of $N$ inner loop trajectories $\tau_{ij}$ per task and the batch of $M$ trajectories $\tau_{ik}'$ for estimating outer loop PG. 

The bound is in general $\mathcal{O}(1)$ when $N$ is large. This is because in general it is not possible to get rid of the variance induced by a finite $B$ and $M$. However, wen we let $B,M\rightarrow\infty$, the total variance is of order $\mathcal{O}(1/N)$. This is consistent with the variance of the LSF gradient estimate for the $N$-sample MC objective (see Lemma~\ref{lemma:example-lsf-variance}). Now we are ready present the convergence guarantee of Algorithm 1. We show its convergence to a stationary point of the objective $J_N(\theta)$.

\begin{restatable}{theorem}{propmain}\label{prop:main} With a properly chosen learning rate in Algorithm 1, for any $\epsilon>0$, with $T_\text{LSF}=2\max\{\frac{1}{\epsilon^2+\mathcal{O}(1/N)} , \frac{\mathcal{O}(1)+\mathcal{O}(1/N)}{\epsilon^4 + \mathcal{O}(1/N^2)}\}$ iterations of the algorithm,  we have 
\begin{align*}
    \min_{1\leq t\leq T_\text{LSF}} \mathbb{E}[\left\lVert J_N(\theta_t) \right\rVert_2^2]=  \epsilon^2 + \mathcal{O}(1/N)\eqqcolon \delta_\text{LSF}.
\end{align*}
\end{restatable}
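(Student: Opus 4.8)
The plan is to treat Algorithm~1 as biased stochastic gradient ascent on the smooth objective $J_N(\theta)$ and to run the standard nonconvex convergence argument, but carefully tracking how the bias bound from Proposition~\ref{prop:bias3} and the variance bound from Proposition~\ref{prop:variance} enter the final rate. First I would establish that the true objective $\mathbb{E}_g[F_N(\theta,g)]$ is $\ell$-smooth for some constant $\ell$ independent of $N$; this should follow from Assumptions~\ref{assum:common} and~\ref{assum:common2} by bounding the relevant derivatives of $V_g(\theta_N')$ through the chain rule, noting that $\theta_N'$ is an $\eta$-perturbation of $\theta$ and that the Hessian-like term $I+\eta \hat H_N(\theta)$ is uniformly bounded under the norm assumptions. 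Smoothness is what licenses the descent lemma, which is the backbone of the whole argument.

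Next I would write the descent-lemma step. Decompose the update direction $\hat{J}_{N,\text{LSF}}$ into its conditional mean $\mathbb{E}[\hat{J}_{N,\text{LSF}}(\theta_t)]$ and a zero-mean noise part, and further split the conditional mean into the true gradient $J_N(\theta_t)$ plus the bias vector $b_t\coloneqq \mathbb{E}[\hat{J}_{N,\text{LSF}}(\theta_t)]-J_N(\theta_t)$. Applying smoothness to $\mathbb{E}[J_N(\theta_{t+1})-J_N(\theta_t)]$ and using $\alpha$ as the learning rate produces the usual terms: a positive $\alpha\,\mathbb{E}\|J_N(\theta_t)\|_2^2$ term we want to lower-bound, a cross term involving $\langle J_N(\theta_t), b_t\rangle$ that I would control via Young's inequality by $\tfrac{1}{2}\|J_N(\theta_t)\|_2^2 + \tfrac12\|b_t\|_2^2$, and a quadratic term $\tfrac{\ell\alpha^2}{2}\mathbb{E}\|\hat{J}_{N,\text{LSF}}\|_2^2$ that I would bound using $\mathbb{E}\|\hat J\|_2^2 \le \mathbb{V}[\hat J]+\|\mathbb{E}[\hat J]\|_2^2$ together with the variance bound. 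Here $\|b_t\|_2^2 = \mathcal{O}(1/N)$ by Proposition~\ref{prop:bias3} and the variance is $\mathcal{O}(1)+\mathcal{O}(1/N)$ by Proposition~\ref{prop:variance}; the crucial point is that the irreducible $\mathcal{O}(1/N)$ bias floor is exactly what gives rise to the additive $\mathcal{O}(1/N)$ term in $\delta_\text{LSF}$.

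I would then telescope the per-step inequality from $t=1$ to $T_\text{LSF}$, using boundedness of $J_N$ (its sup is an $\mathcal{O}(1)$ constant) to cancel the telescoping endpoints, and divide through by $\alpha T_\text{LSF}$. This yields an upper bound on $\min_t \mathbb{E}\|J_N(\theta_t)\|_2^2$ of the schematic form $\tfrac{C_0}{\alpha T_\text{LSF}} + \tfrac{\ell\alpha}{2}(\text{variance}) + \|b\|_2^2$. The remaining work is to choose $\alpha$ and $T_\text{LSF}$ to force the right-hand side below $\epsilon^2+\mathcal{O}(1/N)$. Setting $\alpha$ to balance the optimization term $C_0/(\alpha T_\text{LSF})$ against the variance term, and reading off the resulting iteration count, is what produces the two-regime $\max\{\cdot,\cdot\}$ expression: the first argument $\tfrac{1}{\epsilon^2+\mathcal{O}(1/N)}$ dominates when the target accuracy is coarse relative to the noise, the second $\tfrac{\mathcal{O}(1)+\mathcal{O}(1/N)}{\epsilon^4+\mathcal{O}(1/N^2)}$ when it is fine; I expect these to fall out by the standard two-case analysis of constant-vs-decaying stepsize in biased SGD.

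The main obstacle, I anticipate, is not the optimization skeleton — which is routine — but verifying the $N$-independence of the smoothness constant $\ell$ and of the $\mathcal{O}(1)$ constants absorbed into the variance and bias bounds. Concretely, one must confirm that differentiating through the random inner-loop map $\theta\mapsto\theta_N'$ and then through $V_g$ does not smuggle in a hidden factor of $N$ (via, e.g., the summed score terms $\sum_i u_i$ appearing in intermediate derivatives), so that the clean $\epsilon^2+\mathcal{O}(1/N)$ conclusion genuinely holds. This is where Assumptions~\ref{assum:common}--\ref{assum:common2} and the averaging structure of the LSF estimate (term (i) averages rather than sums) must be invoked carefully; the bulk of the technical care lives in Appendix~\ref{appendix:proof}.
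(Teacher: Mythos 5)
Your optimization skeleton (descent lemma, Young's inequality on the bias cross term, telescoping, stepsize balancing) is the standard biased-SGD argument and would yield a rate of the stated form --- but only if its key premise holds, and that premise is exactly what you flag and then leave unresolved: that $F_N(\theta)=\mathbb{E}_g[F_N(\theta,g)]$ is smooth (i.e.\ $J_N$ is Lipschitz) with a constant independent of $N$. This is a genuine gap, not a routine verification. Differentiating $J_N$ means differentiating term (i) of Lemma~\ref{lemma:emaml-gradient}, namely $\mathbb{E}\left[V_g(\theta_N')\sum_{i=1}^N \nabla_\theta \log p_{\theta,g}(\tau_i)\right]$, and the naive bounds on the resulting expressions (a sum of $N$ Hessian-of-log-density terms, plus products of the summed scores with the derivative of the map $\theta\mapsto\theta_N'$) carry explicit factors of $N$; removing them requires exploiting the zero-mean cancellations of the score sum via Taylor-expansion arguments of the same flavor as the proof of Proposition~\ref{prop:bias2}. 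Nothing in the paper establishes this smoothness, and it does not follow "for free" from Assumptions~\ref{assum:common}--\ref{assum:common2}.

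The paper's proof is structured precisely to avoid ever needing it. It does not run the analysis on $F_N$ at all: it applies the biased-SGD convergence result (Lemma~\ref{lemma:biasedsgd}, quoted from prior work rather than re-derived) to the \emph{limit} objective $F_\infty$, whose gradient $J_\infty(\theta)=\mathbb{E}_g\left[\left(I+\eta\nabla^2 V_g(\theta)\right)\nabla V_g(\theta_g')\right]$ has a closed form from which an $N$-independent smoothness constant follows directly (Proposition~\ref{prop:lipschitz}). The LSF estimate is then treated as a biased estimate of $J_\infty$, with bias $\mathcal{O}(1/\sqrt{N})$ given by Proposition~\ref{prop:bias1} (not Proposition~\ref{prop:bias3}, which you invoke), and variance from Proposition~\ref{prop:variance}; this gives $\min_t \mathbb{E}[\lVert J_\infty(\theta_t)\rVert_2^2]\leq \epsilon^2+\mathcal{O}(1/N)$. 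Stationarity is transferred back to $J_N$ only at the very end, via $\lVert J_N(\theta)-J_\infty(\theta)\rVert_2=\mathcal{O}(1/\sqrt{N})$ (Proposition~\ref{prop:bias2}) together with $\lVert a+b\rVert_2^2\leq 2\lVert a\rVert_2^2+2\lVert b\rVert_2^2$, and the final rescaling of $\epsilon$ is what produces the factor $2$ in $T_\text{LSF}$. To repair your proof you must either prove the $N$-independent smoothness of $F_N$ as a new lemma (nontrivial, for the reason above), or adopt the paper's detour: run your descent-lemma argument on $F_\infty$ with the bias measured against $J_\infty$, then convert to $J_N$ at the end.
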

It is insightful to contrast with the result of 
\citep{fallah2020provably}, where they they analyze the generalized SF gradient estimate. They show $T_\text{SF}=\mathcal{O}(1)\frac{1}{\alpha}\min\{\epsilon^{-2},\Theta(N^{-2})\}$ and $\delta_\text{SF}=\epsilon^2 + \Theta(N^3 \alpha)$, where recall that $\alpha$ is the learning rate.

\begin{figure}[t]
    \centering
    \subfigure[Bias-variance trade-off ]{\includegraphics[keepaspectratio,width=.24\textwidth]{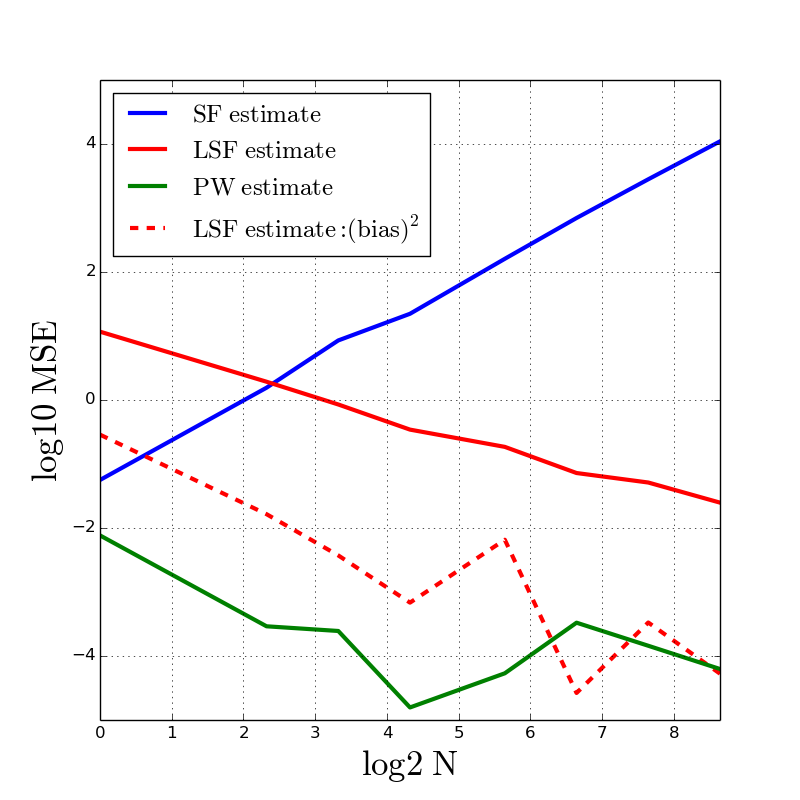}}
    \subfigure[1-D Optimization ]{\includegraphics[keepaspectratio,width=.24\textwidth]{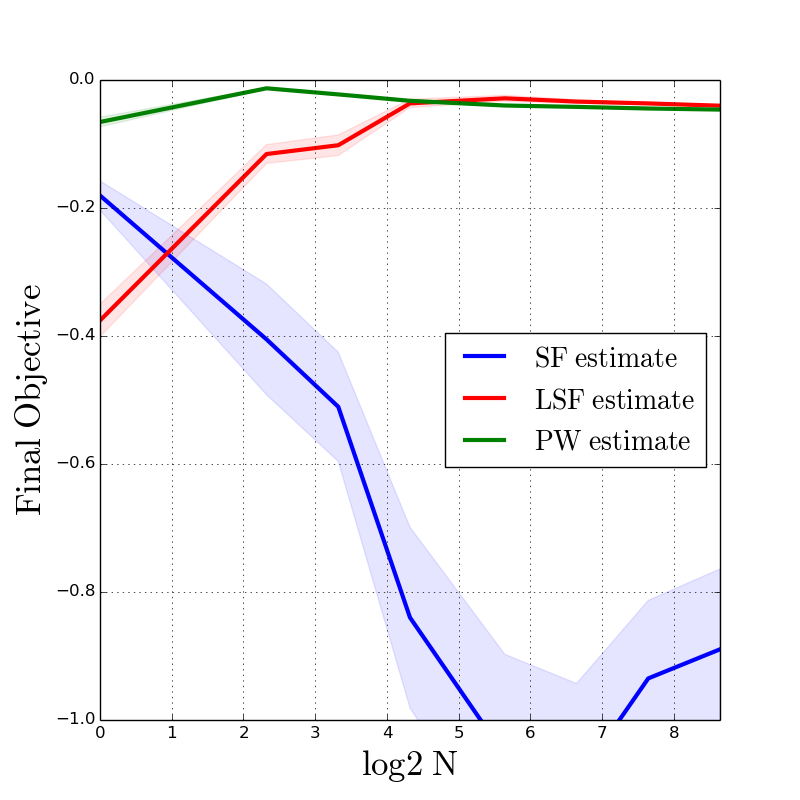}}
    \subfigure[Meta-RL HalfCheetah ]{\includegraphics[keepaspectratio,width=.24\textwidth]{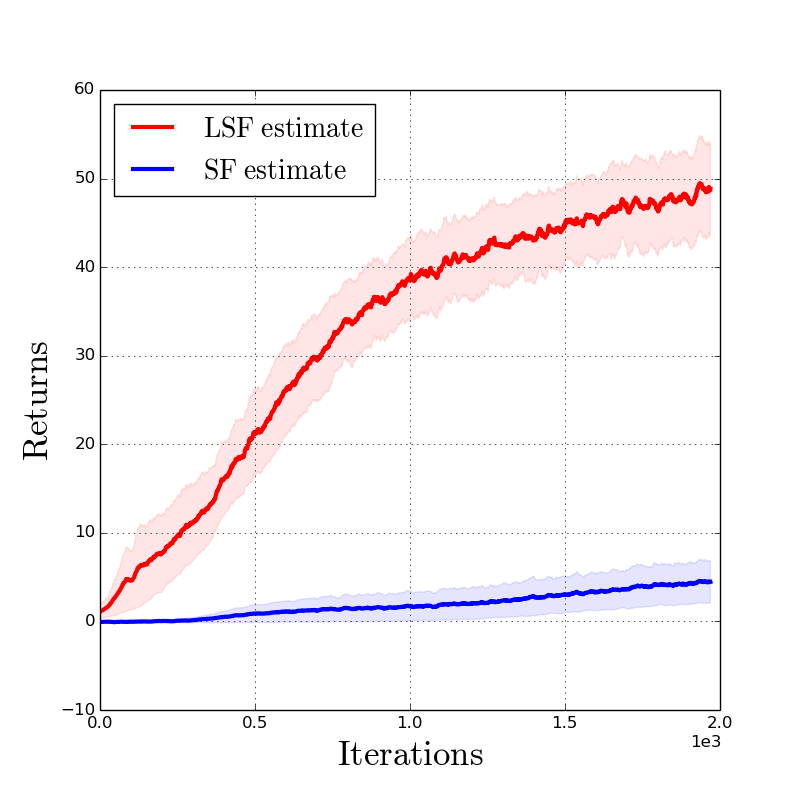}}
    \subfigure[Meta-RL Walker2D ]{\includegraphics[keepaspectratio,width=.24\textwidth]{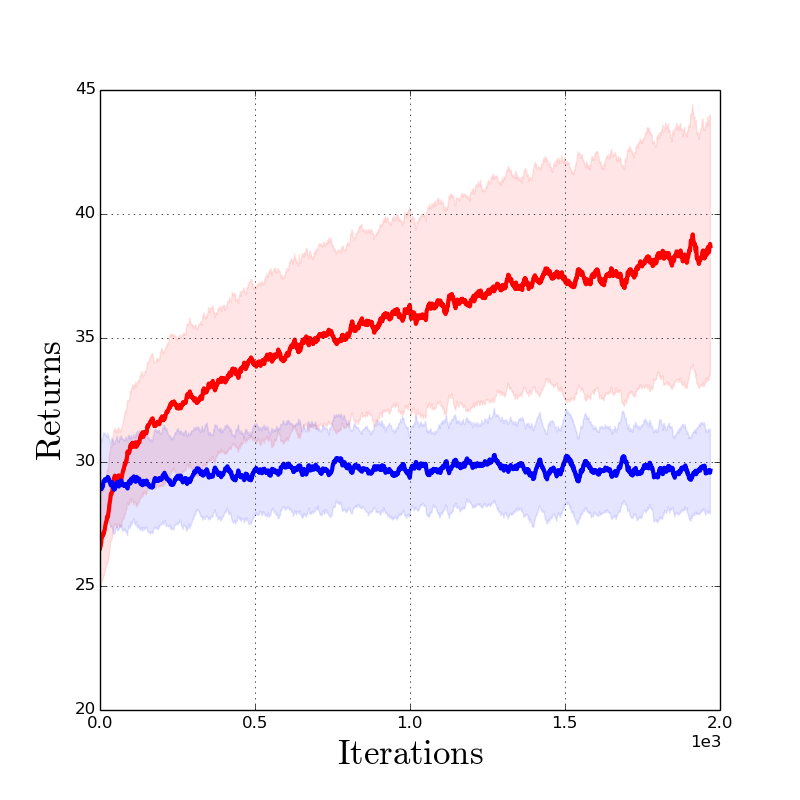}}
    \caption{(a) 1-D problem bias and variance trade-off: The MSE is computed at the initial update.
    At small $N$, LSF has higher MSE than 
    SF. However, the MSE of SF increases with $N$ due to variance;   (b) 1-D problem learning curves:  LSF outperforms SF. PW is the best when available. Each curve averages over $100$ runs; (c)-(d) High-dimensional meta-RL problems: LSF outperforms SF consistently across all tasks. Each curve averages over $5$ runs. See Appendix~\ref{appendix:exp} for more results.}
    \label{fig:exp}
\end{figure}

To see that the generalized LSF gradient estimate achieves a better dependency on $N$ than the generalized SF gradient estimate, we fix $\epsilon$ and $N$, and adjust the learning rate $\alpha$ of the SF gradient estimate. First, we require the asymptotic error to have the same dependency on $N$ by setting $\alpha=1/N^4$, in which case $\delta_\text{SF}=\epsilon^2+\Theta(1/N)$ while $\delta_\text{LSF}=\epsilon^2+\mathcal{O}(1/N)$. This implies $T_\text{SF}=\mathcal{O}(1)N^4 \min\{\epsilon^{-2},\Theta(N^{-2})\}$, which is significantly worse than $T_\text{LSF}$ when $N$ is large. Intuitively, this is because the generalized SF gradient estimate has much higher variance, which requires a very small $\alpha$ 
to achieve the same level of asymptotic error as the LSF gradient estimate. As a result, this takes the algorithm many more iterations to converge.

Equivalently, we can require both estimates to converge with the same number of iterations. Assuming $\epsilon$ is small enough such that $N$ is the dominating factor in the asymptotic error, this means we should choose $\alpha=1/N$. In this case, we have $\delta_\text{SF}=\epsilon^2 + \Theta(N^2)$, which is significantly worse than $\delta_\text{LSF}=\epsilon^2+\mathcal{O}(1/N)$. This is because in order to converge with the same number of iterations, SF gradient estimate requires a relatively large learning rate. Large learning rates prevent the algorithm from dissipating the high variance in the gradient estimate, which ultimately lead to high asymptotic errors.

\paragraph{Convergence to stationary point of $F_\infty(\theta)$.} Since $\left\lVert J_N(\theta)-J_\infty(\theta)\right\rVert_2\leq \mathcal{O}(1/N)$ (Proposition~\ref{prop:bias1}), the above result also implies that with $T_\text{LSF}$, the algorithm also obtains the stationary point of $J_\infty(\theta)$ up to error $\epsilon^2 + \mathcal{O}(1/N)$. As such, we can interpret the optimization of $F_N(\theta)$ as a proxy to $F_\infty(\theta)$ when $N$ is large.

\section{Experiments}

We carry out experiments to illustrate theoretical insights. We briefly discuss the results, see Appendix~\ref{appendix:exp} for further details on the experiments.  Below, we refer to the generalized SF (LSF) estimate also as the SF (LSF) estimate when the contexts are clear.

\paragraph{1-D optimization problem.} We maximize $L(\theta)=\mathbb{E}_{(X_i)_{i=1}^N\sim p_\theta}[-(\bar{X}_N-1)^2]$ where $\bar{X}_N$ is the average over $N$ samples and $p_\theta=\mathcal{N}(\theta,1^2)$. Fig~\ref{fig:exp}(a) shows the bias-variance trade-off of different gradient estimates. 
The mean-square error (MSE) of SF increases with $N$ due to variance (because SF is unbiased), whereas the MSE of LSF decreases with $N$ due to both decreasing variance and bias. Consistent with Lemma~\ref{lemma:example-pw-variance} and many empirical observations in prior work, PW 
achieves the lowest MSE due to low variance. Fig~\ref{fig:exp}(b) compares the final objective after a fixed number of ascent updates. It shows as $N$ increases, SF degrades significantly in performance whereas LSF improves. The performance of different estimates correlate strongly with the MSE: when $N$ is small, LSF is outperformed by SF due to the bias; when $N$ is large, LSF catches up with PW while the performance of SF degrades significantly. These observations are also consistent with discussions in Sec~\ref{sec:convergence}.

\paragraph{High-dimensional meta-RL problems.} We contrast the SF and LSF gradient estimates in meta-RL. To implement both SF and LSF, we use the Markov structure to reduce variance compared to the original "trajectory-based" estimates. This turns out to be quite critical in practice. Fig~\ref{fig:exp}(b)-(d) shows that the LSF gradient estimate outperforms the SF gradient estimate, where the high variance of the unbiased SF gradient estimate consistently hinders learning across all tasks we tested on.

Recall that the LSF gradient estimate effectively constructs an unbiased estimate to $\nabla^2 V_g(\theta)$.
As a side note, we find that biased estimates to $\nabla^2 V_g(\theta)$  \citep{finn2017model,rothfuss2018promp,tang2020taylor} leads to generally better performance. We leave the study of such observations to future work.

\section{Conclusion}
By formulating the $N$-sample meta-RL objective as a special case of $N$-sample additive MC objective, we identify the high variance ($\Theta(N)$) of naive SF gradient estimate. The LSF gradient estimate, which is biased but has low variance ($\mathcal{O}(1/N)$), achieves theoretical guarantees with much more benign dependency on $N$. As a result, our analysis suggests the necessity of employing biased gradient estimates in practice.

Meanwhile, we also make the observation that many prior work turned out to implement variants of the LSF gradient estimate. This implies that despite their claim of unbiasedness, the practical gradient estimates are almost always biased. This is consistent with the empirical results: when $N$ is large, unbiased estimates are doomed to be unstable due to high variance, while properly biased estimates perform much better.

Overall, we believe our results help better understand the subtle design choices in meta-RL practice, and might entail the design of new algorithms in future work. 

\section{Acknowledgements}
Many thanks to Mark Rowland, Mohammad Gheshlaghi Azar, Bernardo Ávila Pires and Tom Zahavy for discussing the idea. A special thank you to Botao Hao for reviewing a draft of this work.

\newpage

\bibliographystyle{iclr2019_conference}
\bibliography{your_bib_file}

\newpage
\appendix

\section{Definitions of "Big O" Notations}
\label{appendix:bigo}

For non-negative functions $f,g$, the notation $f(N)=\mathcal{O}\left(g(N)\right)$ implies that when $N$ is large enough, there exists constant $C\geq 0$ such that $f(N)\leq Cg(N)$.

The notation $f(N)=\Theta\left(g(N)\right)$ implies that $N$ is large enough, there exists constant $C\geq c\geq 0$ such that $cg(N)\leq f(N)\leq Cg(N)$.

\section{Derivation of the Linearized Score Function Gradient Estimate}
\label{appendix:derive}

Since $X_i$s are i.i.d., we expect the average $\bar{\phi}_N=\frac{1}{N}\sum_{i=1}^N \phi(X_i)$ to approach $\bar{\phi}\coloneqq \mathbb{E}\left[\phi(X_i)\right]$ as $N\rightarrow\infty$. This provides a direct motivation to consider the behavior of $f(\bar{\phi}_N)$ near the constant $\bar{\phi}$. In particular, consider the Taylor expansion of $f\left(\bar{\phi}_N\right)$ with $\bar{\phi}$ as its reference point. We decompose $f\left(\bar{\phi}_N\right)$ into three parts,
\begin{align*}
    f\left(\bar{\phi}_N\right) = 
   \underbrace{ f\left(\bar{\phi}\right)}_{\text{constant term}} + \underbrace{\left[\nabla f\left(\bar{\phi}\right)\right]^T \left[  \bar{\phi}_N - \bar{\phi}  \right]}_{\text{linear term}} + \underbrace{\mathcal{O}\left(\left\lVert \bar{\phi}-\bar{\phi}_N \right\rVert_2^2\right)}_{\text{residual  term}}
\end{align*}
If we multiply the above with $\sum_{i=1}^N \nabla_\theta \log p_\theta(X_i)$, we recover the original SF gradient estimate on the LHS. Examining, the LHS, if we drop the residual term of the Taylor expansion, this yields a new estimate,
\begin{align*}
   \underbrace{ f\left(\bar{\phi}\right)\sum_{i=1}^N \nabla_\theta \log p_\theta(X_i) }_{\text{constant term}}  +  \underbrace{\left[\nabla f\left(\bar{\phi}\right)\right]^T \left[ \bar{\phi}_N - \bar{\phi}  \right] \left(\sum_{i=1}^N \nabla_\theta \log p_\theta(X_i)\right)}_{\text{linear term}}.
\end{align*}
We will see that removing the residual term leads to a bias of order $\mathcal{O}(1/\sqrt{N})$ under some mild conditions on $f$. 

Note now that the constant term has mean zero, but is nonzero in general. This implies that this term contributes a large portion of the total variance of this new estimate. It is therefore tempting to remove this term from the estimate. In fact, removing the constant term is equivalent to augmenting the original estimate with a baseline (or control variate) $\left(f(\bar{\phi}_N) - f(\bar{\phi})\right)\sum_{i=1}^N \nabla \log p_\theta(X_i)$ \citep{ross6277simulation}. We should expect the control variate to achieve significant variance reduction when $N$ is large.

We now are left with the linear term alone as the new estimate. Note that if we count each $\nabla_\theta \log p_\theta(X_i)$ as a single term, there are a total of $N^2$ terms in the linear term. This is because we can write
\begin{align*}
    \text{linear term} = \frac{1}{N} \sum_{i=1}^N\sum_{j=1}^N \left[\nabla f(\bar{\phi})\right]^T  \left(\phi(X_j)-\bar{\phi}\right)  \nabla_\theta \log p(X_j).
\end{align*}
It is worth noting that the "off-diagonal" terms have mean zero. In other words, \begin{align*}
    \mathbb{E}\left[\left[\nabla f(\bar{\phi})\right]^T \left(\phi(X_i)-\bar{\phi}\right)  \nabla_\theta \log p(X_j)\right]=0, \ \ i\neq j.
\end{align*} 
We can remove all such terms, reducing the computations to only $N$ "diagonal terms". We expect this removal step to reduce variance in practice as well, because it reduces the number of summations from $N^2$ to $N$.
This produces a new estimate,
\begin{align*}
   \frac{1}{N}\sum_{i=1}^N  \left[\nabla f(\bar{\phi})\right]^T  \left(\phi(X_i)-\bar{\phi}\right) \nabla_\theta \log p(X_i).
\end{align*}
There are two places where $\bar{\phi}$ appears. Since $\bar{\phi}$ is not known in practice, we make two modifications to the above estimate: (1) We replace $f(\bar{\phi})$ by $f(\bar{\phi}_N)$. This will introduce further bias into the estimate; (2)
We remove the second $\bar{\phi}$ for simplicity, as it will not impact the dependency of final estimates' bias and variance on $N$. Importantly, note that since $\bar{\phi}\nabla_\theta \log p(X_i)$ has mean zero, it does not change the mean of the overall estimate. However, since $\bar{\phi}$ can be seen as a control variate, its removal can potentially increase the variance of the estimate. Combining the above modifications, we arrive at the LSF gradient estimate,
\begin{align*}
   \frac{1}{N}\sum_{i=1}^N \left[\nabla f(\bar{\phi}_N)\right]^T  \phi(X_i)\nabla_\theta \log p(X_i).
\end{align*}

\section{Toy Example for Meta-RL}
\label{appendix:toy-metarl}

We construct a toy meta-RL example that illustrates the variance property of the SF gradient estimate vs. LSF gradient estimate. We consider a MDP where the value function $V_g(\theta)\coloneqq V_0$ is a constant for all $\theta$. We also set the adaptation step size $\eta=0$. We also assume a single starting state $x_0$, and that the trajectory terminates after $H=1$ step. As a result, the trajectory consists of a single action $\tau_i\equiv a_i$. We can assume a 1-D action space $\mathcal{A}=\mathbb{R}$, and the policy $\pi(a|\theta,g)=\mathcal{N}(\theta,\sigma^2)$ is a Gaussian distribution with learnable mean $\theta$ and fixed $\sigma$. In this case, the effective objective is $L(\theta,g)=\mathbb{E}_{(a_i)_{i=1}^N \sim \mathcal{N}(\theta,\sigma^2)}[V_g(\theta)]$.

\paragraph{SF gradient estimate.} In this case, the estimate is
$
    V_g(\theta)\sum_{i=1}^N \nabla_\theta \log p_{\theta,g}(\tau_i) = V_0\sum_{i=1}^N \nabla_\theta \log p_{\theta,g}(\tau_i)
$.
The SF gradient estimate bears close resemblance to the SF gradient estimate in the $N$-sample MC estimate. Its variance is of order $\Theta(N)$.

\paragraph{LSF gradient estimate.} 
In this case, we can show that the LSF gradient estimate is effectively the PG estimate at the adapted parameter $\hat{\nabla} V_g(\theta')$ where $\theta'=\theta$. It then depends on how we construct the PG estimate. For example, since $\theta'=\theta$, we can use the $N$ samples generated under $\theta$ to estimate the PG estimate. It then naturally follows that the variance is of order $\mathcal{O}(1/N)$. 

\section{Notations for Norms and Useful Inequalities}
\label{appendix:norms}
For any tensor (vector or matrix) $X$, we define its $2$-norm as 
\begin{align*}
    \left\lVert X \right\rVert_2 \coloneqq \sqrt{\sum_i X_i^2},
\end{align*}
where $i$ sums over components of $X$. The variance is defined as the sum of the variance of its components,
\begin{align*}
    \mathbb{V}[X]\coloneqq \sum_i \mathbb{V}\left[X_i\right].
\end{align*}

We now introduce a number of useful inequalities, which will be heavily used in the proof section.
\paragraph{Operator norm and $2$-norm for matrix.} The oprator norm of a matrix $X$ is defined as: $\left\lVert X\right\rVert_{\text{op},2}\coloneqq \max_{\left\lVert u\right\rVert_2=1} \left\lVert Xu\right\rVert_2$. It is known that $\left\lVert X\right\rVert_{\text{op},2} \leq \left\lVert X\right\rVert_{2}$.

\paragraph{Exchange of norm and expectation.} For any random tensor $X$,
$
    \left\lVert \mathbb{E}\left[X\right] \right\rVert_2 \leq   \mathbb{E}\left[\left\lVert X \right\rVert_2\right]
$. The proof is based on Jensen's inequality and the fact that $2$-norm is a convex function of its argument.

\paragraph{Cauchy–Schwarz (CS) inequality for random variables.} For any two random variables $X,Y$, 
\begin{align*}
    \mathbb{E}\left[|X||Y|\right] \leq \sqrt{\mathbb{E}[X^2]}\sqrt{\mathbb{E}[Y^2]}.
\end{align*}

\paragraph{CS inequality for random matrix and vector.} For any random matrix $X\in\mathbb{R}^{H\times H}$ and random vector $Y\in\mathbb{R}^H$, then
\begin{align*}
    \left\lVert \mathbb{E}\left[XY\right]\right\rVert_2 \leq  \mathbb{E}\left[\left\lVert XY\right\rVert_2 \right] \leq
    \mathbb{E}\left[\left\lVert X\right\rVert_{\text{op},2}\left\lVert Y\right\rVert_2 \right]  \leq  \mathbb{E}\left[\left\lVert X\right\rVert_{2}\left\lVert Y\right\rVert_2 \right] \leq \sqrt{\mathbb{E}\left[\left\lVert X\right\rVert_{2}^2\right]}\sqrt{\mathbb{E}\left[\left\lVert Y\right\rVert_{2}^2\right]}.
\end{align*}
The last inequality comes from the CS inequality of scalar random variables.

\paragraph{Expected norm inequality.} For any random tensor $X$, we have 
\begin{align*}
    \mathbb{E}[\left\lVert X\right\rVert_2] \leq \sqrt{\mathbb{E}[\left\lVert X\right\rVert_2^2] }.
\end{align*}
The result follows by considering $Y\coloneqq \left\lVert X\right\rVert_2$ as a single random variable, we have $\mathbb{E}[Y]\leq \sqrt{ \mathbb{E}[Y^2]}$. 

\section{Proof}
\label{appendix:proof}

\begin{restatable}{lemma}{lemmaexampleall}\label{lemma:exampleall} In the toy $N$-sample MC objective example, \begin{align*}
    \mathbb{V}\left[\hat{\nabla}_\theta^{\text{SF}}L(\theta)\right]=\Theta(N),\mathbb{V}\left[\hat{\nabla}_\theta^{\text{LSF}}L(\theta)\right]=\mathcal{O}(1/N), \mathbb{V}\left[\hat{\nabla}_\theta^{\text{PW}}L(\theta)\right]=0.
\end{align*}
\end{restatable}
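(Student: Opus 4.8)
The plan is to prove all three variance identities by direct computation: I specialize the general estimate formulas to the toy example and then read off everything from the moments of a standard Gaussian. First I would record the ingredients. With $\phi$ and $f$ both the identity and $p_\theta=\mathcal{N}(\theta,\sigma^2)$, we have $\bar{\phi}_N=\bar{X}_N\coloneqq\frac{1}{N}\sum_i X_i$, $\nabla f\equiv 1$, and the score $\nabla_\theta\log p_\theta(x)=(x-\theta)/\sigma^2$. It is convenient to reparameterize $X_i=\theta+\sigma\zeta_i$ with $\zeta_i\sim\mathcal{N}(0,1)$ i.i.d., so that $\bar{X}_N-\theta=\sigma\bar{\zeta}_N$ and every estimate becomes an explicit polynomial in the $\zeta_i$ whose variance follows from $\mathbb{E}[\zeta_i]=0$, $\mathbb{E}[\zeta_i^2]=1$, $\mathbb{E}[\zeta_i^3]=0$, $\mathbb{E}[\zeta_i^4]=3$.

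For the SF estimate I would exploit the key cancellation that makes the variance blow up: the sum of scores telescopes, $\sum_{i=1}^N\nabla_\theta\log p_\theta(X_i)=\frac{N}{\sigma^2}(\bar{X}_N-\theta)$, so that $\hat{\nabla}^{\mathrm{SF}}_\theta L(\theta)=\frac{N}{\sigma^2}\bar{X}_N(\bar{X}_N-\theta)$. Writing $\bar{X}_N=\theta+\frac{\sigma}{\sqrt{N}}Z$ with $Z\sim\mathcal{N}(0,1)$, this collapses to $\frac{\sqrt{N}\,\theta}{\sigma}Z+Z^2$. Since $Z$ and $Z^2$ are uncorrelated (because $\mathbb{E}[Z^3]=0$), the variance is $\frac{\theta^2}{\sigma^2}N+\mathbb{V}[Z^2]=\frac{\theta^2}{\sigma^2}N+2$, which is $\Theta(N)$ for fixed $\theta\neq 0$. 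The single factor $N$ that survives after the $1/\sigma^2$ does \emph{not} get averaged away---this is precisely the obstruction the lemma is meant to expose.

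For the LSF estimate the averaging does survive. Plugging $\nabla f(\bar{X}_N)=1$ into the definition gives $\hat{\nabla}^{\mathrm{LSF}}_\theta L(\theta)=\frac{1}{N\sigma^2}\sum_i X_i(X_i-\theta)=\frac{\theta}{\sigma}\bar{\zeta}_N+\frac{1}{N}\sum_i\zeta_i^2$. The first summand has variance $\frac{\theta^2}{\sigma^2}\cdot\frac{1}{N}$; the second has variance $\frac{1}{N^2}\cdot N\cdot\mathbb{V}[\zeta_i^2]=\frac{2}{N}$; and the two are uncorrelated (again because $\mathbb{E}[\zeta_i^3]=0$ and the $\zeta_i$ are independent). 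Summing, $\mathbb{V}[\hat{\nabla}^{\mathrm{LSF}}_\theta L(\theta)]=\frac{1}{N}\bigl(\frac{\theta^2}{\sigma^2}+2\bigr)=\mathcal{O}(1/N)$. For the PW estimate I would use the reparameterization directly: with $X_i=\theta+\sigma\zeta_i$ and the $\zeta_i$ held fixed, $f(\bar{X}_N)=\bar{X}_N=\theta+\sigma\bar{\zeta}_N$, so $\hat{\nabla}^{\mathrm{PW}}_\theta L(\theta)=\frac{d}{d\theta}(\theta+\sigma\bar{\zeta}_N)=1$ deterministically, whence the variance is exactly $0$.

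None of the three steps is technically deep; the main work is careful bookkeeping and correctly invoking the uncorrelatedness that comes from the vanishing odd moments of the Gaussian. The only genuine subtlety worth flagging is in the $\Theta(N)$ claim for SF: the matching lower bound requires the coefficient $\theta^2/\sigma^2$ to be strictly positive, i.e. $\theta\neq 0$, since at $\theta=0$ the SF variance degenerates to the constant $2$. I would therefore either state the result for a fixed $\theta\neq 0$ or simply exhibit the exact variance $\frac{\theta^2}{\sigma^2}N+2$ and read off the asymptotics.
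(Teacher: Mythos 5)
Your proof is correct and takes essentially the same route as the paper's: reparameterize $X_i=\theta+\sigma\zeta_i$, write each of the three estimates as an explicit polynomial in the $\zeta_i$, and read off the variances from Gaussian moments (using that $Z$ and $Z^2$ are uncorrelated since odd moments vanish). As a side note, your exact SF variance $\frac{\theta^2}{\sigma^2}N+2$ is the correct one---the paper's displayed $\frac{\mu^2}{\sigma^2}N+\frac{2}{N}$ has a minor typo in the constant term, consistent with a dropped/misplaced factor of $\frac{1}{N}$ in its intermediate expressions---and your caveat that the matching lower bound in $\Theta(N)$ requires $\theta\neq 0$ is a legitimate point the paper glosses over.
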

\begin{proof}
We first reparameterize the random variable $X_i=\mu+\sigma\cdot\epsilon_i$ where $\epsilon_i\sim \mathcal{N}(0,1)$. With some calculations, we can show,
\begin{align*}
    \hat{\nabla}_\theta^{\text{SF}}L(\theta)=\frac{1}{N}\left(\sum_{i=1}^N \epsilon_i\right)^2 + \frac{\mu}{\sigma}\frac{1}{N}\sum_{i=1}^N \epsilon_i, \ 
    \hat{\nabla}_\theta^{\text{LSF}}L(\theta)=\frac{1}{N}\sum_{i=1}^N \epsilon_i^2  + \frac{\mu}{\sigma}\sum_{i=1}^N \epsilon_i,\ 
    \hat{\nabla}_\theta^{\text{PW}}L(\theta)&=1.
\end{align*}
It then follows that
\begin{align*}
    \mathbb{V}\left[\hat{\nabla}_\theta^{\text{SF}}L(\theta)\right] = \frac{\mu^2}{\sigma^2} N + \frac{2}{N} = \Theta(N),\ 
    \mathbb{V}\left[\hat{\nabla}_\theta^{\text{LSF}}L(\theta)\right] = \frac{\mu^2}{\sigma^2} \frac{1}{N} + \frac{2}{N} = \mathcal{O}(1/N),\ 
    \mathbb{V}\left[\hat{\nabla}_\theta^{\text{PW}}L(\theta)\right] = 0.
\end{align*}
\end{proof}

\lemmaexamplemcvariance*
\begin{proof}
See the proof for Lemma~\ref{lemma:exampleall}.
\end{proof}

\lemmaexamplemcvariancepw*
\begin{proof}
See the proof for Lemma~\ref{lemma:exampleall}.
\end{proof}

\lemmageneralmcnsampleexact*

\begin{proof}
Recall the definition of $G(\theta)$,
\begin{align*}
  G(\theta) \coloneqq  \mathbb{E}_{(X_i)_{i=1}^N} \left[f\left(\frac{\sum_{i=1}^N \phi(X_i,\theta)}{N},\theta\right)\right].
\end{align*}
The objective depends on $\theta$ in a few ways. It is straightforward to see that term (i) results from the fact that $X_i\sim p_\theta$. Another source of dependency is through the argument $\phi(X,\theta)$ and $f(\theta,\bar{\phi}_N)$. Fixing $X_i$s, taking partial gradient of $f$ with respect to $\theta$, we get from chain rule,
\begin{align*}
    \nabla f(\bar{\phi}_N,\theta) = \nabla_\theta f(\theta,\bar{\phi}_N) + \nabla_\theta \bar{\phi}_N \nabla_{\bar{\phi}_N}  f(\theta,\bar{\phi}_N).
\end{align*}
Expanding $\nabla_\theta \bar{\phi}_N$, we get the desired expression.
\end{proof}

\lemmaemamlgradient*

\begin{proof}
As discussed in the main text, with the conversion: $X_i\coloneqq \tau_i, \phi(X_i,\theta)\coloneqq R(\tau_i,g)\nabla_\theta \log p_{\theta,g}(\tau_i)$ and $f(\bar{\phi}_N,\theta)=V_g(\theta+\eta \bar{\phi}_N)$, we can cast meta-RL as a special instance of the generalized $N$-sample additive MC objective. By using the gradient of the $N$-sample additive MC objective shown in Lemma~\ref{lemma:general-mc-n-sample-exact}, we get the desired result.
\end{proof}

\coroemamlgradient*

\begin{proof}
Given $\theta_N'$, as assumed, we can construct unbiased estimates $\nabla \hat{V}_g(\theta_N')$ and $\hat{V}_g(\theta_N')$ to $\nabla V_g(\theta_N')$ and $V_g(\theta_N')$. This is equivalent to the following statement,
\begin{align*}
    \mathbb{E}\left[\nabla \hat{V}_g(\theta_N') \; \middle| \; \theta_N'\right] = \nabla V_g(\theta_N'), \mathbb{E}\left[ \hat{V}_g(\theta_N') \; \middle| \; \theta_N'\right] =  V_g(\theta_N').
\end{align*}
We now have the following
\begin{align*}
    &\mathbb{E} \left[ \hat{V}_g(\theta_N') \sum_{i=1}^N \nabla_\theta \log p_{\theta,g}(\tau_i) \left(I+\eta \frac{1}{N}\sum_{i=1}^N R(\tau_i,g) \nabla_\theta^2 \log p_{\theta,g}(\tau_i)\right) \nabla \hat{V}_g(\theta_N')  \right] \\
    &= \mathbb{E}\left[\mathbb{E} \left[ \hat{V}_g(\theta_N') \sum_{i=1}^N \nabla_\theta \log p_{\theta,g}(\tau_i) \left(I+\eta \frac{1}{N}\sum_{i=1}^N R(\tau_i,g) \nabla_\theta^2 \log p_{\theta,g}(\tau_i)\right) \nabla \hat{V}_g(\theta_N')  \; \middle| \; \theta_N' \right]\right] \\
    &= \mathbb{E}\left[ V_g(\theta_N') \sum_{i=1}^N \nabla_\theta \log p_{\theta,g}(\tau_i) \left(I+\eta \frac{1}{N}\sum_{i=1}^N R(\tau_i,g) \nabla_\theta^2 \log p_{\theta,g}(\tau_i)\right) \nabla V_g(\theta_N')  \right] = J_N(\theta,g).
\end{align*}
This shows that the finale estimate is unbiased.
\end{proof}

\lemmaexamplemcvariancelsf*
\begin{proof}
See the proof for Lemma~\ref{lemma:exampleall}.
\end{proof}

\coromamlgradient*

\begin{proof}
With the conversion $X_i\coloneqq \tau_i, \phi(X_i,\theta)\coloneqq R(\tau_i,g)\nabla_\theta \log p_{\theta,g}(\tau_i)$ and $f(\bar{\phi}_N,\theta)=V_g(\theta+\eta \bar{\phi}_N)$, we can derive the generalized LSF gradient estimate to the meta-RL objective as a special instance of Eqn~\ref{eq:general-mc-n-sample-lsf},
\begin{align*}
      \left(\eta \frac{1}{N}\sum_{i=1}^N R(\tau_i,g) u_i u_i^T\right)\nabla V_g(\theta_N')  \left(I+\eta \frac{1}{N}\sum_{i=1}^N R(\tau_i,g) \nabla_\theta^2 \log p_{\theta,g}(\tau_i)\right) \nabla V_g(\theta_N') .
\end{align*}
For the second part of the result, by replacing $V_g$ and $\nabla V_g$ by their unbiased estimate, we can show that the overall estimate has the same expectation. This is because $\mathbb{E}[\hat{V}_g(\theta_N')|\theta_N']=V_g(\theta_N')$ and $\mathbb{E}[\nabla \hat{V}_g(\theta_N')|\theta_N']=\nabla V_g(\theta_N')$, we can show the desired result via the law of total expectation as in the proof of Lemma~\ref{coro:emaml-gradient}.
\end{proof}

\coromamlgradientequiv*

\begin{proof}
We directly compute the gradient $J_\infty(\theta,g)=\nabla_\theta L(\theta,g)$ via chain rule,
\begin{align*}
    J_\infty(\theta,g) = \left(I+\eta \nabla^2 V_g(\theta)\right) \nabla V(\theta').
\end{align*}
Now, if we write $V_g(\theta)=\mathbb{E}_\tau\left[R(\tau,g)\right]$ with $\tau\sim p_{\theta,g}$. We can compute its Hessian,
\begin{align*}
    \nabla^2 V_g(\theta) &= \mathbb{E}_\tau \left[R(\tau,g)\nabla_\theta^2 \log p_{\theta,g}(\tau) + R(\tau,g) \nabla_\theta \log p_{\theta,g}(\tau) \left(\nabla_\theta \log p_{\theta,g}(\tau)\right)^T\right]\\
    &= \mathbb{E}_{(\tau_i)_{i=1}^N} \left[\frac{1}{N} \sum_{i=1}^N R(\tau_i,g)\nabla_\theta^2 \log p_{\theta,g}(\tau_i) + \frac{1}{N}\sum_{i=1}^N R(\tau_i,g) \nabla_\theta \log p_{\theta,g}(\tau_i) \left(\nabla_\theta \log p_{\theta,g}(\tau_i)\right)^T\right].
\end{align*}
Rearranging Eqn~\ref{eq:maml-gradient-equiv}, we get the desired result.

\end{proof}

\begin{restatable}{proposition}{lemmaimplication}\label{lemma:implication}  
For all $\theta,\theta'$ and $g$,
the value function satisfies the smoothness condition $\left\lVert \nabla^2 V_g(\theta)\right\rVert_2\leq L_1$. The following variances are bounded,
\begin{align*}
  &  \mathbb{V}\left[R(\tau,g)\nabla_\theta \log p_{\theta,g}(\tau)\right]\leq \sigma_1^2, \mathbb{V}\left[R(\tau,g)\nabla_\theta ^2 \log p_{\theta,g}(\tau)\right]\leq \sigma_{21}^2,\\
  &\mathbb{V}\left[R(\tau,g)\nabla_\theta \log p_{\theta,g}(\tau) \left(\nabla_\theta \log p_{\theta,g}(\tau)\right)^T \right]\leq \sigma_{22}^2,\\
  & \mathbb{E}\left[ \left\lVert R(\tau_i) \nabla_\theta \log p_{\theta,g}(\tau_i) - \mathbb{E}\left[R(\tau_i) \nabla_\theta \log p_{\theta,g}(\tau_i) \right]\right\rVert_2^3 \right] \leq \sigma_3^3.
\end{align*}
All quantities above $L_1,\sigma_1^2,\sigma_{21}^2,\sigma_{22}^2,\sigma_3^2$ can be expressed as functions of $(R,\gamma,H,G_1,G_2,D)$.
\end{restatable}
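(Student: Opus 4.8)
The plan rests on one structural observation that does all the work, after which the statement reduces to bounded-random-variable bookkeeping. Since the transition kernel $p(s_{t+1}\mid s_t,a_t)$ does not depend on $\theta$, the trajectory score collapses to a sum of per-step policy scores: from $p_{\theta,g}(\tau)=\prod_{t=0}^{H-1}p(s_{t+1}\mid s_t,a_t)\,\pi_\theta(a_t\mid s_t,g)$ one gets $\nabla_\theta\log p_{\theta,g}(\tau)=\sum_{t=0}^{H-1}\nabla_\theta\log\pi_\theta(a_t\mid s_t,g)$ and, identically, $\nabla_\theta^2\log p_{\theta,g}(\tau)=\sum_{t=0}^{H-1}\nabla_\theta^2\log\pi_\theta(a_t\mid s_t,g)$. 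Applying the triangle inequality for the tensor $2$-norm together with Assumption~\ref{assum:common} yields the \emph{deterministic} pointwise bounds $\lVert\nabla_\theta\log p_{\theta,g}(\tau)\rVert_2\le HG_1$ and $\lVert\nabla_\theta^2\log p_{\theta,g}(\tau)\rVert_2\le HG_2$. In parallel, boundedness of the per-step reward, say $|r(s,a,g)|\le R$, together with $\gamma\in(0,1]$ gives $|R(\tau,g)|=\big|\sum_{t=0}^{H-1}\gamma^t r(s_t,a_t,g)\big|\le RH$ (or the sharper $R(1-\gamma^H)/(1-\gamma)$). These three almost-sure bounds are all I will need.

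Given them, each variance bound is immediate from the principle that a random tensor $X$ with $\lVert X\rVert_2\le C$ almost surely has $\mathbb{V}[X]=\sum_i\mathbb{V}[X_i]\le\mathbb{E}[\lVert X\rVert_2^2]\le C^2$ under the variance convention of Appendix~\ref{appendix:norms}. For $\sigma_1^2$ I take $C=RH\cdot HG_1=RH^2G_1$; for $\sigma_{21}^2$, $C=RH\cdot HG_2=RH^2G_2$; and for $\sigma_{22}^2$ I use that the $2$-norm of an outer product $uu^T$ equals $\lVert u\rVert_2^2$, so with $u\coloneqq\nabla_\theta\log p_{\theta,g}(\tau)$ one has $\lVert R(\tau,g)\,uu^T\rVert_2\le RH\,(HG_1)^2=RH^3G_1^2$. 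The Hessian bound $L_1$ then follows from the explicit identity $\nabla^2V_g(\theta)=\mathbb{E}_\tau[R(\tau,g)\nabla_\theta^2\log p_{\theta,g}(\tau)+R(\tau,g)\,uu^T]$ derived in the proof of Corollary~\ref{coro:maml-gradient-equiv}: the exchange-of-norm-and-expectation inequality and the triangle inequality reduce it to the same pointwise bounds, giving $\lVert\nabla^2V_g(\theta)\rVert_2\le RH(HG_2+H^2G_1^2)\eqqcolon L_1$.

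For the third-moment bound I first center using the exchange-of-norm-and-expectation inequality: $\lVert R(\tau)u-\mathbb{E}[R(\tau)u]\rVert_2\le\lVert R(\tau)u\rVert_2+\lVert\mathbb{E}[R(\tau)u]\rVert_2\le 2RH^2G_1$, and cubing gives $\sigma_3^3=(2RH^2G_1)^3$. Each constant produced is a monomial in $R,H,G_1,G_2$ (with $\gamma$ entering only through the sharper return bound), which verifies the final sentence of the statement; $D$ simply does not appear in these crude estimates, which is consistent with the claim that the constants \emph{can} be expressed as functions of $(R,\gamma,H,G_1,G_2,D)$. I do not expect a genuine obstacle here: the essential step is the cancellation of the $\theta$-independent transition terms in the score, and the only points requiring care are matching the tensor-norm conventions of Appendix~\ref{appendix:norms} when lifting per-step bounds to the trajectory level and when handling the outer product $uu^T$, together with keeping every constant explicitly expressed in $(R,\gamma,H,G_1,G_2,D)$.
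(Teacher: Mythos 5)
Your proposal is correct and follows essentially the same route as the paper's proof: almost-sure norm bounds on $R(\tau,g)$, $\nabla_\theta\log p_{\theta,g}(\tau)$, and $\nabla_\theta^2\log p_{\theta,g}(\tau)$ from Assumption~\ref{assum:common} and bounded rewards, the bound $\mathbb{V}[X]\le\mathbb{E}[\lVert X\rVert_2^2]$ for bounded random tensors, and the Hessian identity $\nabla^2V_g(\theta)=\mathbb{E}_\tau[R(\tau,g)\nabla_\theta^2\log p_{\theta,g}(\tau)+R(\tau,g)uu^T]$ combined with exchange of norm and expectation to get $L_1$. If anything, you are slightly more complete than the paper, which only sketches the entry-wise (factor-$\sqrt{D}$) and norm-based bounds for $\sigma_1$ and declares the $\sigma_{21},\sigma_{22},\sigma_3$ bounds "straightforward," whereas you carry out the outer-product and centered third-moment estimates explicitly.
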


\begin{proof}
First, consider $L_1$. 
To express $L_1$ as a function of $(R,\gamma,H,G_1,G_2,D)$, note that we can derive an upper bound for all $\theta,g$ and $\tau$,
\begin{align*}
   \left\lVert R(\tau,g)\nabla_\theta^2 \log p_{\theta,g}(\tau) + R(\tau,g)\left(\nabla_\theta \log p_{\theta,g}(\tau)\right)\left(\nabla_\theta \log p_{\theta,g}(\tau)\right)^T  \right\rVert_2 \leq RG_2H\frac{1-\gamma^H}{1-\gamma}+RG_1^2H^2 \frac{1-\gamma^H}{1-\gamma}.
\end{align*}
This implies 
\begin{align*}
  \left\lVert \nabla^2 V_g(\theta) \right\rVert_2 &=  \left\lVert \mathbb{E}_{\tau}\left[R(\tau,g)\nabla_\theta^2 \log p_{\theta,g}(\tau) + R(\tau,g)\left(\nabla_\theta \log p_{\theta,g}(\tau)\right)\left(\nabla_\theta \log p_{\theta,g}(\tau)\right)^T \right]\right\rVert_2 \\
  &\leq   \mathbb{E}_{\tau}\left[\left\lVert R(\tau,g)\nabla_\theta^2 \log p_{\theta,g}(\tau) + R(\tau,g)\left(\nabla_\theta \log p_{\theta,g}(\tau)\right)\left(\nabla_\theta \log p_{\theta,g}(\tau)\right)^T \right\rVert_2 \right]\\
  &\leq RG_2H\frac{1-\gamma^H}{1-\gamma}+RG_1^2H^2 \frac{1-\gamma^H}{1-\gamma}.
\end{align*}
Then we can write 
\begin{align*}
\left\lVert \nabla V_g(\theta)-V_g(\theta') \right\rVert_2 = \left\lVert \nabla^2 V_g(\tilde{\theta}) (\theta-\theta') \right\rVert_2 \leq \left\lVert \nabla^2 V_g(\tilde{\theta}) \right\rVert_{\text{op},2}  \left\lVert \theta-\theta' \right\rVert_2 \leq \left\lVert \nabla^2 V_g(\tilde{\theta}) \right\rVert_{2}  \left\lVert \theta-\theta' \right\rVert_2 \leq L_1 \left\lVert \theta-\theta' \right\rVert_2 .
\end{align*}
Hence, we can set $L_1=RG_2H\frac{1-\gamma^H}{1-\gamma}+RG_1^2H^2 \frac{1-\gamma^H}{1-\gamma}$.

Regarding the variances, note that since all the random variables \begin{align*}
    R(\tau,g)\nabla_\theta \log p_{\theta,g}(\tau),\ R(\tau,g)\nabla_\theta^2 \log p_{\theta,g}(\tau),\ R(\tau,g)\nabla_\theta \log p_{\theta,g}(\tau)\left(\nabla_\theta \log p_{\theta,g}(\tau)\right)^T,
\end{align*} 
are bounded almost surely (the bounds are a function of $(R,\gamma,H,G_1,G_2,D)$), their variances are also bounded, and can be expressed as a function of such bounds. As one way to derive such bounds, we can upper bound each entry of the random tensor. For example, take $R(\tau,g)\nabla_\theta \log p_{\theta,g}(\tau)$ as an example, we can write $\left| \left[R(\tau,g)\nabla_\theta \log p_{\theta,g}(\tau)\right]_i\right|\leq RG_1 H\frac{1-\gamma^H}{1-\gamma}$ for all component $i$. For any random variable $X$ such that $\left|X\right|\leq C$, we have $\mathbb{V}[X]\leq C^2$ \citep{popoviciu1965certaines}. This implies 
$\mathbb{V}\left[R(\tau,g)\nabla_\theta \log p_{\theta,g}(\tau)\right]\leq D \cdot \left(RG_1 H\frac{1-\gamma^H}{1-\gamma}\right)^2$ and we can set $\sigma_1=\sqrt{D}\cdot RG_1 H\frac{1-\gamma^H}{1-\gamma}$. We refer to such bounds as the loose bounds.

Such bounds might not have an optimal dependency on $(R,\gamma,H,G_1,G_2,D)$. For example, since for all $\theta,g$ and $\tau$, we can bound 
\begin{align*}
    \left\lVert R(\tau,g)\nabla_\theta \log p_{\theta,g}(\tau)\right\rVert_2 \leq RG_1 H\frac{1-\gamma^H}{1-\gamma}.
\end{align*}
This implies that the random vector $R(\tau,g)\nabla_\theta \log p_{\theta,g}(\tau)$ has bounded norm almost surely. By definition of the vector variance, this also implies that $\sigma_1^2$ is bounded. We can bound
\begin{align*}
    \mathbb{V}\left[ R(\tau,g)\nabla_\theta \log p_{\theta,g}(\tau) \right] \leq \mathbb{E}\left[\left\lVert R(\tau,g)\nabla_\theta \log p_{\theta,g}(\tau) \right\rVert_2^2\right] \leq \left(RG_1 H\frac{1-\gamma^H}{1-\gamma}\right)^2. 
\end{align*}
We can hence set $\sigma_1=RG_1 H\frac{1-\gamma^H}{1-\gamma}$, which is an improvement over the naive approach with a factor of $\sqrt{D}$. Nevertheless, it is straightforward to derive the loose bounds for $\sigma_{21}^2,\sigma_{22}^2$ and $\sigma_3^2$ and conclude the result, though tighter bounds require more refined analysis.

\end{proof}

\begin{restatable}{proposition}{propbias}\label{prop:bias1} For all $\theta\in\mathbb{R}^D$, $\left\lVert \mathbb{E}\left[\hat{J}_{N,\text{LSF}}(\theta)\right]  - J_\infty(\theta)\right\rVert_2  \leq \mathcal{O}(1/\sqrt{N})$.
\end{restatable}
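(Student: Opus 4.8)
The plan is to exploit the structural parallel between $\mathbb{E}[\hat{J}_{N,\text{LSF}}(\theta,g)]$ and $J_\infty(\theta,g)$ that Corollary~\ref{coro:maml-gradient-equiv} makes explicit. Writing $\hat{H}_N(\theta)\coloneqq \frac{1}{N}\sum_{i=1}^N R(\tau_i,g)\left(u_i u_i^T + \nabla_\theta^2 \log p_{\theta,g}(\tau_i)\right)$ and $A_N\coloneqq I+\eta\hat{H}_N(\theta)$, both objects are of the form ``$A_N$ times an outer policy gradient $\nabla V_g$''; by Corollary~\ref{coro:lst-maml} the estimate satisfies $\mathbb{E}[\hat{J}_{N,\text{LSF}}(\theta,g)]=\mathbb{E}[A_N\nabla V_g(\theta_N')]$, while the proof of Corollary~\ref{coro:maml-gradient-equiv} gives $J_\infty(\theta,g)=(I+\eta\nabla^2 V_g(\theta))\nabla V_g(\theta')=\mathbb{E}[A_N]\,\nabla V_g(\theta')$, using $\mathbb{E}[\hat{H}_N(\theta)]=\nabla^2 V_g(\theta)$. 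The only discrepancy is the argument at which the outer gradient is read off: the random $N$-sample update $\theta_N'$ versus the deterministic exact update $\theta'=\theta+\eta\nabla V_g(\theta)$.

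The central step is to collapse the whole difference onto that argument mismatch. Since $\theta'$ is deterministic we have $\mathbb{E}[A_N]\,\nabla V_g(\theta')=\mathbb{E}[A_N\nabla V_g(\theta')]$, and therefore
\begin{align*}
\mathbb{E}[\hat{J}_{N,\text{LSF}}(\theta,g)] - J_\infty(\theta,g) &= \mathbb{E}[A_N \nabla V_g(\theta_N')] - \mathbb{E}[A_N]\,\nabla V_g(\theta') \\
&= \mathbb{E}\left[A_N\left(\nabla V_g(\theta_N') - \nabla V_g(\theta')\right)\right].
\end{align*}
All of the bias lives in this single residual term, so the rest of the argument is to bound it.

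I would then apply the matrix--vector Cauchy--Schwarz inequality from Appendix~\ref{appendix:norms}:
\begin{align*}
\left\lVert \mathbb{E}\left[A_N\left(\nabla V_g(\theta_N') - \nabla V_g(\theta')\right)\right]\right\rVert_2 \leq \sqrt{\mathbb{E}\left[\left\lVert A_N\right\rVert_2^2\right]}\,\sqrt{\mathbb{E}\left[\left\lVert \nabla V_g(\theta_N') - \nabla V_g(\theta')\right\rVert_2^2\right]}.
\end{align*}
The first factor is $\mathcal{O}(1)$: under Assumption~\ref{assum:common} and boundedness of $R$, every summand of $\hat{H}_N(\theta)$ is bounded almost surely (exactly the bounds built in Proposition~\ref{lemma:implication}), so $\left\lVert A_N\right\rVert_2$ is bounded uniformly. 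For the second factor I would use that $\nabla V_g$ is $L_1$-Lipschitz, since Proposition~\ref{lemma:implication} gives $\left\lVert \nabla^2 V_g\right\rVert_2\leq L_1$, hence $\left\lVert \nabla V_g(\theta_N')-\nabla V_g(\theta')\right\rVert_2\leq L_1\left\lVert\theta_N'-\theta'\right\rVert_2$. Finally $\theta_N'-\theta'=\eta\bigl(\frac{1}{N}\sum_{i=1}^N R(\tau_i,g)u_i-\mathbb{E}[R(\tau,g)u]\bigr)$ is $\eta$ times the deviation of an $N$-sample i.i.d. average from its mean, so $\mathbb{E}[\left\lVert\theta_N'-\theta'\right\rVert_2^2]=\eta^2\mathbb{V}[R(\tau,g)u]/N\leq \eta^2\sigma_1^2/N=\mathcal{O}(1/N)$ by the variance bound $\sigma_1^2$. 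Combining the three pieces yields a per-task bias of $\mathcal{O}(1/\sqrt{N})$; because the bound is uniform in $g$, averaging over $g\sim p_\mathcal{G}$ and applying Jensen preserves the rate.

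The conceptual crux — and the step most likely to be mishandled — is the collapse to $\mathbb{E}[A_N(\nabla V_g(\theta_N')-\nabla V_g(\theta'))]$, which relies both on the unbiasedness $\mathbb{E}[\hat{H}_N(\theta)]=\nabla^2 V_g(\theta)$ and, decisively, on the determinism of $\theta'$ to cancel the cross term; one must resist the temptation to treat $A_N$ and $\theta_N'$ as independent, since they share the samples $(\tau_i)$. The drop from the $\mathcal{O}(1/N)$ available in $\mathbb{E}[\left\lVert\theta_N'-\theta'\right\rVert_2^2]$ to the claimed $\mathcal{O}(1/\sqrt{N})$ is precisely the square root incurred by the Lipschitz/Cauchy--Schwarz bound; a finer Taylor expansion of $\nabla V_g$ about $\theta'$ would recover $\mathcal{O}(1/N)$ for the mean-zero cross term, but the stated rate does not require it.
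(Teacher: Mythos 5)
Your proof is correct and takes essentially the same route as the paper: the paper performs exactly the same collapse of the bias onto $\mathbb{E}\left[\left(I+\eta \hat{H}_N(\theta)\right)\left(\nabla V_g(\theta_N')-\nabla V_g(\theta')\right)\right]$ (relying, as you note, on $\mathbb{E}[\hat{H}_N(\theta)]=\nabla^2 V_g(\theta)$ and the determinism of $\theta'$), and then bounds this term using the same Lipschitz bound $\mathbb{E}[\lVert \nabla V_g(\theta_N')-\nabla V_g(\theta')\rVert_2^2]\leq L_1^2\eta^2\sigma_1^2/N$ and Cauchy--Schwarz machinery. The only cosmetic difference is that the paper first splits the random matrix into its mean $I+\eta\nabla^2 V_g(\theta)$ plus the fluctuation, bounding the mean part via the operator norm (giving the dominant $\mathcal{O}(1/\sqrt{N})$ term) and the fluctuation part via Cauchy--Schwarz (giving $\mathcal{O}(1/N)$), whereas you apply Cauchy--Schwarz to the whole product in one shot, which yields the same $\mathcal{O}(1/\sqrt{N})$ rate with slightly cruder constants (a $\sqrt{D}$ from $\lVert I \rVert_2$ in place of the operator-norm bound $1+\eta L_1$).
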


\begin{proof}

Since $\mathbb{E}\left[\hat{J}_{N,\text{LSF}}(\theta)\right]  - J_\infty(\theta) = \mathbb{E}_g\left[ \mathbb{E}\left[\hat{J}_{N,\text{LSF}}(\theta,g)\right]  - J_\infty(\theta,g) \right]$, we focus on the bias of the task-conditioned bias $\mathbb{E}\left[\hat{J}_{N,\text{LSF}}(\theta,g)\right]  - J_\infty(\theta,g)$. Henceforth we will suppress the dependency of the trajectories on the task variable, still denoting the $N$ trajectories as $(\tau_i)_{i=1}^N$. 
We write $\mathbb{E}\left[\hat{J}_{N,\text{LSF}}(\theta,g)\right]  - J_\infty(\theta,g)$ as follows,
\begin{align*}
  \mathbb{E}_{(\tau_i)_{i=1}^N}\left[\left(I+\eta \frac{1}{N}\sum_{i=1}^N R(\tau_i,g) \nabla_\theta^2 \log p_{\theta,g}(\tau_i) + \eta\frac{1}{N}\sum_{i=1}^N R(\tau_i,g) u_i u_i^T
  \right) \left(\nabla V_g(\theta_N') - \nabla V_g(\theta')\right) \right].
\end{align*}
For notational simplicity,
we define the following
\begin{align*}
    &X_N\coloneqq \eta \frac{1}{N}\sum_{i=1}^N R(\tau_i,g) \nabla_\theta^2 \log p_{\theta,g}(\tau_i) + \eta\frac{1}{N}\sum_{i=1}^N R(\tau_i,g) u_i u_i^T \\ &Y_N\coloneqq \nabla V_g(\theta_N')-\nabla V_g(\theta').
\end{align*}
Note that $X\coloneqq\mathbb{E}[X_N]=\eta\nabla_\theta^2 V_g(\theta)$. We write the above difference as 
\begin{align*}
    \mathbb{E}\left[(I+X_N) Y_N\right] = \underbrace{\mathbb{E}\left[(I+X)Y_N\right]}_{\text{part (i)}} + \underbrace{\mathbb{E}[(X_N-X)Y_N]}_{\text{part (ii)}}.
\end{align*}
To bound the norms of each term, note that we have due to the Lipschitz smoothness of the value function parameterization,
\begin{align*}
\mathbb{E}\left[\left\lVert  Y_N \right\rVert_2^2 \right] \leq L_1^2 \mathbb{E}\left[\left\lVert  \theta_N'-\theta' \right\rVert_2^2 \right] \leq L_1^2\frac{\eta^2\sigma_1^2}{N}.
\end{align*}
The Lipschitz smoothness also implies $\left\lVert X \right\rVert_2=\eta\left\lVert \nabla_\theta^2 V_g(\theta) \right\rVert_2\leq \eta L_1$. This entails a bound on part (i) of the difference,
\begin{align*}
    \left\lVert \text{part (i)} \right\rVert_2\leq  \left\lVert (I+X) \mathbb{E}[Y_N] \right\rVert_2 \leq \left\lVert I+X \right\rVert_{\text{op},2} \left\lVert  \mathbb{E}[Y_N] \right\rVert_2 \leq (1+\eta L_1) \sqrt{\mathbb{E}\left[\left\lVert  Y_N \right\rVert_2^2\right]} \leq (1+\eta L_1) L_1\frac{\eta\sigma_1}{\sqrt{N}}.
\end{align*}
We have exchanged the norms and expectation, and applied the expected norm inequality.

To bound the second part, first note that for any two random variables $X,Y$, the sum of the variance is upper bounded as: $\mathbb{V}[X+Y]\leq 2\mathbb{V}[X]+2\mathbb{V}[Y]$. This inequality extends to general random tensor $X,Y$. This implies
\begin{align*}
    \mathbb{V}\left[R(\tau_i,g)\nabla_\theta ^2 \log p_{\theta,g}(\tau_i) + R(\tau_i,g)\nabla_\theta \log p_{\theta,g}(\tau_i) \left(\nabla_\theta \log p_{\theta,g}(\tau_i)\right)^T \right]\leq \sigma_{21}^2 + \sigma_{22}^2
\end{align*}
This further implies,
\begin{align*}
    \mathbb{E}\left[\left\lVert X_N-X \right\rVert_2^2 \right] = \mathbb{V}\left[X_N\right] \leq \frac{\eta^2}{N}(\sigma_{21}^2 + \sigma_{22}^2).
\end{align*}
Before bounding the norm of part (ii), we recall that that for any matrix $X$ and vector $Y$, we have $\left\lVert XY \right\rVert_2\leq \left\lVert X\right\rVert_{\text{op},2}\left\lVert Y\right\rVert_2$, and that $\left\lVert X\right\rVert_{\text{op},2} \leq \left\lVert X\right\rVert_2$. Now, we can upper bound part (ii) as follows,
\begin{align*}
    \left\lVert \text{part (ii)}\right\rVert_2 \leq \mathbb{E}\left[\left\lVert (X_N-X)Y_N \right\rVert_2 \right] \leq \mathbb{E}\left[\left\lVert (X_N-X) \right\rVert_{\text{op},2}\left\lVert Y_N \right\rVert_2 \right] \leq \mathbb{E}\left[\left\lVert (X_N-X) \right\rVert_2\left\lVert Y_N \right\rVert_2 \right].
\end{align*}
The final RHS is upper bounded by the following due to Cauchy–Schwarz inequality, which implies
\begin{align*}
     \left\lVert \text{part (ii)}\right\rVert_2\leq ...\leq \sqrt{\mathbb{E}\left[\left\lVert (X_N-X) \right\rVert_2^2 \right]}\sqrt{\mathbb{E}\left[\left\lVert Y_N \right\rVert_2^2 \right]} \leq \frac{\eta}{\sqrt{N}}\sqrt{\sigma_{21}^2+\sigma_{22}^2} \cdot L_1 \frac{\eta\sigma_1}{\sqrt{N}}.
\end{align*}
Combining the two results above with a triangle inequality due to the vector 2-norm, we have 
\begin{align*}
   \left\lVert \mathbb{E}\left[\hat{J}_{N,\text{LSF}}(\theta,g)\right]  - J_\infty(\theta,g) \right\rVert_2 \leq (1+\eta L_1)\frac{L_1\eta\sigma_1}{\sqrt{N}}+\sqrt{\sigma_{21}^2+\sigma_{22}^2}\frac{L_1 \eta^2 \sigma_1}{N}.
\end{align*}
This induces the final result,
\begin{align*}
    \left\lVert\mathbb{E}[\hat{J}_{N,\text{LSF}}(\theta)]-J_\infty(\theta) \right\rVert_2 \leq \mathbb{E}_g \left[ \left\lVert \mathbb{E}\left[\hat{J}_{N,\text{LSF}}(\theta,g)\right]  - J_\infty(\theta,g) \right\rVert_2\right] \leq (1+\eta L_1)\frac{L_1\eta\sigma_1}{\sqrt{N}}+\sqrt{\sigma_{21}^2+\sigma_{22}^2}\frac{L_1 \eta^2 \sigma_1}{N}.
\end{align*}

\end{proof}

\begin{restatable}{proposition}{propbiassecond}\label{prop:bias2} For all $\theta\in\mathbb{R}^D$,
$
     \left\lVert J_\infty(\theta)-J_N(\theta)\right\rVert_2  \leq \mathcal{O}(1/\sqrt{N})$.
\end{restatable}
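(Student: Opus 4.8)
The plan is to route the comparison through the mean of the linearized estimate $\mathbb{E}[\hat{J}_{N,\text{LSF}}(\theta)]$, against which both $J_\infty(\theta)$ and $J_N(\theta)$ have already been compared, and then apply the triangle inequality for the vector $2$-norm:
\begin{align*}
\left\lVert J_\infty(\theta)-J_N(\theta)\right\rVert_2 \leq \left\lVert J_\infty(\theta)-\mathbb{E}[\hat{J}_{N,\text{LSF}}(\theta)]\right\rVert_2 + \left\lVert \mathbb{E}[\hat{J}_{N,\text{LSF}}(\theta)]-J_N(\theta)\right\rVert_2 .
\end{align*}
The first summand is exactly the quantity bounded in Proposition~\ref{prop:bias1} (the order of the arguments inside the norm is immaterial), hence $\mathcal{O}(1/\sqrt{N})$; the second is precisely the bias quantified in Proposition~\ref{prop:bias3}, hence also $\mathcal{O}(1/\sqrt{N})$. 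Summing the two bounds and averaging over $g$ via the norm–expectation exchange inequality yields the claim, since the constants in Assumptions~\ref{assum:common}--\ref{assum:common2} are uniform over $g$.

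The one subtlety I would verify is that this decomposition is non-circular, i.e. that Proposition~\ref{prop:bias3} is not itself obtained from Proposition~\ref{prop:bias2}. This holds because the two invoked results capture orthogonal error sources. Writing $J_\infty(\theta,g)$ in the averaged form of Corollary~\ref{coro:maml-gradient-equiv} and $\mathbb{E}[\hat{J}_{N,\text{LSF}}(\theta,g)]$ from Corollary~\ref{coro:lst-maml}, the two expressions share an identical functional form and differ only through the adapted point $\theta'$ versus $\theta_N'$; thus Proposition~\ref{prop:bias1} is purely a ``$\theta'$ versus $\theta_N'$'' error, controlled by $\mathbb{E}[\theta_N'-\theta']=0$ together with Lipschitz smoothness of $\nabla V_g$. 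Conversely, $\mathbb{E}[\hat{J}_{N,\text{LSF}}(\theta,g)]$ and $J_N(\theta,g)$ (Lemma~\ref{lemma:emaml-gradient}) share an identical term (ii) and differ only in term (i) --- the LSF form $\eta\frac{1}{N}\sum_{i=1}^N R(\tau_i,g)u_iu_i^T\nabla V_g(\theta_N')$ versus the score-function form $V_g(\theta_N')\sum_{i=1}^N\nabla_\theta\log p_{\theta,g}(\tau_i)$ --- both evaluated at the same point $\theta_N'$; thus Proposition~\ref{prop:bias3} is purely the linearization (Taylor-truncation) error of Appendix~\ref{appendix:derive}, with no reference to $\theta'$. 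Since Proposition~\ref{prop:bias3} is established directly from that linearization argument, the route is sound.

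Should a self-contained argument be preferred, I would instead subtract the two expressions term by term. The term (ii) difference is the common operator $I+\eta\frac{1}{N}\sum_{i=1}^N R(\tau_i,g)\nabla_\theta^2\log p_{\theta,g}(\tau_i)$ applied to $\nabla V_g(\theta')-\nabla V_g(\theta_N')$, which I would bound exactly as in Proposition~\ref{prop:bias1}: control the operator $2$-norm by a constant, use $\left\lVert\mathbb{E}[Y_N]\right\rVert_2\leq\sqrt{\mathbb{E}[\lVert Y_N\rVert_2^2]}\leq L_1\eta\sigma_1/\sqrt{N}$ for the centered part and Cauchy--Schwarz for the fluctuating part. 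The genuine obstacle is the term (i) difference $\mathbb{E}[\eta\frac{1}{N}\sum_{i=1}^N R(\tau_i,g)u_iu_i^T\nabla V_g(\theta')]-\mathbb{E}[V_g(\theta_N')\sum_{i=1}^N\nabla_\theta\log p_{\theta,g}(\tau_i)]$, which pits a quadratic-form estimator against a score-function estimator. The clean way to handle it is to insert the intermediate term $\mathbb{E}[\eta\frac{1}{N}\sum_{i=1}^N R(\tau_i,g)u_iu_i^T\nabla V_g(\theta_N')]$, reducing it once more to a ``$\theta'$ versus $\theta_N'$'' piece plus the linearization bias --- that is, re-deriving Propositions~\ref{prop:bias1} and~\ref{prop:bias3}. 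This confirms that the triangle-inequality route is the economical one, with all the analytic difficulty already discharged in those two propositions.
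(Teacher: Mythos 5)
Your triangle-inequality route
$\left\lVert J_\infty(\theta)-J_N(\theta)\right\rVert_2 \leq \left\lVert J_\infty(\theta)-\mathbb{E}[\hat{J}_{N,\text{LSF}}(\theta)]\right\rVert_2 + \left\lVert \mathbb{E}[\hat{J}_{N,\text{LSF}}(\theta)]-J_N(\theta)\right\rVert_2$
is algebraically valid, and the first term is indeed covered by Proposition~\ref{prop:bias1}, whose proof is self-contained (it is, as you say, purely the ``$\theta'$ versus $\theta_N'$'' error). The problem is the second term. In the paper, Proposition~\ref{prop:bias3} is \emph{not} established directly from a linearization argument: its entire proof reads ``combine Proposition~\ref{prop:bias1} and Proposition~\ref{prop:bias2} with a triangle inequality.'' So Proposition~\ref{prop:bias3} sits downstream of the very statement you are proving, and invoking it makes your primary argument circular. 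Your non-circularity check resolves the worry by asserting that Proposition~\ref{prop:bias3} is proven directly from the Taylor-truncation analysis of Appendix~\ref{appendix:derive}; that assertion is exactly backwards relative to the paper, and the Appendix derivation is a heuristic motivation for the estimator, not a bias bound, so nothing in your proposal actually supplies the needed estimate.

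What has to be done --- and what constitutes essentially all of the paper's proof printed under Proposition~\ref{prop:bias2} --- is the direct bound on the quadratic-form-versus-score-function discrepancy $\left\lVert \mathbb{E}[\hat{J}_{N,\text{LSF}}(\theta,g)]-J_N(\theta,g)\right\rVert_2$ (note the paper's labels are effectively tangled: the computation under Proposition~\ref{prop:bias2} bounds precisely the quantity stated in Proposition~\ref{prop:bias3}, and the stated claim of \ref{prop:bias2} then follows by the triangle inequality with \ref{prop:bias1}; either way, exactly one of the two differences must be bounded by direct analysis). That direct analysis Taylor-expands $V_g(\theta_N')$ around $\theta'$ to third order, uses that $\mathbb{E}\bigl[(\sum_i u_i)V_g(\theta')\bigr]=0$ kills the constant term, exploits the zero-mean ``off-diagonal'' cancellations to gain a $1/N$ factor in the quadratic part, and controls the remaining three parts via Cauchy--Schwarz, the moment bounds $\sigma_1,\sigma_{21},\sigma_{22},\sigma_3$ of Proposition~\ref{lemma:implication}, and Assumption~\ref{assum:common2} for the third-order remainder. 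Your fallback ``self-contained'' sketch identifies the correct decomposition (inserting the intermediate term $\mathbb{E}[\eta\frac{1}{N}\sum_{i}R(\tau_i,g)u_iu_i^T\nabla V_g(\theta_N')]$), but then explicitly defers to ``re-deriving Propositions~\ref{prop:bias1} and~\ref{prop:bias3},'' i.e., it never executes the key estimate. As written, the proposal contains none of the analysis that makes the statement true; it becomes a proof only once you give an independent derivation of Proposition~\ref{prop:bias3}, which is precisely the paper's proof of this proposition.
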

\begin{proof}

We seek to bound the difference $ \left\lVert \mathbb{E}\left[\hat{J}_{N,\text{LSF}}(\theta)\right]  - J_N(\theta)\right\rVert_2$. We first focus on bounding task-conditional gradient $ \left\lVert \mathbb{E}\left[\hat{J}_{N,\text{LSF}}(\theta,g)\right]  - J_N(\theta,g)\right\rVert_2$. To this end, recall that $u_i\coloneqq\nabla_\theta \log p_{\theta,g}(\tau_i)$, then we can write
\begin{align*}
    \mathbb{E}\left[\hat{J}_{N,\text{LSF}}(\theta,g)\right]  - J_N(\theta,g) =  \mathbb{E}_{(\tau_i)_{i=1}^N}\left[ \left(\eta\frac{1}{N}\sum_{i=1}^N R(\tau_i,g) u_i u_i^T \right)
   \nabla V_g(\theta_N') \right] - \mathbb{E}_{(\tau_i)_{i=1}^N}\left[ \left(\eta\sum_{i=1}^N u_i \right)
    V_g(\theta_N') \right].
\end{align*}
We start with Taylor expansion of $V_g(\theta_N')$ with respect to the reference point $\theta'\coloneqq\theta+\eta\mathbb{E}[R(\tau,g)\nabla \log p_{\theta,g}(\tau)]$. 
\begin{align*}
    V_g(\theta_N')=V_g(\theta') +  \left(\nabla V_g(\theta')\right)^T (\theta_N'-\theta') + 1/2 \cdot (\theta_N'-\theta')^T  \nabla^2 V_g(\theta) (\theta_N'-\theta')^T + 1/6 \cdot \nabla^3 V_g(\tilde{\theta})(\theta_N'-\theta')^3,
\end{align*}
where $\tilde{\theta}$ is a random vector between $\theta_N'$ and $\theta'$.
Here, for $A\in\mathbb{R}^{H\times H\times H}$ and $x\in\mathbb{R}^H$ we define the notation $Ax^3\coloneqq \sum_{ijk} A_{ijk}x_ix_jx_k$. Plugging in the expansion, $\mathbb{E}\left[\hat{J}_{N,\text{LSF}}(\theta,g)\right]  - J_N(\theta,g)$ evaluates to
\begin{align*}
     &= \mathbb{E}\left[\left(\eta\frac{1}{N}\sum_{i=1}^N R(\tau_i,g) u_i u_i^T \right) \nabla V_g (\theta_N')\right] - \underbrace{\mathbb{E}\left[ \left(\eta\sum_{i=1}^N u_i \right)
    V_g(\theta') \right]}_{=0}  \\
    &\ \ \ - \mathbb{E}\left[ \left(\eta\sum_{i=1}^N u_i \right)
    \left(\nabla V_g(\theta')\right)^T (\theta_N'-\theta') \right]- \mathbb{E}\left[ \left(\eta\sum_{i=1}^N u_i \right)
    1/2 \cdot (\theta_N'-\theta')^T  \nabla^2 V_g(\tilde{\theta}) (\theta_N'-\theta')^T \right] \\
    &= \underbrace{\mathbb{E}\left[\left(\eta\frac{1}{N}\sum_{i=1}^N R(\tau_i,g)u_i u_i^T\right) \left(\nabla V_g(\theta_N')-\nabla V_g(\theta')\right)\right]}_{\text{part (i)}}- \underbrace{\mathbb{E}\left[ \left(\eta\sum_{i=1}^N u_i \right)
    1/2 \cdot (\theta_N'-\theta')^T  \nabla^2 V_g(\theta') (\theta_N'-\theta')^T \right]}_{\text{part (ii)}} \\
    &\ \ \ - \underbrace{\mathbb{E}\left[ \left(\eta\sum_{i=1}^N u_i \right)
    1/6 \cdot \nabla^3 V_g(\tilde{\theta})(\theta_N'-\theta')^3 \right]}_{\text{part (iii)}}.
\end{align*}

Below, we bound each of the three parts above. For part (i), let $X_N=\eta\frac{1}{N}\sum_{i=1}^N R(\tau_i,g)u_iu_i^T$ and $Y_N=\nabla V_g(\theta_N')-\nabla V_g(\theta')$. Let $X=\mathbb{E}[X_N]$. Then,
\begin{align*}
    \mathbb{E}[\left\lVert X_N\right\rVert_2^2]=\left\lVert\mathbb{E}[X_N] \right\rVert_2^2 + \mathbb{V}[X_N].
\end{align*}
To obtain a bound of the norm on $\mathbb{E}[X_N]$, let $u=\nabla_\theta \log  p_{\theta,g}(\tau)\left(\nabla_\theta \log  p_{\theta,g}(\tau)\right)^T $ for any $\tau$, then
\begin{align*}
   \left\lVert R(\tau,g) u u^T \right\rVert_2 \leq R\frac{1-\gamma^{H}}{1-\gamma} \left\lVert uu^T \right\rVert_2 = R\frac{1-\gamma^{H}}{1-\gamma} \sqrt{\text{Trace}(uu^T uu^T)} = R\frac{1-\gamma^{H}}{1-\gamma} (u^T u) \leq H^2 G^2 R\frac{1-\gamma^{H}}{1-\gamma}.
\end{align*}
In the above we used the assumption that $\left\lVert \nabla_\theta \log \pi_\theta(a|s,g)\right\rVert_2\leq G$. This implies
\begin{align*}
    \left\lVert \mathbb{E}[X_N] \right\rVert_2 =\eta \left\lVert  \mathbb{E}[R(\tau,g)u u^T]\right\rVert_2  \leq \eta \mathbb{E}\left[\left\lVert R(\tau,g)u u^T \right\rVert_2 \right] \leq \eta H^2G^2 R \frac{1-\gamma^H}{1-\gamma}.
\end{align*}
We hence conclude
\begin{align*}
    \mathbb{E}[\left\lVert X_N \right\rVert_2^2] \leq \eta^2 H^4 G^4 R^2 \left(\frac{1-\gamma^H}{1-\gamma}\right)^2 + \frac{\eta^2 \sigma_{22}^2}{N}.
\end{align*}
Combining with the previous result on the norm bound of $\mathbb{E}[\left\lVert Y_N\right\rVert_2^2]$, we get via the CS inequality,
\begin{align*}
    \left\lVert \text{part (i)} \right\rVert_2 \leq \sqrt{\mathbb{E}[\left\lVert  X_N \right\rVert_2^2]}\sqrt{\mathbb{E}[\left\lVert  Y_N \right\rVert_2^2]} \leq \sqrt{\eta^2 H^4 G^4 R^2 \left(\frac{1-\gamma^H}{1-\gamma}\right)^2 + \frac{\eta^2 \sigma_{22}^2}{N}} \cdot L_1 \frac{\eta\sigma_1}{N}.
\end{align*}

Now, consider part (ii). For notational simplicity, we denote $x_i \coloneqq R(\tau_i,g)\nabla_\theta \log p_{\theta,g}(\tau_i)$ and $y_i\coloneqq \nabla_\theta \log p_{\theta,g}(\tau_i)$. Let $A\coloneqq \nabla^2 V_g(\theta+\eta\mu)$ be the Hessian.
\begin{align*}
    \text{part (ii)} &= \mathbb{E}\left[\frac{1}{2} \left(\frac{1}{N}\sum_i x_i - \mu\right)^T A \left(\frac{1}{N}\sum_i x_i - \mu\right) \sum_i y_i \right] \\
    &= \mathbb{E}\left[ \frac{1}{2} \frac{1}{N^2} \sum_{ijk} x_i^T A x_j y_k - \frac{1}{2}\frac{1}{N}\left(\sum_{ij} x_i^T A \mu y_j + \sum_{ij}\mu^T A x_i y_j  \right) \right].
\end{align*}
Recall that $\mathbb{E}\left[y_i\right]=0$. We can simplify the above as follows
\begin{align*}
    \mathbb{E}\left[ \frac{1}{2} \frac{1}{N^2} \left( \sum_{j\neq i} x_i^T A x_j y_i  + \sum_{j\neq i} x_i^T A x_j y_j + \sum_i x_i^T Ax_i y_i \right)- \frac{1}{2}\frac{1}{N}\left(\sum_{i} x_i^T A \mu y_i + \sum_{i}\mu^T A x_i y_i \right) \right].
\end{align*}
Let $\mu_1\coloneqq \mathbb{E}\left[x_i^T A  \mu y_i\right]$, $\mu_2 \coloneqq\mathbb{E}\left[\mu^T A x_i y_i\right] $ and $\mu_3 \coloneqq \mathbb{E}[x_i^T Ax_i y_i]$. Then 
\begin{align*}
    &\left\lVert\mathbb{E}\left[\frac{1}{N^2} \sum_{j\neq i} x_i^T Ax_j y_i \right]-\frac{1}{N}\sum_i x_i^T Ax_j y_i \right\rVert_2  = \frac{1}{N} \left\lVert \mu_1\right\rVert_2  \\
    &\left\lVert\mathbb{E}\left[\frac{1}{N^2} \sum_{j\neq i} x_i^T Ax_j y_j \right]-\frac{1}{N}\sum_i \mu^T Ax_i y_i \right\rVert_2 = \frac{1}{N} \left\lVert \mu_2\right\rVert_2.\\
    &\left\lVert \frac{1}{N^2} \sum_i x_i^T Ax_i y_i \right\rVert_2 \leq \frac{1}{N} \left\lVert \mu_3 \right\rVert_2.
\end{align*}
Overall, we can bound the norm of part (ii) by $
    \frac{1}{N} \left( \left\lVert \mu_1 \right\rVert_2+ \left\lVert \mu_2 \right\rVert_2+ \left\lVert \mu_3 \right\rVert_2\right)
$. 
We  now provide bounds to the norms of $\mu_1,\mu_2,\mu_3$ above. Take $\mu_1$ as an example,
\begin{align*}
    \left\lVert \mu_1 \right\rVert_2 \leq \mathbb{E}\left[\left\lVert x_i^T A \mu y_i \right\rVert_2\right] =  \mathbb{E}\left[\left|x_i^T A \mu\right|\left\lVert  y_i \right\rVert_2\right] \leq L_1 \mathbb{E}\left[\left|x_i^T \mu \right| \left\lVert y_i \right\rVert_2\right] \leq L_1 \mathbb{E}\left[\left\lVert x_i\right\rVert_2 \left\lVert\mu\right\rVert_2  \left\lVert y_i \right\rVert_2\right].
\end{align*}
Finally, note that we have $\left\lVert x_i \right\rVert_2 \leq HGR\frac{1-\gamma^H}{1-\gamma}$, $\left\lVert \mu \right\rVert_2 \leq HGR\frac{1-\gamma^H}{1-\gamma}$ and $\left\lVert y_i \right\rVert_2 \leq HG$, we can conclude
\begin{align*}
    \left\lVert \mu_1 \right\rVert_2\leq L_1 H^3 G_1^3 R^2 \left(\frac{1-\gamma^H}{1-\gamma}\right)^2.   
\end{align*}
We can derive similar bounds 
\begin{align*}
    \left\lVert \mu_2 \right\rVert_2\leq L_1 H^3 G_1^3 R^2 \left(\frac{1-\gamma^H}{1-\gamma}\right)^2, \left\lVert \mu_3 \right\rVert_2\leq L_1 H^3 G_1^3 R^2 \left(\frac{1-\gamma^H}{1-\gamma}\right)^2. 
\end{align*}
Overall, this implies a bound on part (ii). 
\begin{align*}
    \left\lVert \text{part (ii)}\right\rVert_2 \leq \frac{3}{N}\cdot L_1 H^3 G_1^3 R^2 \left(\frac{1-\gamma^H}{1-\gamma}\right)^2.
\end{align*}

Now, finally consider part (iii). We first note that Assumption~\ref{assum:common2} implies that $\left|\nabla^3 V_g(\theta)\right|\leq L$. This further implies that for any vector $x\in\mathbb{R}^D$,
\begin{align*}
    \nabla^3 V_g(\theta) x^3 \coloneqq \sum_{ijk}\left(\nabla^3 V_g(\theta)\right)_{ijk}x_ix_jx_k \leq \underbrace{D^3 L}_{\eqqcolon L_2} \left\lVert x \right\rVert_2^3.
\end{align*}
Recall that $\mu\coloneqq\mathbb{E}[R(\tau,g)\nabla_\theta \log p_{\theta,g}(\tau)]$ is the expected PG at $\theta$. We have the following,
\begin{align*}
    \left\lVert \text{part (iii)}\right\rVert_2 &\leq \mathbb{E}\left[\left| \nabla^3 V_g(\tilde{\theta})\left( \frac{1}{N}\sum_{i=1}^N R(\tau_i)\nabla_\theta \log p_{\theta,g}(\tau_i) - \mu \right)^3  \right| \left\lVert \sum_i \nabla_\theta \log p_{\theta,g}(\tau_i) \right\rVert_2 \right] \\
    &\leq \mathbb{E}\left[L_2 \left\lVert \left( \frac{1}{N}\sum_{i=1}^N R(\tau_i)\nabla_\theta \log p_{\theta,g}(\tau_i) - \mu \right)  \right\rVert_2^3 \left\lVert \sum_i \nabla_\theta \log p_{\theta,g}(\tau_i) \right\rVert_2 \right] 
    \\ & \leq L_2 N HG \mathbb{E}\left[\left\lVert \frac{1}{N}\sum_{i=1}^N R(\tau_i)\nabla_\theta \log p_{\theta,g}(\tau_i) - \mu \right\rVert_2^3\right] \leq L_2NHG_1\cdot \frac{1}{N^{3/2}}\sigma_3^3  = \frac{1}{\sqrt{N}}L_2HG_1\sigma_3^3.
\end{align*}
By combining the bounds in part (i)-(iii) via a triangle inequality, we arrive at the desired result.

\end{proof}

\propbiasthird*

\begin{proof}
Combining Proposition~\ref{prop:bias1} and Proposition~\ref{prop:bias2} with a triangle inequality, we get the desired result. 
\end{proof}

\begin{restatable}{proposition}{proplipschitz}\label{prop:lipschitz} The objective $J_\infty(\theta)$ is Lipschitz with constant $(1+\eta L_1)RG_1H\frac{1-\gamma^H}{1-\gamma}$.
\end{restatable}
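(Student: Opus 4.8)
The plan is to read the claim as a uniform bound on the meta-gradient: since $J_\infty=\nabla_\theta F_\infty$, proving that the objective $F_\infty$ is $C$-Lipschitz with $C:=(1+\eta L_1)RG_1H\frac{1-\gamma^H}{1-\gamma}$ is equivalent, via the mean value theorem, to showing $\left\lVert J_\infty(\theta)\right\rVert_2\le C$ for every $\theta$. So I would bound $\left\lVert J_\infty(\theta)\right\rVert_2$ uniformly in $\theta$, first conditionally on a task $g$ and then averaging over $g$.

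First I would invoke the closed form derived inside the proof of Corollary~\ref{coro:maml-gradient-equiv}, namely $J_\infty(\theta,g)=(I+\eta\nabla^2 V_g(\theta))\nabla V_g(\theta')$ with $\theta'=\theta+\eta\nabla V_g(\theta)$. Applying the submultiplicative inequalities $\left\lVert Xy\right\rVert_2\le\left\lVert X\right\rVert_{\text{op},2}\left\lVert y\right\rVert_2$ and $\left\lVert X\right\rVert_{\text{op},2}\le\left\lVert X\right\rVert_2$ from Appendix~\ref{appendix:norms}, this factorizes the estimate into a curvature factor and a gradient factor: $\left\lVert J_\infty(\theta,g)\right\rVert_2\le\left\lVert I+\eta\nabla^2 V_g(\theta)\right\rVert_{\text{op},2}\,\left\lVert\nabla V_g(\theta')\right\rVert_2$.

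The two factors are then controlled by results already in hand. For the curvature factor, the triangle inequality together with the Hessian bound $\left\lVert\nabla^2 V_g(\theta)\right\rVert_2\le L_1$ from Proposition~\ref{lemma:implication} (equivalently Assumption~\ref{assum:common2}) gives $\left\lVert I+\eta\nabla^2 V_g(\theta)\right\rVert_{\text{op},2}\le 1+\eta L_1$. For the gradient factor, I would use the policy-gradient identity $\nabla V_g=\mathbb{E}_\tau[R(\tau,g)\nabla_\theta\log p_{\theta,g}(\tau)]$ together with $|R(\tau,g)|\le R\frac{1-\gamma^H}{1-\gamma}$ and $\left\lVert\nabla_\theta\log p_{\theta,g}(\tau)\right\rVert_2\le HG_1$ (a telescoping consequence of Assumption~\ref{assum:common}), which yields $\left\lVert\nabla V_g(\vartheta)\right\rVert_2\le RG_1H\frac{1-\gamma^H}{1-\gamma}$ for \emph{every} parameter $\vartheta$; crucially this bound is uniform, so it applies at the adapted point $\theta'$. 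Multiplying the two factors and passing the task average through the norm via $\left\lVert\mathbb{E}_g[\cdot]\right\rVert_2\le\mathbb{E}_g[\left\lVert\cdot\right\rVert_2]$ delivers $\left\lVert J_\infty(\theta)\right\rVert_2\le(1+\eta L_1)RG_1H\frac{1-\gamma^H}{1-\gamma}$, as desired.

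The computation is mostly routine once the closed form is substituted, so I do not anticipate a genuine obstacle; the two points that require care are (i) noting that the scalar PG-norm bound is uniform over all parameters and may therefore be invoked at the data-dependent adapted point $\theta'$, and (ii) being explicit that ``Lipschitz'' here refers to the meta-objective $F_\infty$ whose gradient is $J_\infty$, so that a uniform bound on $\left\lVert J_\infty\right\rVert_2$ is exactly the claimed Lipschitz constant.
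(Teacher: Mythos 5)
Your proposal is correct and follows essentially the same route as the paper: both invoke the closed form $J_\infty(\theta,g)=\left(I+\eta\nabla^2 V_g(\theta)\right)\nabla V_g(\theta_g')$ from Corollary~\ref{coro:maml-gradient-equiv}, bound the curvature factor by $1+\eta L_1$ via the operator-norm inequality and the Hessian bound $L_1$, bound the policy gradient uniformly by $RG_1H\frac{1-\gamma^H}{1-\gamma}$, and pass the task expectation through the norm. Your explicit remarks that the PG bound holds uniformly (hence at the adapted point $\theta_g'$) and that ``Lipschitz'' refers to the objective $F_\infty$ whose gradient is $J_\infty$ are welcome clarifications of a notational slip in the paper's own write-up, but they do not constitute a different argument.
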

\begin{proof}
By construction $\nabla_\theta J_\infty(\theta)=\mathbb{E}_g\left[\nabla_\theta J_\infty(\theta,g)\right]$, we can derive
\begin{align*}
    \nabla_\theta J_\infty(\theta) = \mathbb{E}_g\left[\left(I+\eta\nabla^2 V_g(\theta)\right) \nabla V_g(\theta_g')\right],
\end{align*}
where $\theta_g'\coloneqq\theta+\eta\mathbb{E}[R(\tau,g)\nabla \log p_{\theta,g}(\tau)]$. Recall that $\left\lVert \nabla^2 V_g(\theta) \right\rVert_{\text{op},2}\leq \left\lVert \nabla^2 V_g(\theta) \right\rVert_2 \leq L_1$ by the assumption and $\left\lVert \nabla V_g(\theta) \right\rVert_2 \leq RG_1H\frac{1-\gamma^H}{1-\gamma}$. We conclude
\begin{align*}
    \left\lVert \nabla_\theta J_\infty(\theta)\right\rVert_2 \leq \mathbb{E}_g\left[(1+\eta L_1)RG_1H\frac{1-\gamma^H}{1-\gamma} \right] = (1+\eta L_1)RG_1H\frac{1-\gamma^H}{1-\gamma}.
\end{align*}
\end{proof}

\propvariance*

\begin{proof}
Recall that the LSF gradient estimate is constructed with three sources of randomness: $B$ sampled task variables $g_i$, $N$ sampled trajectories per task for the inner loop $\tau_{ij}$ and $M$ sampled trajectories per task for the outer loop PG estimate $\tau_{ik}'$. The variance of $\hat{J}_{N,\text{LSF}}$ comes from these three sources of randomness. Recall that given random variable $Z_1, Z_2, Y$, the variance of $Y $ can be decomposed into three parts,
\begin{align*}
    \mathbb{V}[Y] = \underbrace{ \mathbb{E}\left[\mathbb{V}[Y|Z_1,Z_2]\right]}_{\text{first}} + \underbrace{ \mathbb{E}\left[\mathbb{V}\left[\mathbb{E}\left[Y|Z_1,Z_2\right]|Z_1\right]\right]}_{\text{second}} + \underbrace{\mathbb{V}\left[\mathbb{E}[Y|Z_1]\right]}_{\text{third}}.
\end{align*}
By definition of the variance of general random tensors, the above formula can be extended to the case where $Y$ is a random tensor. Recall the LSF gradient estimate
\begin{align*}
  \frac{1}{B} \sum_{i=1}^B \left( \left(I+\eta \sum_{j=1}^N R(\tau_{ij},g_i)\nabla_\theta^2 \log p_{\theta,g_i}(\tau_{ij}) \right) \nabla \hat{V}_{g_i}(\theta_{i,N}') + \eta\frac{1}{N}\sum_{j=1}^N R(\tau_{ij},g_i) u_{ij}u_{ij}^T \nabla \hat{V}_{g_i}(\theta_{i,N}') \right), 
\end{align*}
where $u_{ij}\coloneqq \nabla_\theta \log p_{\theta,g_i}(\tau_{ij}) $. Here, recall that 
$\hat{V}_{g_i}(\theta_{i,N}')$ is the $M$-sample estimate of outer loop PG based on $\tau_{ik}'$. Using the variance decomposition formula, we set $Y=\text{part (i)}$ and let $Z_1=(g_i)_{i=1}^B, Z_2=(\tau_{ij})_{i=1,j=1}^{B,N}$. 

\paragraph{First part.} For notational simplicity, let $A_{i} \coloneqq \left(I+\eta \frac{1}{N}\sum_{j=1}^N R(\tau_{ij},g_i)\nabla_\theta^2 \log p_{\theta,g_i}(\tau_{ij}) \right)$ and let $B_i=\eta\frac{1}{N}\sum_{j=1}^N R(\tau_{ij},g_i)u_{ij}u_{ij}^T$. Note that both $A_i$ and $B_i$ are random matrices. 
The first part of the variance is
\begin{align*}
    \mathbb{E}_{Z_1,Z_2}\left[\mathbb{V}\left[\frac{1}{B}\sum_{i=1}^B  (A_i+B_i) \hat{V}_{g_i}(\theta_{i,N}')|Z_1,Z_2\right]\right] =  \frac{1}{B}\sum_{i=1}^B \mathbb{E}_{Z_1,Z_2}\left[\left\lVert (A_i+B_i) \right\rVert_2^2 \mathbb{V}\left[  \hat{V}_{g_i}(\theta_{i,N}')|Z_1,Z_2\right]\right],
\end{align*}
where we have used the conditional independence across different $i$ indices and the fact that for any constant matrix $A$ and zero mean vector $x$: \begin{align*}
    \mathbb{V}[Ax] &\leq \mathbb{E}\left[\left\lVert Ax \right\rVert_2^2 \right]=\mathbb{E}\left[ x^T A^T Ax \right] \leq \mathbb{E}\left[ x^T x\cdot  \max_i \left|\sigma_i\left(A^T A\right)\right|  \right] \\ &\leq \mathbb{E}[x^Tx \sqrt{\sum_i \sigma_i^2(A^TA)}] = \left\lVert A^TA \right\rVert_2\mathbb{E}\left[x^T x \right] \leq \left\lVert A \right\rVert_2^2\mathbb{E}[x^T x] = \left\lVert A \right\rVert_2^2 \mathbb{V}[x].
\end{align*}
In the above, $\sigma_i(A)$ denotes the $i$-th eigenvalue of matrix $A$. We have also used the fact
that $\sqrt{\sum_i \sigma_i^2(A^TA)} = \left\lVert  A^TA \right\rVert_2 \leq \left\lVert A \right\rVert_2^2 $. Since $\hat{V}_{g_i}(\theta_{i,N}')$ is $M$-sample estimate of PG from $\theta_{i,N}'$, from previous proof, we conclude $\mathbb{V}\left[  \hat{V}_{g_i}(\theta_{i,N}')|Z_1,Z_2\right]\leq \frac{L_1^2\sigma_1^2}{M}$. To bound the norm of each $A_i+B_i$, note that
\begin{align*}
    \mathbb{E}[\left\lVert A_i+B_i \right\rVert_2^2]\leq \left\lVert  \mathbb{E}[A_i+B_i]\right\rVert_2^2 + \mathbb{V}[A_i+B_i].
\end{align*}
Now, note $\mathbb{E}[A_i+B_i]=I+\eta\nabla^2 V_g(\theta)$ whose 2-norm is bounded as $\left\lVert I+\eta\nabla^2 V_g(\theta)\right\rVert_2 \leq \left\lVert I \right\rVert_2+\left\lVert \eta \nabla^2 V_g(\theta) \right\rVert_2 \leq \sqrt{D}+\eta L_1$. Next, by recalling $\mathbb{V}[X+Y]\leq 2\mathbb{V}[X]+2\mathbb{V}[Y]$, we have
\begin{align*}
    \mathbb{V}[A_i+B_i]\leq2 \eta^2 \frac{\sigma_{21}^2+\sigma_{22}^2}{N}.
\end{align*}
Combining all previous results, we have the first part of the variance is upper bounded as follows
\begin{align*}
     \mathbb{E}_{Z_1,Z_2}\left[\mathbb{V}\left[\frac{1}{B}\sum_{i=1}^B  (A_i+B_i) \hat{V}_{g_i}(\theta_{i,N}')|Z_1,Z_2\right]\right]  \leq \left((\sqrt{D}+\eta L_1)^2 + 2\eta^2\frac{\sigma_{21}^2+\sigma_{22}^2}{N}\right) \cdot\frac{L_1^2\sigma_1^2}{M}.
\end{align*}

\paragraph{Second part.} Using notations above, we first integrate over the randomness in trajectories $\tau_{ik}'$,
\begin{align*}
    \mathbb{E}[Y|Z_1,Z_2] = \frac{1}{B}\sum_{i=1}^B \left(A_i + B_i\right) \nabla V_{g_i}(\theta_{i,N}').
\end{align*}
Using the conditional independence of $i$ given $Z_1$, we deduce
\begin{align*}
    \mathbb{V}\left[ \mathbb{E}[Y|Z_1,Z_2]|Z_1\right] = \frac{1}{B}\sum_{i=1}^B \underbrace{\mathbb{V}[\left(A_i + B_i\right) \nabla V_{g_i}(\theta_{i,N}')|g_i]}_{p_i}.
\end{align*}
Now, consider each term $p_i$ above. We can let $X_i=A_i+B_i, Y_i=\nabla V_{g_i}(\theta_{i,N}')$. We also define $X\coloneqq \mathbb{E}[X_i]$ and $Y\coloneqq \nabla V_{g_i}(\theta_i')$ where $\theta'=\theta+\eta\mathbb{E}[R(\tau,g_i)\nabla_\theta \log p_{\theta,g_i}(\tau)]$. Importantly, note that $\mathbb{E}[Y_i]\neq Y$. By using the definition of variance and Cauchy–Schwarz inequality, we have
\begin{align*}
    p_i\leq\mathbb{E}\left[\left\lVert X_i Y_i - XY \right\rVert_2^2\right] \leq   \underbrace{\mathbb{E}\left[\left\lVert X_iY_i - XY_i \right\rVert_2^2\right]}_{\text{term (a)}} +\underbrace{ \mathbb{E}\left[\left\lVert XY_i - XY \right\rVert_2^2\right]}_{\text{term (b)}} + 2\sqrt{\mathbb{E}\left[\left\lVert X_iY_i - XY_i \right\rVert_2^2\right]}\sqrt{\mathbb{E}\left[\left\lVert XY_i - XY \right\rVert_2^2\right]}.
\end{align*}
Consider term (a),
\begin{align*}
     \mathbb{E}\left[\left\lVert X_iY_i - XY_i \right\rVert_2^2\right]\leq \mathbb{E}\left[\left\lVert X_i-X \right\rVert_2^2\left\lVert Y_i \right\rVert_2^2 \right] &\leq \left(RHG_1\frac{1-\gamma^H}{1-\gamma}\right)^2 \mathbb{E}\left[\left\lVert X_i-X \right\rVert_2^2\right] \\ &\leq  \left(RHG_1\frac{1-\gamma^H}{1-\gamma}\right)^2 \frac{2\eta^2(\sigma_{21}^2+\sigma_{22}^2)}{N}.
\end{align*}

Now we consider term (b)
\begin{align*}
    \mathbb{E}\left[\left\lVert XY_i - XY_i \right\rVert_2^2\right] \leq \mathbb{E}\left[\left\lVert X \right\rVert_2^2\left\lVert Y_i-Y \right\rVert_2^2 \right] \leq  \left(1+\eta L_1\right)^2 \mathbb{E}\left[\left\lVert Y_N-Y \right\rVert_2^2 \right] \leq \left(1+\eta L_1\right)^2 \frac{L_1^2\sigma_1^2}{N},
\end{align*}
where we have applied a bound on the norm of the PG $\leq RHG_1\frac{1-\gamma^H}{1-\gamma}$ and on the Hessian operator norm $\leq L_1$ implied by the assumptions. We thus conclude the following bound on the second variance term,
\begin{align*}
   \mathbb{E}\left[ \mathbb{V}\left[\mathbb{E}\left[Y|Z_1,Z_2\right]|Z_1\right]\right] \leq \mathbb{E}\left[\frac{1}{B}\sum_{i=1}^B p_i\right] &\leq \left(RHG_1\frac{1-\gamma^H}{1-\gamma}\right)^2 \frac{2\eta^2(\sigma_{21}^2+\sigma_{22}^2)}{N} + \left(1+\eta L_1\right)^2 \frac{L_1^2\sigma_1^2}{N} + \\
    &\ \ 2\left(RHG_1\frac{1-\gamma^H}{1-\gamma}\right)(1+\eta L_1)L_1\eta\sigma_1\sqrt{2(\sigma_{21}^2+\sigma_{22}^2)}\frac{1}{N}.
\end{align*}

\paragraph{Third part.} 
Let $X_i=I+\eta \sum_{j=1}^N R(\tau_{ij},g_i)\nabla_\theta^2 \log p_{\theta,g_i}(\tau_{ij}) + \eta\frac{1}{N}\sum_{j=1}^N R(\tau_{ij},g_i) u_{ij}u_{ij}^T$ and $Y_i=\nabla V_{g_i}(\theta_{i,N}')$. Also let $Y=Y_i=\nabla V_{g_i}(\theta_i')$. We can write
\begin{align*}
    \mathbb{V}\left[\mathbb{E}[Y|Z_1]\right]  = \frac{1}{B} \mathbb{V}\left[\mathbb{E}[X_iY_i]\right] 
\end{align*}
where we have used the independence across different $i$. For clarity, the expectation is w.r.t. all randomness in $\tau_{ij}$ and $\tau_{ik}'$, whereas the variance is w.r.t. the randomness in $i$. Now, for any $i$, consider the following
\begin{align*}
    \left\lVert\mathbb{E}\left[X_iY_i\right]\right\rVert_2^2 &\leq \mathbb{E}\left[\left\lVert X_iY_i\right\rVert_2^2\right] \\
    &\leq \mathbb{E}\left[\left\lVert X_i\right\rVert_2^2 G_1^2 R^2 H^2 \left(\frac{1-\gamma^H}{1-\gamma}\right)^2\right]\\
    &= G_1^2 R^2 H^2 \left(\frac{1-\gamma^H}{1-\gamma}\right)^2\cdot \left(\mathbb{V}\left[X_i\right]+\left\lVert \mathbb{E}\left[X_i\right]\right\rVert_2^2\right)\\
    &\leq G_1^2 R^2 H^2 \left(\frac{1-\gamma^H}{1-\gamma}\right)^2\cdot \left(2\eta^2\frac{\sigma_{21}^2+\sigma_{22}^2}{N}+\left\lVert \mathbb{E}\left[X_i\right]\right\rVert_2^2\right).
\end{align*}
We need to upper bound the 2-norm of $\mathbb{E}[X_i]=I+\eta \nabla^2 V_g(\theta)$. Note that we have $\left\lVert I+\eta \nabla^2 V_g(\theta)\right\rVert_2 \leq \left\lVert I\right\rVert_2 + \left\lVert \eta \nabla^2 V_g(\theta)\right\rVert_2=\sqrt{D}+\eta L_1$. We hence have 
\begin{align*}
    \mathbb{V}\left\lVert \mathbb{E}[Y|Z_1] \right\rVert_2\leq \frac{1}{B} \mathbb{E}\left[\left\lVert \mathbb{E}[X_iY_i] \right\rVert_2^2\right] \leq \frac{1}{B} G_1^2 R^2 H^2 \left(\frac{1-\gamma^H}{1-\gamma}\right)^2\cdot \left(2\eta^2\frac{\sigma_{21}^2+\sigma_{22}^2}{N}+\left(\sqrt{D}+\eta L_1\right)^2\right).
\end{align*}

\paragraph{Combining all parts.} Combining all the three parts above, we have
\begin{align*}
    \mathbb{V}\left[\hat{J}_{N,\text{LSF}}(\theta)\right] \leq \underbrace{\mathcal{O}(1/M)}_{\text{first part}}+\underbrace{\mathcal{O}(1/N)}_{\text{second part}}+\underbrace{\mathcal{O}(1/B)}_{\text{third part}}.
\end{align*}

\end{proof}

\begin{restatable}{lemma}{lemmabiasedsgd}\label{lemma:biasedsgd} (Adapted from \citep{ajalloeian2020analysis}) Let $F:\mathbb{R}^H\mapsto\mathbb{R}$ be a $L$-Lipschitz function. Let $\hat{g}(x)$ be an estimate to $\nabla F(x)$. Its bias and variance properties are the following,
\begin{align*}
    \left\lVert \mathbb{E}\left[\hat{g}(x)\right] - \nabla F(x) \right\rVert_2^2  \leq \psi^2, \mathbb{V}\left[\hat{g}(x)\right]\leq \sigma^2,
\end{align*}
for all $x\in\mathbb{R}^H$. Now consider the recursion: $x_{t+1}=x_t +\alpha \hat{g}(x_t)$. For any $\epsilon>0$, if we choose the learning rate $\alpha=\min\{\frac{1}{L},\frac{\epsilon+\psi^2}{2L\sigma^2}\}$, then for $T=\max\{\frac{1}{\epsilon+\psi^2},\frac{\sigma^2}{\epsilon^2 + \psi^4}\}$ iterations, we have
\begin{align*}
    \min_{1\leq t\leq T} \mathbb{E}\left[\left\lVert \nabla F(x_t)\right\rVert_2^2 \right] \leq \epsilon+\psi^2.
\end{align*}
\end{restatable}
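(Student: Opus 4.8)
The plan is to specialize the standard biased stochastic-gradient analysis (this is the ``adapted'' statement, so the argument follows \citep{ajalloeian2020analysis} with the signs flipped for ascent, since the recursion $x_{t+1}=x_t+\alpha\hat g(x_t)$ maximizes $F$). I read ``$L$-Lipschitz'' as $L$-smoothness, i.e.\ $\nabla F$ is $L$-Lipschitz, which is what the recursion needs and is consistent with Proposition~\ref{prop:lipschitz}. Decompose each estimate as $\hat g(x)=\nabla F(x)+b(x)+\xi(x)$, where $b(x)\coloneqq\mathbb{E}[\hat g(x)]-\nabla F(x)$ is the bias with $\|b(x)\|_2^2\le\psi^2$ and $\xi(x)\coloneqq\hat g(x)-\mathbb{E}[\hat g(x)]$ is the zero-mean noise with $\mathbb{E}[\|\xi(x)\|_2^2]=\mathbb{V}[\hat g(x)]\le\sigma^2$. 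The first step is the ascent form of the descent lemma,
\[
F(x_{t+1})\ge F(x_t)+\alpha\,\langle\nabla F(x_t),\hat g(x_t)\rangle-\tfrac{L\alpha^2}{2}\|\hat g(x_t)\|_2^2 ,
\]
followed by taking the conditional expectation given $x_t$ over the randomness of $\hat g(x_t)$, which replaces $\hat g(x_t)$ by $\mathbb{E}[\hat g(x_t)]=\nabla F(x_t)+b(x_t)$ in the cross term and turns the quadratic term into $\|\nabla F(x_t)+b(x_t)\|_2^2+\mathbb{V}[\hat g(x_t)]$.

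The second step is to handle the cross term with the polarization identity rather than a crude Young split, so as to keep the bias cost minimal: $\langle\nabla F,\mathbb{E}[\hat g]\rangle=\tfrac12\|\nabla F\|_2^2+\tfrac12\|\mathbb{E}[\hat g]\|_2^2-\tfrac12\|b\|_2^2$. Substituting, the $\|\mathbb{E}[\hat g]\|_2^2$ contributions combine into a coefficient $\tfrac{\alpha}{2}(1-L\alpha)\|\mathbb{E}[\hat g]\|_2^2\ge 0$ whenever $\alpha\le 1/L$ (the first branch of the step-size rule), so this favorable term may be dropped from the lower bound. What remains is the one-step progress bound
\[
\mathbb{E}[F(x_{t+1})\mid x_t]\ge F(x_t)+\tfrac{\alpha}{2}\|\nabla F(x_t)\|_2^2-\tfrac{\alpha}{2}\psi^2-\tfrac{L\alpha^2}{2}\sigma^2 ,
\]
in which the bias is charged exactly $\tfrac{\alpha}{2}\|b\|_2^2\le\tfrac{\alpha}{2}\psi^2$ per iteration.

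The third step is to rearrange, take total expectation, sum over $t=1,\dots,T$ and telescope. Writing $\Delta_0\coloneqq\sup_x F(x)-F(x_1)$, which is finite because the recursion is applied to a value-function objective that is bounded above, the telescoped increments satisfy $\sum_t\big(\mathbb{E}[F(x_{t+1})]-\mathbb{E}[F(x_t)]\big)\le\Delta_0$, and dividing by $\tfrac{\alpha T}{2}$ yields
\[
\min_{1\le t\le T}\mathbb{E}[\|\nabla F(x_t)\|_2^2]\le\frac1T\sum_{t=1}^T\mathbb{E}[\|\nabla F(x_t)\|_2^2]\le\frac{2\Delta_0}{\alpha T}+\psi^2+L\alpha\sigma^2 .
\]
It then remains to substitute the two-regime choices. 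The cap $\alpha\le\tfrac{\epsilon+\psi^2}{2L\sigma^2}$ forces $L\alpha\sigma^2\le\tfrac12(\epsilon+\psi^2)$, and the budget $T\ge\max\{\tfrac{1}{\epsilon+\psi^2},\tfrac{\sigma^2}{\epsilon^2+\psi^4}\}$ forces $\tfrac{2\Delta_0}{\alpha T}\lesssim\epsilon+\psi^2$; the two cases of this $\max$ mirror exactly the two cases of the $\min$ defining $\alpha$ (the $1/L$ branch is active when $\sigma^2$ is small, giving $\tfrac{1}{\alpha T}\propto\tfrac1T$ and the $\tfrac{1}{\epsilon+\psi^2}$ requirement, while the variance branch gives $\tfrac{1}{\alpha T}\propto\tfrac{\sigma^2}{(\epsilon+\psi^2)T}$ and, via $\epsilon^2+\psi^4\ge\tfrac12(\epsilon+\psi^2)^2$, the $\tfrac{\sigma^2}{\epsilon^2+\psi^4}$ requirement). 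Collecting terms leaves the irreducible $\psi^2$ plus a reducible part driven below the target, giving the stated $\epsilon+\psi^2$.

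The main obstacle is purely the constant bookkeeping. Using the polarization identity (rather than a Young split, which would inflate the per-step bias charge above $\tfrac12\|b\|_2^2$ and spoil the clean $\psi^2$) is what makes the irreducible term come out at the right order; one then has to partition the target accuracy between this $\psi^2$ term and the ``$\epsilon$''-budget that $\alpha$ and $T$ absorb, while folding the smoothness constant $L$ and the gap $\Delta_0$ into the leading $\mathcal{O}(1)$ factors so that $\alpha$ and $T$ take precisely the displayed two-regime form. A secondary item, needed because the statement is phrased for a generic $F$, is to invoke boundedness of $F$ from above for the telescoping; this is the only non-syntactic hypothesis beyond smoothness, and it is supplied by the bounded value function in the meta-RL application.
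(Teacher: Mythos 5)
Your proposal is correct, and it is essentially a reconstruction of the argument the paper itself never spells out: the paper's ``proof'' of this lemma consists solely of deferring to \citep{ajalloeian2020analysis}, so your writeup is the ascent-form version of exactly the biased-SGD analysis that citation contains (smoothness/descent lemma, bias--variance split of $\hat g$, telescoping, two-regime step size). Your individual steps check out: the polarization identity $\langle \nabla F, \mathbb{E}[\hat g]\rangle = \tfrac12\lVert \nabla F\rVert_2^2 + \tfrac12\lVert \mathbb{E}[\hat g]\rVert_2^2 - \tfrac12\lVert b\rVert_2^2$ gives the clean per-step bias charge $\tfrac{\alpha}{2}\psi^2$, the coefficient $\tfrac{\alpha}{2}(1-L\alpha)\lVert\mathbb{E}[\hat g]\rVert_2^2$ is indeed nonnegative under $\alpha\le 1/L$, and the two branches of the step-size $\min$ pair with the two branches of the iteration-count $\max$ via $\tfrac12(\epsilon+\psi^2)^2 \le \epsilon^2+\psi^4 \le (\epsilon+\psi^2)^2$. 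Two caveats you already flag correctly, and which are defects of the lemma statement rather than of your proof: the final bound comes out as $\mathcal{O}(1)\cdot(\epsilon+\psi^2)$ with $L$ and $\Delta_0$ absorbed into the constant (the paper's statement, which it concedes is a ``simplified version'' of the cited result, silently drops these constants too), and the telescoping needs $F$ bounded above, a hypothesis absent from the statement but supplied in the meta-RL application; likewise your reading of ``$L$-Lipschitz'' as gradient-Lipschitz smoothness is the only interpretation under which the descent lemma applies, and it matches how the paper invokes the lemma in the proof of its main theorem.
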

\begin{proof}
Please refer to \citep{ajalloeian2020analysis} for detailed proof and the complete result.

\end{proof}

\propmain*

\begin{proof}
We directly draw results from \citep{ajalloeian2020analysis}
where they establish convergence to stationary point using biased stochastic gradient estimates. See Lemma~\ref{lemma:biasedsgd} for a simplified version of their result which will be useful for our analysis below.

Instead of directly characterizing the convergence to a stationary point of $J_N$, we consider how fast the algorithm converges to a stationary point of $J_\infty$. Proposition~\ref{prop:lipschitz} shows that $J_\infty$ is Lipschitz-smooth with a Lipschitz constant $L$ independent of $N$, hence we can write $L=\mathcal{O}(1)$. Proposition~\ref{prop:bias2} shows that the estimate bias $\psi$ is of order $\mathcal{O}(1/\sqrt{N})$ such that
\begin{align*}
    \left\lVert J_\infty(\theta) - \mathbb{E}[\hat{J}_{N\text{LSF}}(\theta)] \right\rVert_2^2 \leq \psi^2.
\end{align*}
Proposition~\ref{prop:variance} shows that the variance $\mathbb{V}\left[\hat{J}_{N\text{LSF}}(\theta)\right]\leq \sigma^2= \mathcal{O}(1) + \mathcal{O}(1/N)$ (note that we treat $B,M$ as $\mathcal{O}(1)$ here).  Directly using results from Proposition~\ref{lemma:biasedsgd}, we obtain the following: after $T_{\text{LSF}}=\max\{\frac{1}{\epsilon^2+\mathcal{O}(1/N)},\frac{\mathcal{O}(1)+\mathcal{O}(1/N)}{\epsilon^4 + \mathcal{O}(1/N^2)}\}$ iterations,
\begin{align*}
    \min_{1\leq t\leq T_\text{LSF}} \mathbb{E}[\left\lVert \nabla_\theta J_\infty(\theta_t) \right\rVert_2^2]\leq  \epsilon^2 + \mathcal{O}(1/N).
\end{align*}
Finally, recall that Proposition~\ref{prop:bias1} upper bounds the bias between $\nabla_\theta J_\infty(\theta)$ and $J_N(\theta)$ by $\mathcal{O}(1/N)$, we obtain via a CS inequality $\left\lVert a+b \right\rVert_2^2 \leq 2\left\lVert a \right\rVert_2^2 + 2\left\lVert b \right\rVert_2^2$,
\begin{align*}
    \min_{1\leq t\leq T_\text{LSF}} \mathbb{E}[\left\lVert J_N(\theta_t) \right\rVert_2^2] \leq 
    \min_{1\leq t\leq T_\text{LSF}} 2\mathbb{E}[\left\lVert \nabla_\theta J_\infty(\theta_t) \right\rVert_2^2] + 2\mathbb{E}[\left\lVert J_N(\theta_t) - J_\infty(\theta_t) \right\rVert_2^2] 
    \leq 
    2\epsilon^2 + \mathcal{O}(1/N).
\end{align*}
By properly scaling $\epsilon$, the above is equivalent to:  after $T_{\text{LSF}}=2\cdot \max\{\frac{1}{\epsilon^2+\mathcal{O}(1/N)},\frac{\mathcal{O}(1)+\mathcal{O}(1/N)}{\epsilon^4 + \mathcal{O}(1/N^2)}\}$ iterations,
\begin{align*}
    \min_{1\leq t\leq T_\text{LSF}} \mathbb{E}[\left\lVert J_N(\theta_t) \right\rVert_2^2] \leq
    \epsilon^2 + \mathcal{O}(1/N).
\end{align*}
\end{proof} 

\section{Additional experiments}
\label{appendix:exp}

\subsection{Toy 1-D Optimization Problem} The variable $X$ comes from the distribution $p_\theta=\mathcal{N}(\theta,1^2)$. This implies a natural reparameterization, $X=\theta+\zeta,\zeta\sim\mathcal{N}(0,1)$, which we use for the PW gradient estimate. 

\paragraph{Bias-variance trade-off.} To generate Fig~\ref{fig:exp}(a), we compute the MSE of different gradient estimates against the true gradient, evaluated at the initial parameter of the algorithm. In particular, given the ground truth gradient $g$, we generate $M=100$ gradient estimates $\hat{g}_i$ for each type, and compute
\begin{align*}
   \frac{1}{M}\sum_{i=1}^M \left\lVert \hat{g}_i - g\right\rVert_2^2,
\end{align*}
as an estimate to the MSE. For LSF, the bias is computed as  
\begin{align*}
   \frac{1}{M}\sum_{i=1}^M \left\lVert \hat{g}_i - \bar{g}\right\rVert_2^2,
\end{align*}
where $\bar{g} = \frac{1}{M}\sum_{i=1}^M \hat{g}_i$ is an estimate of the expected gradient estimate. SF and PW gradient estimates have zero variance, so that all their MSE consists of variance. Finally, we use the average over $1000$ PW gradient estimates as $g$ (because PW has low variance and zero bias, we expect the approximation to be reasonably accurate).

\paragraph{Optimization.} To generate Fig~\ref{fig:exp}(b), for each type of gradient estimate, at each iteration $1\leq t\leq T$, we construct an average gradient estimate $\bar{g}=\frac{1}{B}\sum_{i=1}^B \hat{g}_i$ where $\hat{g}_i$ is an one-sample gradient estimate of the $N$-sample MC objective. The parameter is updated with $\hat{g}$ at each iteration, with Adam optimizer \citep{kingma2014adam} and learning rate $0.1$.

At $T=100$, we record the objective $L(\theta_T)$ for each type of gradient estimate. We repeat the same experiment $100$ times, and compare $\text{mean}\pm\text{std}$ averaged over such repeated trials.

\subsection{High-dimensional Meta-RL Problems} 

The following specifies details of generating Fig~\ref{fig:exp}(c)-(d). 

\paragraph{Environments.} The environments of the meta-RL experiments are based on MuJoCo \citep{todorov2012}, and imported directly from the open source projects of \citep{rothfuss2018promp}. These are robotics control tasks where the states $s_t$ are sensory inputs and actions $a_t$ are controls applied to the robots. Across all three tasks we considered, the task $g$ corresponds to different directions in which the robot should aim to run. See \citep{rothfuss2018promp} for further details.

\paragraph{Trust region outer loop optimization.} After obtaining the gradient estimate $\hat{J}$, Algorithm 1 suggests that we update $\theta_{t+1}=\theta_t+\alpha \hat{J}$. In practice, we adopt trust region policy optimization \citep{schulman2015}, which enforces a trust region constraint between $\theta_t$ and $\theta_{t+1}$ when updating the parameter. See the open sourced code base for hyper-parameter settings of the TRPO optimizer.

\paragraph{Hyper-parameters of algorithms.} We use a batch of $B=20$ tasks per iteration, $N=M=20$ trajectories per task for both inner loop adaptation and outer loop rollouts for PG estimates. Each trajectory is truncated at $H=100$ steps. We adapt only one step throughout the experiments. Please refer to the open sourced code base for other default hyper-parameters whose details we omit here.

\paragraph{Important implementation details.}
Though all algorithms are based on the open source project of \citep{rothfuss2018promp}, it is worth noting a number of important modifications that we make to ensure that the implementation adheres to our theoretical setups as much as possible.

The unbiased generalized SF gradient estimate is very closely related to the gradient estimate used in E-MAML algorithm. In fact, when implemented exactly, the E-MAML algorithm utilizes the SF gradient estimate  $\hat{J}_{N,\text{SF}}^{(i)}(\theta,g)+\hat{J}_{N,\text{SF}}^{(ii)}(\theta,g)$ defined in Eqn~\ref{eq:unbiased-emaml}. However, the code base in \citep{rothfuss2018promp} effectively uses the following gradient estimate,
\begin{align*}
    \frac{1}{N}\hat{J}_{N,\text{SF}}^{(i)}(\theta,g)+\hat{J}_{N,\text{SF}}^{(ii)}(\theta,g).
\end{align*}
The factor $1/N$ biases the overall estimate, yet reduces the variance introduced by $\hat{J}_{N,\text{SF}}^{(i)}(\theta,g)$. With this, we see that the practical implementation of E-MAML already introduces bias for variance reduction, albeit in a more opaque way. To obtain results of the SF gradient estimate in Fig~\ref{fig:exp}, we remove the $1/N$ factor and use the unbiased SF gradient estimate.

\subsection{Practical (prior) implementations of generalized LSF gradient estimate}

To implement the generalized LSF estimate, we need to construct unbiased estimate to value function Hessian $\nabla^2 V_g(\theta)$ evaluated at the initial policy parameter $\theta$. On a high level, one first constructs an estimate $\hat{H}_N\approx\nabla^2 V_g(\theta) $ and then computes the meta-RL gradient estimate as
\begin{align*}
    \hat{H}_N \nabla V_g(\theta_N'),
\end{align*}
where $\theta_N'$ is the (random) updated parameter. A number of prior work discusses on how to construct unbiased estimates \citep{foerster2018dice,mao2019baseline,farquhar2019loaded} or biased estimates \citep{rothfuss2018promp,tang2020taylor} to the value function Hessian. A major desiderata is that such estimates should lead to variance reduction compared to the naive "trajectory-based" estimate. Concretely, the "trajectory-based" Hessian estimate is (derived from Eqn~\ref{eq:biased-emaml})
\begin{align}
    \frac{1}{N}\sum_{i=1}^N R(\tau_i,g) \nabla_\theta^2 \log p_{\theta,g}(\tau_i) +  \frac{1}{N}\sum_{i=1}^N R(\tau_i,g) \left(\nabla_\theta \log p_{\theta,g}(\tau_i)\right)\left(\nabla_\theta \log p_{\theta,g}(\tau_i)\right)^T.\label{eq:naive-hessian}
\end{align}
Arguably, the variance of the above estimate could be further improved by exploiting the Markov property of trajectories $\tau_i$. Taking computing PG estimate as an analogy, when computing "trajectory-based" PG estimate $R(\tau_i)\nabla_\theta \log p_{\theta,g}(\tau_i)$, we can instead use its "stepwise-based" variant $\sum_{t=0}^{H-1} \gamma^t \hat{Q}_t \nabla_\theta \log \pi_\theta(a_t|s_t,g)$, where $\hat{Q}_t$s are unbiased estimates to Q-functions. The "stepwise-based" estimate usually has much lower variance than the "trajectory-based" estimate, because it is constructed based on the Markov structure of the trajectory. Constructing variance-reduced estimates for the Hessian is more complicated, but is better understood through the lens of off-policy evaluation. We refer readers to \citep{tang2021unifying} for further details.

In our experiments, we always use such "stepwise-based" PG and Hessian estimates when computing the meta-RL gradient estimates. Specifically, we use DiCE \citep{foerster2018dice} to compute the LSF gradient estimate, which can be interpreted as building an unbiased variant of Eqn~\ref{eq:naive-hessian} with variance reduction via the Markov structure of the trajectory. Please refer to the code base of \citep{rothfuss2018promp} for further implementation details.

\paragraph{Full results.} See Fig~\ref{fig:exp-appendix} for full results on the high-dimensional meta-RL problems. Overall, the LSF estimate achieves significant performance gains over the SF estimate.

\begin{figure}[t]
    \centering
    \subfigure[Meta-RL HalfCheetah ]{\includegraphics[keepaspectratio,width=.3\textwidth]{plots/halfcheetahranddirec_nsamples_big.png}}
    \subfigure[Meta-RL Ant ]{\includegraphics[keepaspectratio,width=.3\textwidth]{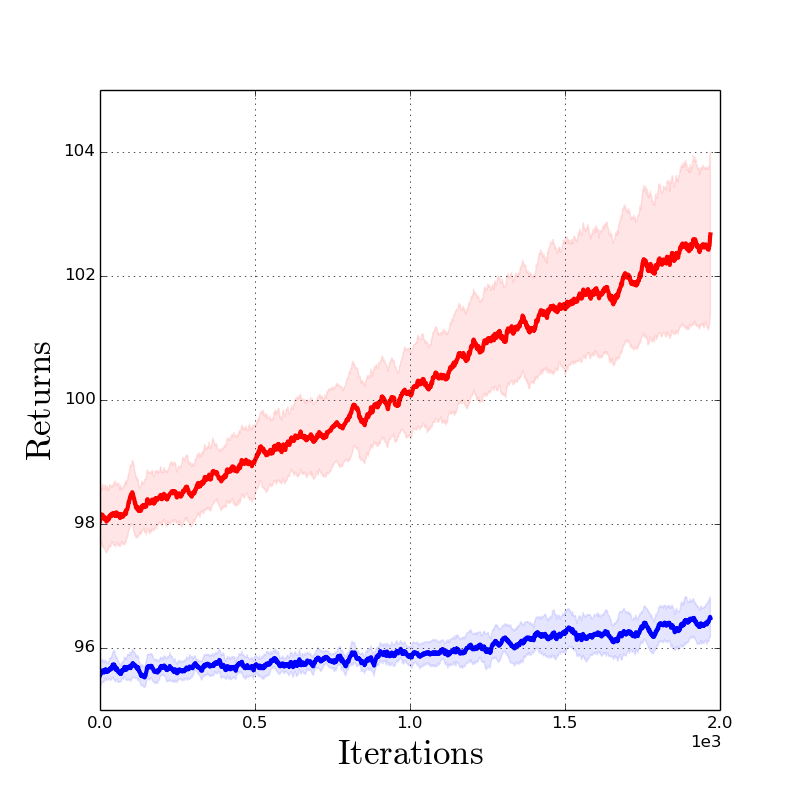}}
    \subfigure[Meta-RL Walker2D ]{\includegraphics[keepaspectratio,width=.3\textwidth]{plots/walker2dranddirec_nsamples_big.png}}
    \caption{ Full results of high-dimensional meta-RL problems: LSF outperforms SF consistently across all tasks. Each curve averages over $5$ runs.}
    \label{fig:exp-appendix}
\end{figure}

\end{document}